%% file: main.tex
\documentclass{article}

\usepackage[preprint, nonatbib]{neurips_2026}

\usepackage[utf8]{inputenc} 
\usepackage[T1]{fontenc}    
\usepackage[textsize=tiny]{todonotes}
\usepackage[dvipsnames]{xcolor}
\usepackage{hyperref}       
\usepackage{url}            
\usepackage{booktabs}       
\usepackage{amsfonts}       
\usepackage{nicefrac}       
\usepackage{microtype}      
\usepackage{amsmath}
\usepackage{amssymb}
\usepackage{mathtools}
\usepackage{amsthm}
\usepackage{xspace}
\usepackage{graphicx}
\usepackage{hyperref}
\usepackage{pifont}
\usepackage{bbm}
\usepackage{multirow}
\usepackage{makecell}
\usepackage{enumitem}
\usepackage{wrapfig}
\usepackage{subcaption}
\usepackage{placeins}
\usepackage{algorithm}
\usepackage{colortbl}
\usepackage[square,numbers]{natbib}

\usepackage[most]{tcolorbox}

\definecolor{darkpurple}{HTML}{660066}
\definecolor{figred}{HTML}{B85450}
\definecolor{figgreen}{HTML}{82B366}
\definecolor{figgray}{HTML}{808080}

\newcommand{\cmark}{\textcolor{green!60!black}{\ding{51}}}
\newcommand{\xmark}{\textcolor{red!70!black}{\ding{55}}}

\newcommand*\circledred[1]{%
\tikz[baseline=(char.base)]{
  \node[shape=circle, draw=BrickRed!60, fill=BrickRed!10, thick, inner sep=1pt] (char) {\scriptsize\textsf{#1}};
}}
\newcommand*\circledblue[1]{%
\tikz[baseline=(char.base)]{
  \node[shape=circle, draw=NavyBlue!60, fill=NavyBlue!10, thick, inner sep=1pt] (char) {\scriptsize\textsf{#1}};
}}
\usepackage{algpseudocode}
\algrenewcommand\algorithmicforall{\textbf{for all}}
\makeatletter
\renewcommand{\ALG@step}{\addtocounter{ALG@line}{1}\arabic{ALG@line}}
\makeatother

\newcommand{\tmax}{{t_{\mathrm{max}}}}
\algnewcommand{\Input}{\item[\textbf{Input:}]}

\theoremstyle{plain}
\newtheorem{theorem}{Theorem}[section]
\newtheorem{proposition}[theorem]{Proposition}
\newtheorem{lemma}[theorem]{Lemma}
\newtheorem{corollary}[theorem]{Corollary}
\theoremstyle{definition}
\newtheorem{definition}[theorem]{Definition}
\newtheorem{assumption}[theorem]{Assumption}
\theoremstyle{remark}
\newtheorem{remark}[theorem]{Remark}

\definecolor{darkpurple}{HTML}{660066}
\definecolor{figred}{HTML}{B85450}
\definecolor{figgreen}{HTML}{82B366}
\definecolor{figgray}{HTML}{808080}
\definecolor{darkgreen}{rgb}{0.0, 0.5, 0.0}
\definecolor{lightyellow}{HTML}{FFE699}
\definecolor{red_revision}{HTML}{FF0000}
\definecolor{darkblue}{HTML}{2E6EB3}
\definecolor{derkgreen}{HTML}{3E7D00}
\definecolor{darkyellow}{HTML}{D99542}
\definecolor{darkpurple}{HTML}{660066}

\input{math_commands.tex}

\title{Adaptive Experimentation for Censored Survival Outcomes}

\author{%
  Yuxin Wang\thanks{Correspondence to: \texttt{Yuxin.Wang1@lmu.de}},\, Dennis Frauen, Jonas Schweisthal, Maresa Schröder, Emil Javurek,\\
  \textbf{Stefan Feuerriegel}\\
  LMU Munich \\
  Munich Center of Machine Learning (MCML)
}

\begin{document}

\maketitle

\begin{abstract}
Adaptive experimentation enables efficient estimation of causal effects, but existing methods are not designed for survival data with censoring, where event times are only partially observed (e.g., overall survival in cancer trials but with dropout). In this paper, we develop a novel framework for adaptive experimentation to estimate causal effects under right censoring. For this, we derive the semiparametric efficiency bound for the average survival effect curve as a function of the treatment allocation policy and thereby obtain a closed-form efficiency-optimal allocation policy. The policy generalizes classical Neyman allocation to survival settings by prioritizing patient strata where both event and censoring dynamics induce high uncertainty. Building on this, we propose the \textit{Adaptive Survival Estimator (ASE)}, an adaptive framework that learns the allocation policy and estimates the average survival effect curve sequentially. Our framework has three main benefits: (i)~it accommodates arbitrary machine learning models for nuisance estimation; (ii)~it is guided by a closed-form efficiency-optimal allocation policy; and (iii)~it admits strong theoretical guarantees, including asymptotic normality via a martingale central limit theorem. We demonstrate our framework across various numerical experiments to show consistent efficiency gains over uniform randomization and censoring-agnostic baselines. 
\end{abstract}

\section{Introduction}

Randomized controlled trials (RCTs) are the gold standard for causal inference~\citep{imbens.2015, wager.2024}, but RCTs have important limitations. By design, RCTs rely on a static rule to allocate patients to treatment arms that is fixed before data collection, and, as a result, RCTs ignore information that becomes available during the trial and may allocate patients inefficiently. This is particularly problematic in survival trials, where patient dropout and administrative censoring are common; this causes event times to be only partially observed, which further complicates estimation.

An alternative to RCTs is \emph{\textbf{adaptive experimentation}}, which updates how patients are assigned to treatment arms as data are collected~\citep{robertson.2023, vanderlaan.2008, zhang.2025}. Here, outcomes from previous patients are used to inform how future patients are assigned to treatments. The idea of adaptive randomization is particularly relevant in settings where population sizes are limited by costly recruitment, ethical concerns, or rare diseases~\citep{berry.2011, guidance.2010, vanamsterdam.2026}. 

Prominent examples are response-adaptive randomization (RAR) and the covariate-adjusted variant (CRAR), which adjust the treatment assignment probabilities based on observed outcomes and patient characteristics~\citep[e.g.,][]{chow.2008,hu.2006,rosenberger.2001, thompson.1933, zhang.2007}. These methods are primarily designed to improve patient welfare by allocating more patients to better-performing treatments during the trial. A separate line of work studies adaptive designs that aim to minimize asymptotic variance of the ATE estimator, thus leading to an allocation policy based on outcome variability, such as Neyman allocation~\citep[e.g.,][]{hahn.2011, kato.2024, cook.2024, dai.2023}. However, both paradigms are developed for \textit{fully observed outcomes}, and are \textbf{\underline{not}}  directly applicable to survival data with censoring, where event times are only \textit{partially observed} (e.g., overall survival time in cancer trials \citep{klein.2003a,wiegrebe.2024a}).

\begin{tcolorbox}[
    colback=NavyBlue!5!white,
    colframe=NavyBlue!75!black,
    arc=1mm,
    left=4.5pt, right=4.5pt, top=3pt, bottom=3pt
]
\textbf{Motivating example.} GBM AGILE~\citep{cloughesy.2022} is an adaptive clinical trial for glioblastoma (an aggressive form of brain cancer) that compares different treatments based on overall survival time. In cancer care, outcomes are time-to-event and often censored, as patients may not experience the event during the study or may drop out (e.g., due to changes in therapies or loss to follow-up)~\citep{klein.2003a}.

\medskip 
However, in such settings, adaptive trials may update treatment allocation based on observed survival outcomes, while not explicitly accounting for how censoring affects information accumulation. As a result, treatment assignment may overlook censoring-induced uncertainty, leading to inefficient allocation. For example, if patients in one treatment group drop out more frequently, more observations are censored in that group, making it harder to precisely estimate the overall treatment effect.
\end{tcolorbox}

\begin{figure}
    \centering
    \includegraphics[width=0.85\linewidth]{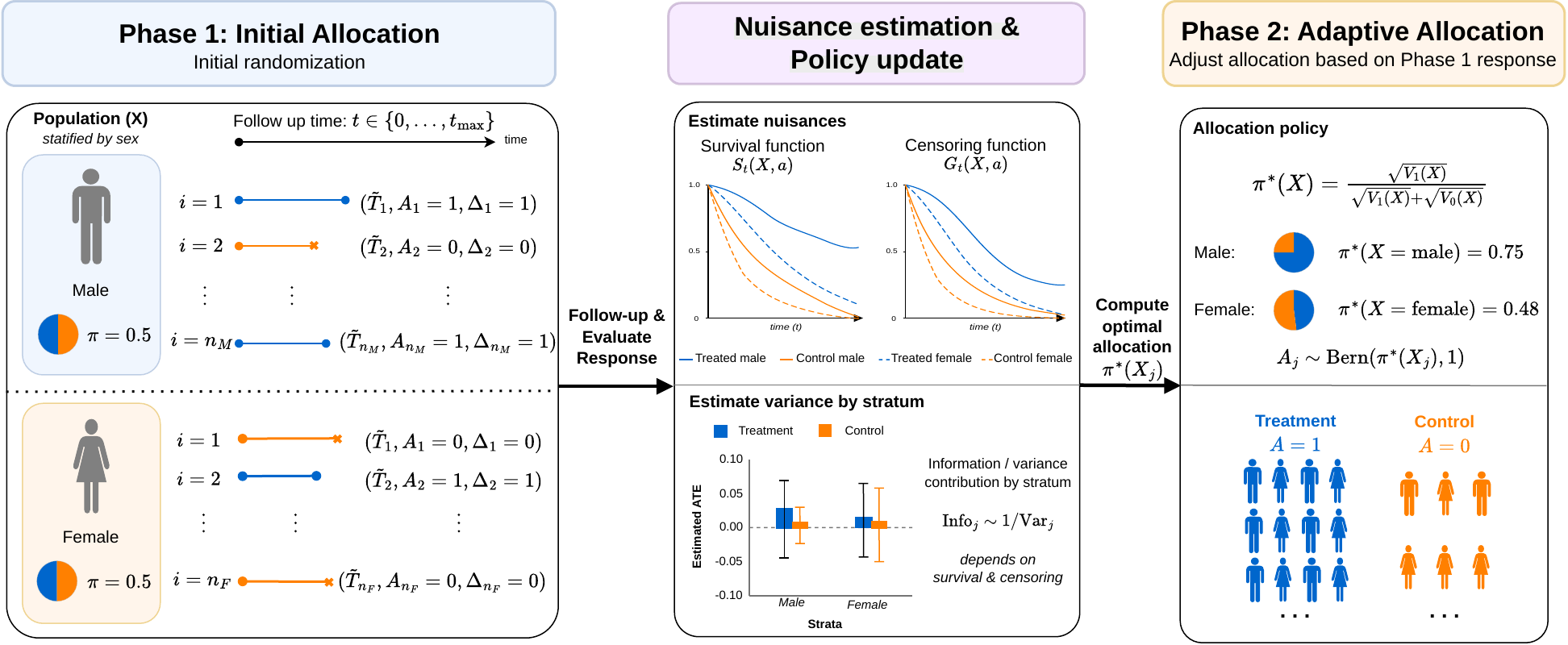}
    \caption{\textbf{Our semi-parametric framework for adaptive experimentation with censored survival outcomes.}}
    \label{fig:1}
    \vspace{-0.6cm}
\end{figure}

Existing methods for adaptive experimentation~\citep[e.g.,][]{hahn.2011, kato.2024, thompson.1933, vanderlaan.2008} have two main \textbf{shortcomings} in survival settings: \circledred{1} \emph{Optimal allocation policy}: Standard methods for adaptive experimentation account for uncertainty in observed outcomes, but survival data introduce \textit{additional} uncertainty due to \textit{censoring}. $\Rightarrow$ As a result, an efficient allocation policy should prioritize an arm not only based on variability in outcomes, but also based on the level of censoring, which may vary across treatments and patient subgroups.
\circledred{2} \emph{Estimation}: Existing methods are designed for settings with fully observed outcomes. In survival settings, outcomes are only partially observed, which makes inference more challenging. $\Rightarrow$ Therefore, it requires an estimator for the average survival effect curve that remains valid as data accumulate across rounds under adaptive allocation.

\textbf{In this paper,} we address the above challenges and propose \textit{a novel adaptive experimentation framework for censored survival outcomes} (see Figure~\ref{fig:1}):
\vspace{-0.2cm}
\begin{enumerate}[leftmargin=0.5cm, itemsep=0pt]
\item[\circledred{1}] \textbf{Optimal allocation under censoring.} We derive the semiparametric efficiency bound for the average survival effect curve as a function of the treatment allocation policy. This allows us to characterize the variance structure induced by both the survival process and censoring. We thereby obtain a \emph{closed-form}, efficiency-optimal allocation policy that is tailored to survival settings.  
\item[\circledred{2}] \textbf{Adaptive Survival Estimator (ASE).}  We propose the \emph{\textbf{A}daptive \textbf{S}urvival \textbf{E}stimator (ASE)}, a sequential procedure that learns the optimal allocation policy and estimates the average survival effect curve. Specifically, ASE uses flexible machine learning models to estimate nuisance components, then updates the probability of assigning newly enrolled patients to different treatment arms, and offers valid inference for the average survival effect curve. \vspace{-0.2cm}
\end{enumerate}

Our \textbf{contributions}\footnote{Code available at: \url{https://anonymous.4open.science/r/adaptive_censoring-C872}} are: (1)~We develop a novel framework for adaptive experimentation with censored survival outcomes. Thereby, we generalize the Neyman allocation by deriving an allocation policy that minimizes the trace of the semiparametric efficiency bound for the average survival effect curve. (2)~We introduce the \emph{Adaptive Survival Estimator (ASE)}, a new sequential approach for updating treatment allocation policy and estimating the average survival effect curve. We provide strong theoretical guarantees, including asymptotic normality of ASE, and time-uniform asymptotic confidence sequences for valid inference at arbitrary stopping times. (3) We evaluate our ASE across various numerical experiments and observe consistent efficiency gains from our adaptive allocation policy over uniform randomization and censoring-agnostic baselines.

\vspace{-0.2cm}
\section{Related work\protect\footnote{We provide an extended literature review in Appendix~\ref{app:extended_related_work}.}}
\vspace{-0.2cm}

\textbf{Response-adaptive randomization:} 
Response-adaptive randomization methods are primarily designed to improve patient outcomes during a trial by allocating more patients to treatment arms that appear more effective. $\bullet$\,\textit{Response-adaptive randomization (RAR)} traces back to the foundational contribution of~\citet{thompson.1933} with Bayesian extensions that drive allocation via posterior estimates of treatment effects~\citep{berry.2010, robertson.2023}. Frequentist formulations update the probabilities of assigning patients to treatment arms as data accumulate during a trial, so that the allocation to treatment can adapt to observed outcomes~\citep{ chow.2008,hu.2006}. $\bullet$\,\textit{Covariate-adjusted response-adaptive randomization (CRAR)} extends RAR by incorporating baseline covariates into the allocation rule, which thus yields individualized assignment probabilities that can be used to improve patient welfare~\citep{mukherjee.2025, rosenberger.2001, rosenberger.2008, zhang.2007}. $\Rightarrow$ \textit{However, these methods are primarily developed for welfare optimization rather than for improving estimation efficiency.}

\textbf{Adaptive experimentation for efficient ATE estimation:} Another line of work studies adaptive designs that aim to minimize the variance of treatment effect estimation~\citep{vanderlaan.2008}. In these settings, optimal allocation is given by \emph{Neyman allocation}, which assigns more samples to treatment arms with higher outcome variability. Existing methods can be grouped by the experiment design: (i)~in two-staged designs, one can asymptotically attain the semiparametric efficiency bound~\citep{hahn.2011}, while (ii)~fully sequential designs such as A2IPW achieve variance-optimal Neyman allocation~\citep{cook.2024, dai.2023, kato.2024}. There are also several extensions (see Appendix~\ref{app:extended_related_work}). $\Rightarrow$ \emph{However, these methods assume \textbf{fully observed outcomes} and apply Neyman allocation based only on outcome variance but ignore the censoring-induced uncertainty.}

\textbf{Semiparametric estimation of survival treatment effects:} We build on semiparametric efficiency theory for treatment effects with time-to-event outcomes, but our focus is different: we leverage the concepts to improve \emph{adaptive} experimental design rather than estimation under a \emph{fixed} treatment assignment. This literature stream provides efficient estimators under censoring \citep{cai.2020, mao.2018,rubin.2007, schaubel.2011, westling.2024,vanderlaan.2003}. There are also several extensions (see Appendix~\ref{app:extended_related_work}). $\Rightarrow$ \emph{However, this line of work focuses on estimation under a fixed treatment assignment mechanism and does not address adaptive design. While it characterizes the semiparametric efficiency bound, it does not leverage it to guide treatment allocation. In contrast, our work uses the efficient influence function to derive sequential allocation policies that are optimal under censoring.}

\textbf{Research gap:}  In sum, \underline{no} existing work jointly derives an efficiency-optimal allocation policy including a valid estimator for survival outcomes under censoring in adaptive experiments (see Table~\ref{tab:related_work} in the Appendix).

\section{Problem setup}\label{sec:setup}
\vspace{-0.2cm}

\textbf{Setting.} We consider the problem of estimating the average treatment effect (ATE) from time-to-event data \cite{cui.2023, curth.2021, frauen.2025}.\footnote{We deal with the problem of right censoring, which is common in survival analysis settings.} That is, we consider a population $(X,A,T,C)\sim\mathbb P$, where $X\in \gX \subseteq \mathbb{R}^p$ are observed covariates, $A\in \gA = \{0,1\}$ is a binary treatment, $T\in\gT$ is the event time of interest (e.g., death of the patients), and $C\in \gT$ is the censoring time (e.g., dropout time). We further use $\Delta = \mathbf{1}(T\leq C)$ as the event indicator\footnote{By defining $\Delta = \mathbf{1}(T \leq C)$, we allow for ties between $T$ and $C$, which is reflected in the definition of $G_{t-1}$. We extend our results to the no-ties case, i.e., $\Delta = \mathbf{1}(T < C)$, in Appendix~\ref{app:no-ties}.}, and $\tilde{T} = \min\{T, C\}$. Let $\gO = (X, A, \tilde{T}, \Delta)$ denote the observed population. Throughout, we assume a discrete-time setting $(T, C) \in \mathcal T=\{0,\dots,\tmax\}$ and all units share the same discrete time horizon across experiment rounds.

The experiment proceeds over $R \in \mathbb{N}$ rounds. At each round $r$, a new patient with covariates $X_r$ is i.i.d. drawn from $\gX$. The experimenter observes $X_r$ and selects a treatment $A_r \sim \pi_r(\cdot \mid X_r, \gH_{r-1})$, where $\pi_r$ is an \textbf{adaptive treatment allocation policy} that depends on the current covariates $X_r$ and past observations
\begin{align}
\gH_{r-1} = \left\{(x_1,a_1,\tilde{t}_1,\delta_1), \dots, (x_{r-1},a_{r-1}, \tilde{t}_{r-1},\delta_{r-1}) \right\}.
\end{align}
Following the treatment assignment $a_r$, one observes the outcome $\tilde{t}_r$ and the survival indicator $\delta_r$. The full observation at round $r$ is thus $(x_{r},a_{r}, \tilde{t}_{r},\delta_{r})$. After $R$ rounds, the experimenter estimates the ATE from accumulated data $\gH_{R}=\{(x_{r},a_{r}, \tilde{t}_{r},\delta_{r})\}_{r=1}^R$. As a result, we allow the treatment assignment policy to evolve over rounds based on accumulated data. 

\textbf{Key definitions:} For $t\in\mathcal T$, we define the \textit{(conditional) survival and censoring functions}
$S_t(x,a)=\mathbb P(T>t\mid X=x,A=a)$, $G_t(x,a)=\mathbb P(C>t\mid X=x,A=a)$,
with the convention $S_{-1}(x,a) = G_{-1}(x,a)\equiv 1$. We also define the observed \textit{discrete-time hazards}
$\lambda_t^S(x,a)=\mathbb P(\widetilde T=t,\Delta=1\mid \widetilde T\ge t, X=x,A=a)$, and $\lambda_t^G(x,a)=\mathbb P(\widetilde T=t,\Delta=0\mid \widetilde T\ge t, X=x,A=a)$.

\textbf{Estimand:} We use the potential outcomes framework~\citep{rubin.1974} to formalize our causal inference task. Let $T(a)\in \gT$ denote the potential event time under treatment $a\in\{0,1\}$. For a fixed $t \in \gT$, we are interested in the \textit{average survival effect curve} 
\begin{align}
\tau_t=\E[\tau_t(X)]\text{, where }
\tau_t(x)=\mathbb P(T(1)>t\mid X=x)-\mathbb P(T(0)>t\mid X=x).
\end{align}

\vspace{-0.3cm}
We define the nuisance vector $\eta_t(X):=\bigl(\lambda^S_i(X,1),\lambda^S_i(X,0),\lambda^G_{i-1}(X,1),\lambda^G_{i-1}(X,0)\bigr)_{i=0}^t$.

\vspace{-0.1cm}
\textbf{Identifiability}: We impose the following standard assumptions to ensure the identifiability of $\tau_t$.
\begin{assumption}[Standard causal inference assumptions]
\label{ass:causal}
For all $a\in\{0,1\}$ and $x\in\mathcal X$:
(i) \emph{consistency}: $T(a)=T$ whenever $A=a$;
(ii) \emph{treatment overlap}: $0<\pi(x)<1$ whenever $\mathbb P(X=x)>0$;
(iii) \emph{ignorability}: $A\perp (T(0),T(1))\mid X=x$.
\end{assumption}

\begin{assumption}[Survival-specific assumptions]
\label{ass:survival}
For all $a\in\{0,1\}$ and $x\in\mathcal X$ with $\mathbb P(X=x,A=a)>0$:
(i) \emph{censoring overlap}: $G_{t-1}(x,a)>0$;
(ii) \emph{survival overlap}: $S_{t-1}(x, a)>0$;
(iii) \emph{non-informative censoring}: $T\perp C\mid X=x,A=a$.
\end{assumption}
\vspace{-0.2cm}

Assumption~\ref{ass:causal} is standard in causal inference~\citep{imbens.2004, robins.2000, laan.2006}: (i)~consistency ensures that a patient's observed outcome under treatment $a$ equals their potential outcome $T(a)$ and rules out interference between patients; (ii)~treatment overlap means that every patient has a positive probability of receiving each treatment, which thereby ensures sufficient support in the data; and (iii)~ignorability means there are no unobserved confounders that can bias our estimation. Assumption~\ref{ass:survival} is commonly imposed for survival analysis~\citep{vanderlaan.2003, cui.2023} and ensures that we have sufficient uncensored and surviving individuals for each covariate value, and the censoring mechanism is independent of a patient's survival time given covariates and treatment. Under Assumptions~\ref{ass:causal} and \ref{ass:survival}, one can identify $\tau_t$ via~\citep{vanderlaan.2003}:

\vspace{-0.2cm}
\begin{align}
\label{eq:plugin}
    \tau_t = \E \left[ S_t(X,1) - S_t(X, 0)\right] = \E \left[ \prod_{i=0}^t (1-\lambda^S_i(X, 1))- \prod_{i=0}^t (1-\lambda^S_i(X, 0))\right].
\end{align}

\vspace{-0.3cm}
\textbf{Efficient influence function (EIF) in the non-adaptive setting.} Cui et al. \cite{cui.2023} have derived the non-centered EIF $\phi_t$ for the ATE estimator in Eq.~\ref{eq:eif} in the \emph{non-adaptive} setting, where the treatment assignment policy is \emph{fixed} over time:
\begin{align}
\label{eq:eif}
\phi_t(\gO;\tau_t,\eta_t) =
S_t(X,1)-S_t(X,0)
-\frac{(A-\pi(X))\,\xi(\gO, \eta_t)\,S_t(X,A)}{\pi(X)(1-\pi(X))},
\end{align}

\vspace{-0.4cm}
where $\xi(\gO,\eta_t) = \sum_{i=0}^t
\frac{\mathbf 1(\widetilde T=i,\Delta=1)-\mathbf 1(\widetilde T\ge i)\,\lambda_i^S(X,A)}{S_i(X,A)\,G_{i-1}(X,A)}$ and $G_{t-1}(X,A) = \prod_{i=0}^{t-1}1-\frac{\lambda_i^G(x,a)}{1 - \lambda_i^S(x,a)}$. 

\vspace{-0.2cm}
\section{Semiparametric efficiency bound and optimal treatment allocation}
\label{sec:optimal_allocation}
\vspace{-0.2cm}

To guide optimal experiment design in survival settings, we first derive the semiparametric efficiency bound for ATE estimation. This then characterizes the variance-minimizing allocation strategy and motivates our ASE method later (see §~\ref{sec:ASE_full}).

\textbf{Semiparametric efficiency bound:} We consider a setting where the experimenter has direct control over $\pi$ and can update the allocation policy to improve estimation efficiency. Hence, we are interested in the problem of \emph{designing} $\pi$ to minimize the asymptotic variance of the ATE estimator.

\begin{theorem}[Semiparametric efficiency bound]
\label{thm:eff_bound}
Under Assumptions~\ref{ass:causal} and~\ref{ass:survival}, the semiparametric efficiency bound for estimating $\tau = (\tau_0, \ldots, \tau_{t_{\max}})^\top$ is $\Sigma_{\mathrm{eff}}(\pi) \in \mathbb{R}^{(t_{\max}+1)\times(t_{\max}+1)}$,
\begin{align}
\Sigma_{\mathrm{eff}}(\pi) = \mathbb{E}\!\left[\frac{\Sigma_1(X)}{\pi(X)} + \frac{\Sigma_0(X)}{1-\pi(X)}\right] + \mathbb{E}[b(X)b(X)^\top],
\end{align}

\vspace{-0.3cm}
where {\small$\Sigma_a(X) = \Var(\bm{S}(X,a) \odot \bm{\xi}(\gO;\eta_{\tmax})\mid X, A=a)$}, and {\small$b(X) := \bm{S}(X,1) - \bm{S}(X,0) - \boldsymbol{\tau}$}.
\end{theorem}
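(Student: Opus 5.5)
We take the efficient influence function of Cui et al.\ \cite{cui.2023} in Eq.~\eqref{eq:eif}, evaluated at each $t\in\{0,\dots,\tmax\}$, as the canonical gradient. Stacking the centered components gives the vector EIF $\bm\psi(\gO)=\bm\phi(\gO;\tau,\eta_{\tmax})-\bm\tau$. Since the bound for a finite-dimensional vector of pathwise-differentiable functionals is the covariance of the stacked EIF, the claimed bound is $\Sigma_{\mathrm{eff}}(\pi)=\E[\bm\psi\bm\psi^\top]$. The policy $\pi$ is treated as known and fixed by the experimenter, so the nuisance tangent space excludes the propensity direction. Because the EIF in Eq.~\eqref{eq:eif} is already orthogonal to the propensity scores, this does not change the gradient.

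We first rewrite the stochastic part of $\bm\psi$. Since $A\in\{0,1\}$, we have $\frac{A-\pi(X)}{\pi(X)(1-\pi(X))}=\frac{A}{\pi(X)}-\frac{1-A}{1-\pi(X)}$. Hence
\[
\bm\psi = b(X) - \frac{A}{\pi(X)}\,\bm S(X,1)\odot\bm\xi + \frac{1-A}{1-\pi(X)}\,\bm S(X,0)\odot\bm\xi .
\]
Here $\bm\xi$ is the vector whose $t$-th component is the partial sum $\xi(\gO,\eta_t)$ up to $t$. The key fact is $\E[\bm\xi\mid X,A=a]=0$. Each summand $\mathbf 1(\widetilde T=i,\Delta=1)-\mathbf 1(\widetilde T\ge i)\lambda_i^S(X,A)$ is a martingale increment of the observed counting process with respect to its at-risk filtration. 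Its conditional mean given $\widetilde T\ge i, X, A$ is zero by the definition of the observed hazard, so the iterated-expectation sum vanishes.

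We then expand $\E[\bm\psi\bm\psi^\top]$ and condition on $X$. The cross term between $b(X)$ and the two weighted terms vanishes by the zero-mean property. The product $A(1-A)=0$ kills the cross term between the treated and control terms. For the treated term we get
\[
\E\!\left[\tfrac{A}{\pi(X)^2}(\bm S\odot\bm\xi)(\bm S\odot\bm\xi)^\top\mid X\right]=\frac{1}{\pi(X)}\E\!\left[(\bm S\odot\bm\xi)(\bm S\odot\bm\xi)^\top\mid X,A=1\right]=\frac{\Sigma_1(X)}{\pi(X)},
\]
and the second moment equals the variance because the conditional mean is zero. The control term gives $\Sigma_0(X)/(1-\pi(X))$ symmetrically. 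Taking outer expectations yields the formula. Assumptions~\ref{ass:causal}(ii) and~\ref{ass:survival}(i)--(ii) guarantee that all denominators are positive and the moments are finite.

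The main obstacle is justifying that Eq.~\eqref{eq:eif}, derived by Cui et al.\ for a single $t$, is the EIF in the known-$\pi$ design model and that stacking is legitimate. We plan to argue this as follows. The pathwise derivative of $\tau_t$ along any submodel perturbing only $p(x)$, $\lambda^S$ and $\lambda^G$ is unchanged when $\pi$ is fixed. The candidate gradient lies in the restricted tangent space $L_2^0(X)\oplus\{\text{hazard scores}\}$, because its $A$-dependence enters only through the censoring- and event-process martingale terms and not through a propensity score $h(X)(A-\pi(X))$. Uniqueness of the canonical gradient in that tangent space then gives the result componentwise. If needed, we would instead verify the pathwise derivative directly from Eq.~\eqref{eq:plugin} by differentiating the product of $(1-\lambda_i^S)$ and using $\partial\lambda_i^S$ scores $\propto \mathbf 1(\widetilde T\ge i)\{\mathbf 1(\widetilde T=i,\Delta=1)-\lambda_i^S\}/\{\lambda_i^S(1-\lambda_i^S)\}$. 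This recovers the weight $S_t/(S_iG_{i-1})$ through $\mathbb P(\widetilde T\ge i\mid X,A)=S_{i-1}G_{i-1}$ together with $S_{i}=S_{i-1}(1-\lambda_i^S)$.
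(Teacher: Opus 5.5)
Your proposal is correct and follows essentially the paper's route: take the covariance of the stacked EIF, use $\E[\xi(\gO,\eta_t)\mid X,A]=0$ to remove the cross terms with $b(X)$, and condition on $X$ to obtain $\Sigma_1(X)/\pi(X)+\Sigma_0(X)/(1-\pi(X))$. Beyond the paper's proof, which only works out the diagonal entry $\Var(\phi_t)$ and takes the EIF from Cui et al., you explicitly handle the off-diagonal (cross-horizon) entries and give a tangent-space argument that this EIF remains the canonical gradient when $\pi$ is fixed by design; both are welcome additions.
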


\vspace{-0.3cm}
\begin{proof}
See Appendix~\ref{app:proof_eff_bound}, we provide a detailed summary of notation in Appendix~\ref{app:notation}.
\end{proof}
\vspace{-0.3cm}

\textbf{Optimal treatment allocation.} Given the efficiency bound $\Sigma_{\mathrm{eff}}(\pi)$ from Theorem~\ref{thm:eff_bound}, we now derive the treatment allocation policy $\pi^\star$ that minimizes it. Since $\Sigma_{\mathrm{eff}}(\pi)$ is a matrix, an optimal $\pi^\star$ should minimize a scalar summary of $\Sigma_{\mathrm{eff}}(\pi)$ over all feasible allocation policies $\Pi_\alpha = \{\pi : \alpha \leq \pi(x) \leq 1-\alpha \text{ for all } x\}$.

\begin{definition}
\itshape
    A treatment allocation policy $\pi^*$ is A-optimal if $\pi^* \in \arg \min \mathrm{tr}(\Sigma_{\mathrm{eff}}(\pi))$, where $\mathrm{tr}$ is the trace operator. That is, an A-optimal policy minimizes the sum of the variance of the discrete horizon $t$ estimators.
\end{definition}

\vspace{-0.2cm}
In the following, we adopt the trace criterion $\mathrm{tr}(\Sigma_{\mathrm{eff}}(\pi))$, corresponding to \emph{A-optimality}\footnote{In contrast to A-optimality, alternatives such as D-optimality (i.e., $\det\Sigma_{\mathrm{eff}}$, which corresponds to confidence-ellipsoid volume) or E-optimality (i.e., $\lambda_{\max}(\Sigma_{\mathrm{eff}})$ account for inter-horizon covariance structure but do not admit a closed-form solution; discussion in Appendix~\ref{app:add_optimal}.} in the classical theory of optimal experimental design~\citep{atkinson.2023}, which calculates the sum of component-wise asymptotic variances across horizons. Since the trace separates additively over $t$, it reduces to independently minimizing the marginal variance at each time horizon. The following result explicitly states the A-optimal treatment allocation policy for our task.
\begin{proposition}[A-optimal treatment allocation policy]
\label{prop:opt_assignment}
Assume $\alpha \in (0, 1/2)$ and $V_0(X)+V_1(X)>0$ for all $x\in\mathcal X$. Then, the allocation policy minimizing $\mathrm{tr}(\Sigma_{\mathrm{eff}}(\pi))$ subject to $\pi \in \Pi_\alpha$ is given by

\vspace{-0.6cm}
{\small\begin{align}\label{eq:optimal_pi}
\pi^\star(X) = \mathrm{clip}_\alpha\!\left(\frac{\sqrt{V_1(X)}}{\sqrt{V_1(X)}+\sqrt{V_0(X)}}\right), \quad \text{where } V_a(X) = \sum_{t=0}^{t_{\max}} v_{t,a}(X) = \mathrm{tr}(\Sigma_{a}(X)),
\end{align}}

\vspace{-0.2cm}
where $v_{t,a}(X)$ are the $t$-th diagonal entries of $\Sigma_a(X)$.
\end{proposition}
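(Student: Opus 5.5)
Starting from Theorem~\ref{thm:eff_bound}, take the trace of $\Sigma_{\mathrm{eff}}(\pi)$. Trace is linear and commutes with expectation, so
\begin{align*}
\mathrm{tr}(\Sigma_{\mathrm{eff}}(\pi)) = \E\!\left[\frac{V_1(X)}{\pi(X)} + \frac{V_0(X)}{1-\pi(X)}\right] + \E\bigl[\|b(X)\|_2^2\bigr],
\end{align*}
where $V_a(X)=\mathrm{tr}(\Sigma_a(X))=\sum_t v_{t,a}(X)$.

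The second term does not depend on $\pi$. The nuisances $S$, $\xi$ and the conditional law of $\gO$ given $(X,A=a)$ are determined by the data-generating process, not by the design, so $\Sigma_a(X)$ and $b(X)$ do not involve $\pi$ either. Minimizing the trace over $\Pi_\alpha$ therefore means minimizing $\E[f_X(\pi(X))]$ with
\[
f_x(p)=\frac{V_1(x)}{p}+\frac{V_0(x)}{1-p}.
\]
The constraint $\alpha\le \pi(x)\le 1-\alpha$ acts separately on each $x$, so it suffices to minimize $f_x$ on $[\alpha,1-\alpha]$ for every $x$. The pointwise minimizer (measurable in $x$) then minimizes the expectation, since $\E[f_X(\pi(X))]\ge \E[\min_p f_X(p)]$ for any $\pi\in\Pi_\alpha$.

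For fixed $x$, $f_x$ is convex on $(0,1)$ because $V_a\ge 0$ and $f_x''(p)=2V_1/p^3+2V_0/(1-p)^3$. The assumption $V_0+V_1>0$ makes this strictly positive, so $f_x$ is strictly convex. Setting $f_x'(p)=-V_1/p^2+V_0/(1-p)^2=0$ gives $(1-p)\sqrt{V_1}=p\sqrt{V_0}$, so the unconstrained minimizer is
\[
p^\circ=\frac{\sqrt{V_1}}{\sqrt{V_1}+\sqrt{V_0}},
\]
which is well defined by the same assumption.

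Boundary cases need care. If $V_0=0$, then $p^\circ=1$ and $f_x$ is decreasing on $(0,1)$. If $V_1=0$, then $p^\circ=0$ and $f_x$ is increasing. In both cases the constrained minimum sits at the corresponding endpoint, which is exactly $\mathrm{clip}_\alpha(p^\circ)$.

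In the interior case, a strictly convex univariate function on an interval is minimized at the projection of its unconstrained minimizer onto the interval. Concretely, $f_x'<0$ for $p<p^\circ$ and $f_x'>0$ for $p>p^\circ$, so the minimizer is $p^\circ$ when $p^\circ\in[\alpha,1-\alpha]$ and the nearest endpoint otherwise. Because $\alpha<1/2$ the interval is nonempty, and clipping gives precisely $\pi^\star$ in \eqref{eq:optimal_pi}. Strict convexity also gives uniqueness pointwise, up to $\mathbb P_X$-null sets.

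The main obstacle is justifying that $\Sigma_a(X)$ is invariant to $\pi$, so that the problem is genuinely separable. I would settle this by noting that under Assumptions~\ref{ass:causal}--\ref{ass:survival}, conditioning on $A=a$ and $X$ gives a law of $(\tilde T,\Delta)$ that does not involve $\pi$. I would also check integrability: the objective is finite for every $\pi\in\Pi_\alpha$ whenever $\E[V_a(X)]<\infty$, which the efficiency bound already presumes.
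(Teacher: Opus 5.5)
Your proposal is correct and takes the same route as the paper's proof. Like the paper, you take the trace of the bound in Theorem~\ref{thm:eff_bound}, drop the $\pi$-free term, minimize $V_1(x)/p + V_0(x)/(1-p)$ pointwise over $[\alpha,1-\alpha]$ via the first-order condition, and clip; your strict-convexity argument and your handling of the degenerate cases $V_0(x)=0$ or $V_1(x)=0$ supply the justification for clipping that the paper only asserts with ``applying the overlap constraint.''
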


\vspace{-0.5cm}
\begin{proof}
See Appendix~\ref{app:proof_opt_assignment}.
\end{proof}
\vspace{-0.4cm}

\textbf{Interpretation:} Proposition~\ref{prop:opt_assignment} implies that the optimal policy $\pi^\star(X)$ allocates more probability mass to the arm $a$ with higher $V_a(X)$, where

\vspace{-0.6cm}
{\small\begin{align}
V_a(x) =& \sum_{t=0}^{t_{\max}} v_{t,a}(X) = \sum_{t=0}^{t_{\max}} \mathrm{Var}\!\left[S_t(X,a)\,\xi(\mathcal{O},\eta_t) \mid X, A=a\right]
= \sum_{t=0}^{t_{\max}} {S}_{t}(x,a)^2 \sum_{i=0}^{t} \frac{{\lambda}^S_{i}(x,a)}{{S}_{i}(x,a)\,{G}_{i-1}(x,a)}.
\end{align}}

\vspace{-0.4cm}
Unlike standard adaptive ATE estimation, $V_a(X)$ jointly captures two sources of uncertainty. (i)~\emph{survival uncertainty}: arms where the variance of the conditional survival probability (i.e., $\mathrm{Var}[S_t(X,a) \mid X, A=a]$) is large contribute more to $V_a(X)$; and (ii)~\emph{censoring uncertainty}: since $\xi(\mathcal{O},\eta_t)$ involves the inverse censoring weight $1/G_{i-1}(X,a)$, arms with larger censoring inflate $V_a(X)$, thus causing $\pi^\star$ to compensate by allocating more patients to that arm.

\begin{tcolorbox}[
    colback=NavyBlue!5!white,
    colframe=NavyBlue!75!black,
    title=\textbf{Illustrative example},
    arc=1mm,
    left=4.5pt, right=4.5pt, top=3pt, bottom=3pt
]

\begin{wrapfigure}[12]{l}{0.4\textwidth}
    \centering
    \vspace{-0.5cm}
    \includegraphics[width=\linewidth]{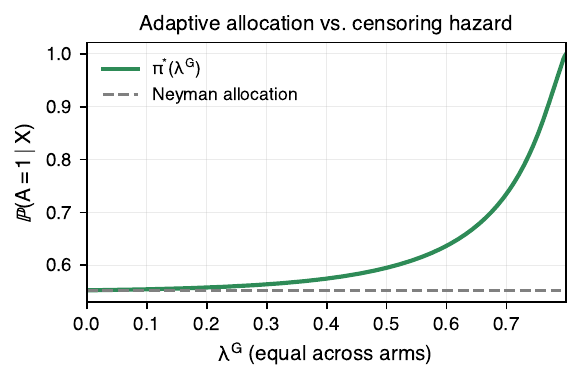}
    \vspace{-0.4cm}
    \caption{Censoring aware A-Optimal policy $\pi^\star(X)$ vs. censoring hazard.}
    \label{fig:illu_example}
    \vspace{-0.5cm}
\end{wrapfigure}

Consider the case where both arms share the same censoring hazard (i.e., $\lambda^G_t(x,0)=\lambda^G_t(x,1)=:\lambda^G$); yet where the event dynamics differ (i.e., $\lambda^S_t(x,0)\neq\lambda^S_t(x,1)$). One might expect that equal censoring renders $\pi^\star$ insensitive to $\lambda^G$, and that one would recover the classical Neyman allocation. However, Figure~\ref{fig:illu_example} refutes this: at $\lambda^G=0$, $\pi^\star$ coincides with the Neyman allocation (dashed), but diverges monotonically as $\lambda^G$ grows. The key mechanism is the following: equal $\lambda^G$ does not imply equal $G_{t-1}$ across arms, since the conditional censoring hazard $G_{t-1}(x,a)=\prod_{i = 0}^{t-1}1-\lambda^G_i/(1-\lambda^S_i(x, a))$ depends on arm-specific event dynamics, so the arm with higher event hazard experiences a larger increase in the censoring variance even under equal $\lambda^G$. We provide an additional example for intuition in Appendix~\ref{app:add_illu_eg}.

\vspace{0.1cm}
$\Rightarrow$ \emph{This demonstrates that censoring affects the optimal allocation through both $\lambda^G_t(X, a)$ and $\lambda^S_t(X, a)$; and that the standard Neyman allocation is suboptimal whenever $\lambda^G_t(X, a) > 0$}.
\end{tcolorbox}
 
\textbf{Neyman allocation as a special case:}
In the special case of no censoring in the trial (i.e., $G_{t-1}(x, a)\equiv 1$, $\widetilde T=T$ with $\Delta \equiv 1$), one can directly compute $V_a(X) = \sum_{t = 0}^{\tmax} \Var[S_t(X, a)\mid X, A = a] = \sum_{t = 0}^{\tmax} \Var \bigl(\mathbf 1\{T>t\} \big| X, A=a\bigr)$, so that the trace-optimal allocation in Proposition~\ref{prop:opt_assignment} reduces to
\vspace{-0.1cm}
{\small\begin{align}
\pi^\star(x)
\;=\;
\mathrm{clip}_\alpha\!\left(
\frac{\sqrt{\mathrm{tr}(\Var(\mathbf 1\{T>t\}\mid X=x, A=1))}}
{\sqrt{\mathrm{tr}(\Var(\mathbf 1\{T>t\}\mid X=x, A=1))}+\sqrt{\mathrm{tr}(\Var(\mathbf 1\{T>t\}\mid X=x, A=0))}}
\right) ,
\end{align}}

\vspace{-0.3cm}
which exactly matches the classical Neyman allocation for minimizing the variance of a difference-in-means estimator with a binary outcome $\mathbf 1\{T>t\}$~\citep{neyman.1934}. However, unlike standard Neyman allocation, which targets outcome variance alone, our policy accounts for a second source of uncertainty: $v_{t,a}(X)$ inflates whenever censoring is large in arm $a$, which directly forces the policy to compensate by allocating more units there.

\vspace{-0.2cm}
\section{The Adaptive Survival Estimator (ASE)}
\label{sec:ASE_full}
\vspace{-0.2cm}

We now propose the \textbf{Adaptive Survival Estimator (ASE)}, a sequential framework for learning the optimal allocation policy and estimating the average survival effect curve. Our goal is to minimize the semiparametric efficiency bound from Theorem~\ref{thm:eff_bound}. We proceed as follows: (1) We introduce the \emph{adaptive treatment allocation policy} in Section~\ref{sec:adaptive_assignment} and the adaptive version of \emph{average survival effect curve estimation} in Section~\ref{sec:adaptive_estimator}. We provide an extension to batch settings in Appendix~\ref{app:batch}.

\vspace{-0.2cm}
\subsection{Adaptive treatment allocation policy}
\label{sec:adaptive_assignment}
\vspace{-0.2cm}

Our framework operates in two steps. In the first $R_0$ rounds, units are assigned under a fixed policy $\pi_{\mathrm{init}}(X)$ (e.g., Bernoulli($1/2$) randomization) to accumulate sufficient history for stable estimation of the survival and censoring hazards $\lambda^S_t, \lambda^G_t$. From round $R_0+1$ onward, the policy switches to the data-driven policy $\pi_r(X\mid \mathcal{H}_{r-1})$ below, which sequentially approximates the A-optimal allocation of Proposition~\ref{prop:opt_assignment}. Concretely, we obtain a direct estimate $\widetilde{\pi}_r(X\mid \mathcal{H}_{r-1})$ by
\begin{align}
\label{eq:wide_hat_pi}
    \widetilde{\pi}_r(X \mid \gH_{r-1}) := \frac{\sqrt{\hat{V}_{1,r-1}(X)}}{\sqrt{\hat{V}_{1,r-1}(X)}+\sqrt{\hat{V}_{0, r-1}(X)}},
\end{align}

\vspace{-0.4cm}
where $V_{a, r-1}(X)$ is the trace of the conditional covariance matrix based on data in $\gH_{r-1}$. We then apply a truncation step to $\widetilde{\pi}_r(X\mid \gH_{r-1})$ to obtain the assignment policy $\pi_r(X\mid \gH_{r-1})$ at round $r$.

\textbf{Closed-form $V_{a, r-1}(X)$ estimation:} We now present the closed-form estimation of $V_{a, r-1}(X)$ by combining Theorem~\ref{thm:eff_bound} with A-optimal policy:
\vspace{-0.2cm}
\begin{align}
\label{eq:adap_v_a}
\hat{V}_{a, r-1}(X)
=\sum_{t=0}^{t_{\max}} \hat{S}_{t,r-1}(x,a)^2 \sum_{i=0}^{t} \frac{\hat{\lambda}^S_{i,r-1}(x,a)}{\hat{S}_{i,r-1}(x,a)\,\hat{G}_{i-1,r-1}(x,a)},
\end{align}

\vspace{-0.2cm}
where $\hat{S}_{i,r-1}(x,a)$ and $\hat{G}_{i-1,r-1}(x,a)$ are the survival- and censoring-hazard nuisances already required by the EIF in Eq.~\ref{eq:eif}, but now estimated sequentially from the accumulated history $\mathcal{H}_{r-1}$. Specifically, $\widetilde{\pi}_r(x)$ is a deterministic function of $\eta_{t, r-1}(X)$ alone. 
\vspace{0.1cm}
\begin{remark} 
\emph{These conclusions highlight the difference to existing work~\citep{oprescu.2025}, which typically requires a second-stage regression to estimate the variance target governing the optimal allocation. In contrast, Eq.~\ref{eq:adap_v_a} expresses $\hat{V}_{a,r-1}(x)$ as a closed-form function of the hazard nuisances $\hat\eta_{\,r-1}$, which are already used inside the EIF construction at round $r-1$. This yields two practical advantages: (i) no auxiliary second-stage learner for the variance is needed; (ii) the non-negativity of $\hat V_{a, r-1}$ holds by construction.}
\end{remark}

\textbf{Truncation for treatment allocation policy:} Extreme censoring or small within-stratum sample sizes can drive $\hat{V}_{a,r-1}(X)$ toward zero, thus causing the direct estimates of policy $\widetilde{\pi}_r(X\mid \mathcal{H}_{r-1})$ to collapse to the boundary of $[0,1]$. Following the literature~\citep{cook.2024, dai.2023, oprescu.2025}, we clip $\widetilde{\pi}_r$ to a shrinking interval, defining the final assignment policy as
\vspace{-0.2cm}
\begin{align}
\label{eq:trunc}
\pi_r(X\mid \gH_{r-1}):= \min \bigg\{1-\frac{1}{k_r}, \max\big\{\frac{1}{k_r}, \widetilde{\pi}_r(X\mid \gH_{r-1})\big\}\bigg\},
\end{align}

\vspace{-0.3cm}
where $k_r\in [2, \infty)$ is a non-decreasing sequence with $k_r \rightarrow \infty$ as $r \rightarrow \infty$. This keeps all assignment probabilities strictly interior, controls the growth of the censoring-weighted scores in the EIF, and is necessary for the theoretical guarantees in Section~\ref{sec:theory}.

\subsection{Sequentially cross-fitted adaptive ATE estimation}
\label{sec:adaptive_estimator}

\textbf{Why a custom cross-fitting approach is needed:} The reason is the sequential dependence: at round $r$, the estimated nuisances $\hat\eta_{t,r-1}$ are functions of the same accumulated history $\mathcal{H}_{r-1}$. The standard central limit theorems (CLTs) therefore do \underline{not} apply. Our approach resolves this via a near-martingale decomposition: we isolate an oracle term that satisfies a martingale CLT and a nuisance remainder whose bias vanishes at rate $o_p(R^{-1/2})$, which is guaranteed once each hazard estimator achieves the $o_p(r^{-1/4})$ rate imposed in Theorem~\ref{thm:adaptive_asymptotic_normality_cf}.

\textbf{Estimation via sequential cross-fitting:} To mitigate finite-sample bias and ensure the nuisance remainder is of order $o_p(R^{-1/2})$, we apply the sequential cross-fitting scheme of \citet{waudby-smith.2024} with $K=2$ folds. For each unit $r \in \mathbb{N}$, define the fold index $J_r := r \bmod 2 \in \{0,1\}$, and split $\gH_{r-1}$ into two temporal folds, i.e., $\gH_{r-1} = \gH_{r-1}^{(0)}\cup\gH_{r-1}^{(1)}$,

\vspace{-0.6cm}
\begin{align}
\gH_{r-1}^{(j)} := \bigl\{(X_{\tilde{r}}, A_{\tilde{r}}, \widetilde T_{\tilde{r}}, \Delta_{\tilde{r}}) : {\tilde{r}} \in [1,2,..., r-1],\ J_{\tilde{r}} = j\bigr\}, \qquad j \in \{0, 1\}.
\end{align}

\vspace{-0.4cm}
Given the cross-fitted nuisances $\hat\eta_{t,r-1}^{(-J_r)}$ and the adaptive policy $\pi_r(X \mid \gH_{r-1})$, we define the cross-fitted EIF pseudo-outcome at horizon $t$ by

\vspace{-0.6cm}
\begin{equation}
\hat\phi_{t,r}^{\mathrm{cf}}
:= 
\hat S_{t,r-1}^{(-J_r)}(X_r,1) - \hat S_{t,r-1}^{(-J_r)}(X_r,0) 
- \frac{\{A_r-\pi_r(X_r)\} \hat\xi\bigl(\gH_r,\hat\eta_{t,r-1}^{(-J_r)}\bigr)}{\pi_r(X_r)\{1-\pi_r(X_r)\}} \hat S_{t,r-1}^{(-J_r)}(X_r,A_r),
\label{eq:adaptive_cf_eif}
\end{equation}

\vspace{-0.2cm}
where $\hat\xi(\gH_r, \hat\eta_{t,r-1}^{(-J_r)})$ denotes the sequentially estimated version of $\xi(\gH_r,\eta_t)$ in Eq.~\ref{eq:eif}, with all nuisance components replaced by $\hat\eta_{t,r-1}^{(-J_r)}$. We now yield our ASE: our aim is to sequentially estimate the ATE from the non-centered efficient influence function in Eq.~\ref{eq:eif} with sequentially updated nuisance functions. In ASE, the estimator at horizon $t$ is defined as

\vspace{-0.5cm}
\begin{equation}
\hat{\tau}^{\mathrm{ASE}}_{t,R} = \frac{1}{R} \sum_{r=1}^R \hat{\phi}^{\mathrm{cf}}(\gH_{r}; \pi_r, \hat{\eta}_{t, r}^{\mathrm{cf}}), 
\end{equation}

\vspace{-0.3cm}
where $\hat{\eta}_{t,r-1}^{\mathrm{cf}} = \{\hat{\lambda}^{S, \mathrm{cf}}_{i, r-1}(X, 0), \hat{\lambda}^{S, \mathrm{cf}}_{i, r-1}(X, 1), \hat{\lambda}^{G, \mathrm{cf}}_{i, r-1}(X, 0), \hat{\lambda}^{G, \mathrm{cf}}_{i, r-1}(X, 1)\}_{i=0}^{t}$ denotes the estimates of the nuisance functions at round $r$ with sequential cross-fitting, which are constructed solely from the past data $\gH_{r-1}$. The treatment assignment policy $\pi_r(X\mid \gH_{r-1})$, defined by the experimenter based on the estimated optimal policy from Section~\ref{sec:adaptive_assignment}, is treated as known and does not require further estimation from data. The full procedure is summarized in Algorithm~\ref{alg:adaptive_survival}. 

\section{Theoretical guarantees}
\label{sec:theory}

This section provides several theoretical guarantees for the ASE. We establish asymptotic normality and an A-optimal semiparametric efficiency bound in Section~\ref{sec:theory_normality}, and robustness of the ASE under partial nuisance misspecification in Section~\ref{sec:theory_ortho}. Both results extend naturally to batch settings where policy updates occur at the end of each batch rather than every round, since the near-martingale decomposition depends only on $L_2$-consistency of the nuisance estimates and \emph{not} on the update frequency. In Appendix~\ref{app:asc}, we additionally derive asymptotically-valid time-uniform confidence sequences for the ASE, thereby enabling valid inference at arbitrary data-dependent stopping times.

\subsection{Asymptotic normality of the ASE}
\label{sec:theory_normality}

To derive our asymptotic guarantees\footnote{Throughout, estimators use sequential cross-fitting of Section~\ref{sec:adaptive_estimator}; we suppress the superscript ${}^{\mathrm{cf}}$ for brevity.}, we impose the following uniform overlap assumption on the survival and censoring hazards. This condition is common in semiparametric survival analysis~\citep{cui.2023, vanderlaan.2003} and ensures that the inverse probability of censoring weights in the Eq.~\ref{eq:eif} remain bounded.

\begin{assumption}[Uniform overlap up to horizon $t$]
\label{ass:uniform_overlap}
For each fixed $t \in \{0,\dots,t_{\max}\}$, there exist constants
$\underline{c}^S_t$, $\underline{c}^G_t>0$ such that, for all $a\in\{0,1\}$, all $x$, and all $i\le t$, we have $1-\lambda^S_i(x,a)\ge \underline{c}^S_t$ and $1-\lambda^G_i(x,a) / (1-\lambda^S_i(x,a))\ge \underline{c}^G_t$.
\end{assumption}

\begin{remark}[Necessity of stepwise overlap]
\emph{Assumption~\ref{ass:uniform_overlap} imposes bounds at every intermediate step $i \leq t$ rather than only at the endpoint, since $S_{t}$ and $G_{t-1}$ are cumulative products: a single step with $\lambda^S_i \to 1$ drives the $\xi(\gH_r;\eta)$ in the EIF to become unbounded. Hence, this constitutes the discrete-time survival analogue of the positivity condition standard in semiparametric survival analysis~\citep{cui.2023, vanderlaan.2003, westling.2024}. See Appendix~\ref{app:remark_overlap} for details.}
\end{remark}

\begin{theorem}[Asymptotic normality of the adaptive survival estimator] 
\label{thm:adaptive_asymptotic_normality_cf}
Suppose Assumptions~\ref{ass:causal}, \ref{ass:survival}, and Assumption~\ref{ass:uniform_overlap} hold, and let $k_r \in [2, \infty)$ be a non-decreasing truncation sequence satisfying $k_r \to \infty$ and $k_r = o(r^{1/4})$ as $r \to \infty$.  Suppose there exists a non-adaptive policy $\pi(X)\in [\varepsilon, 1-\varepsilon]$ for some $\varepsilon>0$ such that the adaptive assignment policy $\pi_r(X\mid \gH_{r-1})$ are $L_2$-consistent relative to truncation schedule, i.e., $k_r\|\pi_r - \pi\|_2 = o_p(1)$. Furthermore, assume that, for each fixed $t$ and treatment $a$, we have $\max_{0\leq i \leq \tmax} \|\hat{\lambda}^S_{i, r-1}(a, \cdot)-\lambda^S_i(a, \cdot)\|_2 =o_p(r^{-1/4})$ and $\max_{0\leq i \leq \tmax-1} \|\hat{\lambda}^G_{i, r-1}(a, \cdot)-\lambda^G_i(a, \cdot)\|_2 =o_p(r^{-1/4})$. Then, the adaptive estimator is asymptotically normal: 

\vspace{-0.8cm}
\begin{align}
\sqrt{R}\left(\hat{\tau}^{\mathrm{ASE}}_{t, R} - \tau_t \right) \rightarrow^d \mathcal{N}(0, V_{\mathrm{eff},t} (\pi)),
\end{align}

\vspace{-0.3cm}
where $V_{\mathrm{eff},t}(\pi) = \E \left[\frac{v_{t,1}(X)}{\pi(X)} + \frac{v_{t,0}(X)}{1-\pi(X)}\right] + b_t$ denotes the semiparametric efficiency bound for $\tau_t$ under policy $\pi$ and $b_t = \E[(S_t(X, 1) - S_t(X, 0) - \tau_t)^2]$. In particular, if we have $\pi=\pi^\star$, then $\hat{\tau}^{\mathrm{ASE}}_{t,R}$ attains the A-optimal semiparametric efficiency bound.
\end{theorem}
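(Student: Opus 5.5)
The proof rests on the near-martingale decomposition announced in Section~\ref{sec:adaptive_estimator}. Write $\phi_t(\cdot;\pi_r,\eta_t)$ for the oracle pseudo-outcome, i.e.\ Eq.~\ref{eq:eif} with the true hazards $\eta_t$ and the known round-$r$ policy $\pi_r$. We then decompose
\begin{align*}
\sqrt{R}\bigl(\hat\tau^{\mathrm{ASE}}_{t,R}-\tau_t\bigr)
= \frac{1}{\sqrt R}\sum_{r=1}^R \bigl\{\phi_t(\gO_r;\pi_r,\eta_t)-\tau_t\bigr\}
+ \frac{1}{\sqrt R}\sum_{r=1}^R \bigl\{\hat\phi^{\mathrm{cf}}_{t,r}-\phi_t(\gO_r;\pi_r,\eta_t)\bigr\}.
\end{align*}
The first sum is the oracle martingale term $M_R$ and the second is the nuisance remainder $\mathcal R_R$. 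We will show that $M_R\to^d\mathcal N(0,V_{\mathrm{eff},t}(\pi))$ and that $\mathcal R_R=o_p(1)$. The efficiency claim for $\pi=\pi^\star$ then follows directly from Theorem~\ref{thm:eff_bound} and Proposition~\ref{prop:opt_assignment}.

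\textbf{Step 1: oracle term.} Let $\mathcal F_{r-1}=\sigma(\gH_{r-1})$. Since $X_r$ is a fresh i.i.d.\ draw and $A_r\sim\pi_r(\cdot\mid X_r,\gH_{r-1})$ with $\pi_r$ known, the following identity holds:
\[
\E[\phi_t(\gO_r;\pi_r,\eta_t)\mid\mathcal F_{r-1}]=\tau_t .
\]
It follows from two facts, conditionally on $X_r$ and the history: the correction term has mean zero because $\E[\xi\mid X,A]=0$ (the martingale property of the counting-process increments under Assumption~\ref{ass:survival}), and identification follows from Eq.~\ref{eq:plugin}. Hence the summands are martingale differences, and we apply a martingale CLT (e.g.\ Hall--Heyde), which requires two conditions.

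The first is conditional variance convergence. Computing as in Theorem~\ref{thm:eff_bound},
\[
\E[(\phi_t-\tau_t)^2\mid\mathcal F_{r-1}]=\E_X\Bigl[\tfrac{v_{t,1}(X)}{\pi_r(X)}+\tfrac{v_{t,0}(X)}{1-\pi_r(X)}\Bigr]+b_t .
\]
The difference from the corresponding expression at $\pi$ is bounded by $C k_r\|\pi_r-\pi\|_2$, using $\pi_r\ge 1/k_r$, $\pi\ge\varepsilon$, and the boundedness of $v_{t,a}$ from Assumption~\ref{ass:uniform_overlap}. This is $o_p(1)$, so the Cesàro average converges to $V_{\mathrm{eff},t}(\pi)$.

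The second is a conditional Lindeberg (or Lyapunov) condition. Under Assumption~\ref{ass:uniform_overlap}, $|\phi_t|\lesssim k_r$, so $\E|\phi_t-\tau_t|^{2+\delta}\lesssim k_r^{\delta}$. Because $k_r=o(r^{1/4})$, we have $R^{-1-\delta/2}\sum_r k_r^{2+\delta}\to0$, and the condition holds.

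\textbf{Step 2: remainder.} Split $\mathcal R_R=\mathcal R^{(1)}_R+\mathcal R^{(2)}_R$, where
\begin{align*}
\mathcal R^{(1)}_R&=R^{-1/2}\sum_r\bigl(\Delta_r-\E[\Delta_r\mid\mathcal F_{r-1}]\bigr),\\
\mathcal R^{(2)}_R&=R^{-1/2}\sum_r\E[\Delta_r\mid\mathcal F_{r-1}],
\end{align*}
with $\Delta_r=\hat\phi^{\mathrm{cf}}_{t,r}-\phi_t(\gO_r;\pi_r,\eta_t)$.

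For $\mathcal R^{(1)}_R$, the summands are martingale differences because the cross-fitted nuisances are $\mathcal F_{r-1}$-measurable. Its variance is therefore $R^{-1}\sum_r\E\|\Delta_r\|^2$. Each term is bounded by $k_r^2$ times the $L_2$ error of the hazards, which is $k_r^2\,o(r^{-1/4})\to0$.

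For $\mathcal R^{(2)}_R$, the Neyman orthogonality of the EIF gives a second-order product bound on the conditional bias:
\[
\bigl|\E[\Delta_r\mid\mathcal F_{r-1}]\bigr|\lesssim k_r\sum_{i\le t}\|\hat\lambda^S_{i}-\lambda^S_i\|_2\bigl(\|\hat\lambda^S-\lambda^S\|_2+\|\hat\lambda^G-\lambda^G\|_2\bigr).
\]
To obtain it, telescope the products $\hat S_t/S_t$ and $\hat G_{i-1}/G_{i-1}$, using Assumption~\ref{ass:uniform_overlap} together with boundedness of the estimated hazards, which we enforce by truncating the estimates. The resulting bound is $k_r\,o_p(r^{-1/2})$. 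The key point is that $\pi_r$ is known exactly, so no propensity error enters the product.

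\textbf{Main obstacle.} The hardest step is the bound on $\mathcal R^{(2)}_R$, and specifically making $R^{-1/2}\sum_r k_r\,o_p(r^{-1/2})=o_p(1)$ rigorous. There are two difficulties. First, $k_r$ grows, while the hazard rate assumption gives only $o(r^{-1/2})$ for the product, so the factor $k_r$ is not absorbed automatically. Second, the $o_p$ statements must be converted into control of a sum. For the second difficulty, I would first use a Kronecker/Toeplitz-type lemma for $o_p$ sequences, via a truncation argument on the event where the nuisance rates hold. For the first, I would try to absorb $k_r$ by noting that the bias term involves $A_r-\pi_r$ only through the hazard errors in the arm actually observed. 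This may give a factor $1/\pi_r$ that cancels the propensity weight, leaving a $k_r$-free product bound. If that cancellation fails, the argument would need the slightly stronger product rate $o_p(r^{-1/2}k_r^{-1})$.
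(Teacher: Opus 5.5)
Your proof has the same architecture as the paper's. It uses the same oracle/remainder split, the same martingale-difference argument, and the same bound $Ck_r\|\pi_r-\pi\|_2$ on the conditional-variance error. It also splits the remainder the same way, into a martingale fluctuation and a conditional bias (the paper's $\Delta_t^B$ and $\Delta_t^A$).

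The step that does not close as written is your bias bound. The fix is the cancellation you suspect, not a stronger rate. Write $W_r=\{A_r-\pi_r(X_r)\}/[\pi_r(X_r)\{1-\pi_r(X_r)\}]$. Three facts hold: $\pi_r$ is $\mathcal H_{r-1}$-measurable, it is the true probability of $A_r=1$ given $(X_r,\mathcal H_{r-1})$, and the new patient is independent of the history. Hence any function $h$ built from $\mathcal H_{r-1}$ (e.g.\ from the cross-fitted nuisances) satisfies
\[
\E\bigl[W_r\,h(A_r,\widetilde T_r,\Delta_r)\mid X_r,\mathcal H_{r-1}\bigr]
=\E\bigl[h(1,\widetilde T,\Delta)\mid X=X_r,A=1\bigr]-\E\bigl[h(0,\widetilde T,\Delta)\mid X=X_r,A=0\bigr],
\]
so the inverse-propensity weight integrates out exactly. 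The conditional bias is therefore $\E[\mathcal R_{t,r,1}(X_r)-\mathcal R_{t,r,0}(X_r)\mid\mathcal H_{r-1}]$. This is free of $\pi_r$, hence of $k_r$, and is what Eq.~\ref{eq:deltaA-remainder} encodes; the stated hazard rates make it $o_p(r^{-1/2})$.

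The truncation enters only the fluctuation term, through $|W_r|\le k_r$. There you need \emph{squared} errors: $\E[m_{t,r}^2\mid\mathcal H_{r-1}]\lesssim k_r^2\sum\|\hat\lambda-\lambda\|_2^2=k_r^2\,o_p(r^{-1/2})=o_p(1)$. This is exactly where $k_r=o(r^{1/4})$ is used. Your ``$k_r^2\,o(r^{-1/4})\to0$'' is false in general under that schedule. Likewise, your Lyapunov bound via $k_r^{2+\delta}$ works only for $\delta\ge2$. The paper instead notes $|z_{t,r}|\le C_tk_R=o(\sqrt R)$, so the Lindeberg indicator is eventually zero.

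Your second worry is legitimate: converting pointwise $o_p$ rates into control of $R^{-1/2}\sum_r\E[\Delta_r\mid\mathcal F_{r-1}]$. The paper does not resolve it; it simply writes $\sqrt R\cdot R^{-1}\sum_r o_p(r^{-1/2})=o_p(1)$. The paper also passes from an $o_p$ bound on $\E[m_{t,r}^2\mid\mathcal H_{r-1}]$ to $\E[m_{t,r}^2]=o(1)$. And, as you do, it Ces\`aro-averages the $o_p(1)$ conditional-variance errors, whose summands are only bounded by $Ck_r$. None of these steps follows from rates holding separately for each $r$. For example, an error equal to $1$ with probability $p_r=1/\log r$ and $0$ otherwise is $o_p$ of every rate, yet its mean is far above $r^{-1/2}$.

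A truncation argument on ``the event where the rates hold'' needs an event of probability at least $1-\epsilon$ on which the rates hold uniformly over all large $r$, or else almost-sure or $L_1$ versions of the rates. So either add such uniformity to the hypotheses, or accept these steps at the paper's level of rigor. Your truncation of the estimated hazards is also needed: it keeps $\hat S_i$ and $\hat G_{i-1}$ in the bias prefactor away from zero, which the paper assumes only implicitly.
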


\vspace{-0.3cm}
\begin{proof}
See Appendix~\ref{app:proof_thm_asymptotic}.
\end{proof}
\vspace{-0.3cm}

The proof decomposes $\sqrt{R}(\hat{\tau}_t^{\mathrm{ASE}} - \tau_t) = \frac{1}{\sqrt{R}}\sum_{r=1}^R z_{t,r} + \frac{1}{\sqrt{R}}\sum_{r=1}^R m_{t,r}$ into an oracle term and a nuisance remainder. The oracle term $z_{t,r} = \phi_{t,r} - \tau_t$ forms a martingale difference sequence with respect to $\mathcal{H}_{r-1}$, since the adaptive policy $\pi_r$ is determined by $\mathcal{H}_{r-1}$ and the inverse censoring augmentation satisfies $\mathbb{E}[\xi(\gH_{r}, \eta_t) \mid A_r, X_r, \mathcal{H}_{r-1}] = 0$. The martingale central limit theorem (CLT) then yields asymptotic normality. The remainder $m_{t,r} = \hat{\phi}_{t,r} - \phi_{t,r}$ is controlled via the cause-specific structure of $G_{t-1}(X,a)$: the bias term vanishes at rate $o_p(r^{-1/2})$ under $o_p(r^{-1/4})$ hazard consistency, and the martingale fluctuation term is negligible by $L_2$-boundedness of the cross-fitted scores, thus requiring no Donsker conditions.

Importantly, the truncation does \emph{not} interfere with the optimal allocation in the limit. The reason is that, since $\pi^\star(X) \in [\alpha, 1-\alpha]$ and $1/k_r \to 0$, the clipping interval eventually contains $\pi^\star(X)$.

\begin{proposition}[Policy convergence under hazard consistency]
\label{prop:policy_convergence}
Suppose Assumptions~\ref{ass:causal}, \ref{ass:survival}, and \ref{ass:uniform_overlap} hold, and that the hazard estimators satisfy the rate conditions of Theorem~\ref{thm:adaptive_asymptotic_normality_cf}. Assume further that $\pi^\star(X) \in [\alpha, 1-\alpha]$ almost surely for some $\alpha > 0$. Then, the direct variance estimates satisfy $\|\hat{V}_{a,r-1} - V_a\|_2 = o_p(r^{-1/4})$ for each $a \in \{0,1\}$. Consequently, for any non-decreasing truncation sequence $k_r = o(r^{1/4})$, we have $k_r\|\pi_r - \pi^\star\|_2 = o_p(1)$.
\end{proposition}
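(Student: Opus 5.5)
The plan is to treat $\hat V_{a,r-1}$ as a smooth map of finitely many hazard estimates and propagate the $o_p(r^{-1/4})$ rates through it. After that, a Lipschitz argument for the Neyman ratio and for the clipping transfers the rate to the policy.

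\textbf{Step 1 (smooth functional).} By Eq.~\eqref{eq:adap_v_a}, $V_a(x)=F\bigl(\lambda^S_0(x,a),\dots,\lambda^S_{\tmax}(x,a),\lambda^G_0(x,a),\dots,\lambda^G_{\tmax-1}(x,a)\bigr)$ for a fixed function $F$. This $F$ is built from finitely many products $S_t=\prod_{i\le t}(1-\lambda^S_i)$, $G_{i-1}=\prod_{j<i}\bigl(1-\lambda^G_j/(1-\lambda^S_j)\bigr)$ and quotients $\lambda^S_i/(S_iG_{i-1})$. Assumption~\ref{ass:uniform_overlap} keeps the true arguments in a compact set $K$ on which $1-\lambda^S_i\ge \underline c^S$ and $1-\lambda^G_i/(1-\lambda^S_i)\ge \underline c^G$. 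Hence $S_i$ and $G_{i-1}$ are bounded below by positive constants there, and $F$ is $C^1$ with bounded gradient on a neighbourhood $K_\delta$ of $K$.

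\textbf{Step 2 (rate for $\hat V$).} To use Step 1, the estimates must lie in $K_\delta$. I would first clip the hazard estimators into $K_\delta$; this costs nothing because clipping to a convex set containing the truth does not increase the pointwise error. This is the main obstacle, since without clipping, or a sup-norm consistency assumption, an $L_2$ rate does not control $1/\hat S_i\hat G_{i-1}$. Alternatively I would argue on the event where the estimates stay in $K_\delta$ and show its complement has vanishing probability. On $K_\delta$ the mean value theorem gives pointwise
\[
|\hat V_{a,r-1}(x)-V_a(x)|\le L\Bigl(\textstyle\sum_{i}|\hat\lambda^S_{i,r-1}-\lambda^S_i|+\sum_i|\hat\lambda^G_{i,r-1}-\lambda^G_i|\Bigr)(x,a).
\]
Taking $L_2$ norms and using the triangle inequality over the finitely many indices yields $\|\hat V_{a,r-1}-V_a\|_2\le L(2\tmax+1)\cdot o_p(r^{-1/4})=o_p(r^{-1/4})$.

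\textbf{Step 3 (policy).} Write $g(u,v)=\sqrt u/(\sqrt u+\sqrt v)$. Then $\partial_u g = \sqrt v/\bigl(2\sqrt u(\sqrt u+\sqrt v)^2\bigr)$, which is bounded on any region where $u,v\ge c>0$. Under Assumption~\ref{ass:uniform_overlap}, $V_a(x)\ge S_0(x,a)^2\lambda^S_0(x,a)$, so a lower bound $V_a\ge c$ is needed. I expect this to follow from $\pi^\star\in[\alpha,1-\alpha]$ together with $V_0+V_1>0$; if it does not, I would impose it as uniform nondegeneracy. Again truncating $\hat V$ below at $c/2$, which does not enlarge errors, $g$ is Lipschitz in this regime and $\|\widetilde\pi_r-\tilde\pi^\star\|_2\lesssim\sum_a\|\hat V_{a,r-1}-V_a\|_2=o_p(r^{-1/4})$, where $\tilde\pi^\star=g(V_1,V_0)$.

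\textbf{Step 4 (truncation).} The clipping map $c_r(p)=\min\{1-1/k_r,\max\{1/k_r,p\}\}$ is $1$-Lipschitz. Since $\pi^\star\in[\alpha,1-\alpha]$ and $k_r\to\infty$, for $r$ large we have $1/k_r<\alpha$, so $c_r(\pi^\star)=\pi^\star$. (Here $\pi^\star$ equals $\tilde\pi^\star$ up to the $\alpha$-clip, which is inactive by the hypothesis $\pi^\star\in[\alpha,1-\alpha]$.) Thus $\|\pi_r-\pi^\star\|_2\le\|\widetilde\pi_r-\pi^\star\|_2=o_p(r^{-1/4})$. Multiplying by $k_r=o(r^{1/4})$ gives $k_r\|\pi_r-\pi^\star\|_2=o(r^{1/4})\,o_p(r^{-1/4})=o_p(1)$.
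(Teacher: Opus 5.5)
Your route is the paper's route: Lipschitz composition from the hazards to $\hat V_{a,r-1}$, Lipschitz continuity of the Neyman map $g$, non-expansiveness of the clip toward a point of $[1/k_r,1-1/k_r]$, and $o(r^{1/4})\cdot o_p(r^{-1/4})=o_p(1)$. The two caveats you flag are not technicalities that the paper handles with some other idea. They are exactly where the paper's proof is loose, and both need hypotheses beyond those stated. Two of your hedges, however, do not work.

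\textbf{Step 2.} The paper simply asserts that $o_p(r^{-1/4})$ hazard rates give $\|\hat V_{a,r-1}-V_a\|_2=o_p(r^{-1/4})$ ``by Lipschitz composition''. You are right that this needs the estimates kept in the overlap region. Your fallback (argue on the event that the estimates stay in $K_\delta$) fails under $L_2$ rates alone. That event, uniformly in $x$, need not have probability tending to one, and on the exceptional set of $x$ the factor $1/\hat G_{i-1}$ is uncontrolled. For example, drive $\hat G_{i-1}$ to $1/r$ on a set of mass $1/r$ where $\lambda^S_i$ is bounded below. This costs only $O(r^{-1/2})$ in hazard $L_2$ error but gives $\|\hat V_{a,r-1}-V_a\|_2\gtrsim r^{1/2}$. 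So the first claim genuinely requires your clipping (or sup-norm rates).

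The policy claim, which is what Theorem~\ref{thm:adaptive_asymptotic_normality_cf} actually uses, does not. Since $g$ takes values in $[0,1]$, if $V_0,V_1\ge c$ then $|g(\hat u,\hat v)-g(V_1,V_0)|\le C\min\{1,|\hat u-V_1|+|\hat v-V_0|\}$ for all admissible $\hat u,\hat v\ge 0$. Off the set of $x$ where all hazard errors are below $\delta$, bound by $1\le\delta^{-1}\sum_i|\hat\lambda_i-\lambda_i|$. This yields $|\widetilde\pi_r-\pi^\star|\le C'\sum_i|\hat\lambda_i-\lambda_i|$ pointwise, with no truncation of the hazards or of $\hat V$.

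\textbf{Step 3.} Your expectation here is wrong. So is the paper's corresponding claim that Assumption~\ref{ass:uniform_overlap} bounds $\sqrt{V_0}+\sqrt{V_1}$ away from zero: that assumption only keeps hazards away from one, and $V_a(x)\to 0$ as the event hazards at $x$ tend to zero. A bounded-below denominator alone would not make $g$ Lipschitz near $V_1=0$ anyway.

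Also, $\pi^\star$ is $\alpha$-clipped by definition, so $\pi^\star\in[\alpha,1-\alpha]$ is vacuous unless read as a condition on the unclipped ratio. You (in Step 4) and the paper both implicitly read it that way. Even under that reading, with $V_0+V_1>0$ everywhere, the conclusion can fail. Take $X\sim\mathrm{Unif}(0,1)$, $\lambda^S_i(x,a)=x/2$ for all $i$ and both arms, and identical censoring in both arms satisfying Assumption~\ref{ass:uniform_overlap}. Then the unclipped ratio is $1/2$ and $V_a(x)\lesssim x$. Let $\hat\lambda^S_{i,r-1}(\cdot,1)=\lambda^S_i(\cdot,1)+e_r$ with $e_r=r^{-1/4}/\log r$, all other nuisances exact. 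Then $\hat V_{1,r-1}\ge \hat S_{0,r-1}\hat\lambda^S_{0,r-1}\gtrsim e_r$. So for large $r$ we get $\pi_r\ge 3/4$ on $\{x\le\gamma e_r\}$ for a small constant $\gamma$. Hence $\|\pi_r-\pi^\star\|_2\gtrsim e_r^{1/2}$, and $k_r\|\pi_r-\pi^\star\|_2\to\infty$ for $k_r=r^{1/4}/\log r$.

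So a uniform nondegeneracy hypothesis must be added, e.g.\ $\inf_x V_a(x)>0$ for both arms. This is what $\inf_x(\sqrt{V_0}+\sqrt{V_1})>0$ together with the unclipped ratio in $[\alpha,1-\alpha]$ delivers, via $\sqrt{V_a}\ge\alpha(\sqrt{V_0}+\sqrt{V_1})$. With this hypothesis, and your clipping for the $\hat V$ claim, your argument is complete.
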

\vspace{-0.3cm}
\begin{proof}
See Appendix~\ref{app:proof_policy_convergence}.
\end{proof}
\vspace{-0.2cm}

\vspace{-0.3cm}
\subsection{Consistency guarantees under partial nuisance misspecification}
\label{sec:theory_ortho}
\vspace{-0.2cm}

As shown in Theorem~\ref{thm:adaptive_asymptotic_normality_cf}, the convergence rate of ASE is primarily governed by the estimation error of $\hat{\lambda}^S_t(X, \cdot)$ and $\hat{\lambda}^G_t(X, \cdot)$. This reflects a \emph{robustness} property formalized in the following corollary: misspecification of the censoring hazard $\hat{\lambda}^G_t(X, \cdot)$ alone does \emph{not} affect consistency, provided the event hazard $\hat{\lambda}^S_t(X, \cdot)$ is estimated consistently.

\begin{corollary}[Robustness to the censoring hazard]
\label{cor:orth_lambdaG}
Suppose that the event hazard estimators are consistent, i.e., $\|\hat\lambda^S_{t,r-1}-\lambda^S_t\|_2=o_p(1)$ for all $t$, and that the censoring hazard estimators satisfy $\|\hat\lambda^G_{t,r-1}-\lambda^G_t\|_2=o_p(1)$ for all $t$. Then, for each $t \leq t_{\max}$,
\vspace{-0.2cm}
{\footnotesize\begin{align} 
&|\hat\tau_{t, R}^{\mathrm{ASE}}-\tau_{t}| = O_p(R^{-1/2}) \\ 
&+ O_p\!\left(\Bigl(\sum_{i=0}^{t} \|\hat{\lambda}^S_{i,r-1}(a,\cdot)-\lambda^S_i(a,\cdot)\|_2\Bigr)^2 + \sum_{i=0}^{t}\|\hat{\lambda}^S_{i,r-1}(a,\cdot)-\lambda^S_i(a,\cdot)\|_2 \sum_{i=0}^{t-1}\|\hat{\lambda}^G_{i,r-1}(a,\cdot)-\lambda^G_i(a,\cdot)\|_2\right). \notag \vspace{-0.4cm}
\end{align}} 

\vspace{-0.4cm}
In particular, there is no standalone first-order term $O_p(\|\hat\lambda^G_t - \lambda^G_t\|_2)$; any perturbation in $\lambda^G$ enters only through its interaction with the event nuisance, reflecting the structure of misspecification of censoring hazard.
\end{corollary}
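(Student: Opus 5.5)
We follow the decomposition already used for Theorem~\ref{thm:adaptive_asymptotic_normality_cf}. Write
\[
\hat\tau^{\mathrm{ASE}}_{t,R}-\tau_t=\frac1R\sum_{r=1}^R z_{t,r}+\frac1R\sum_{r=1}^R\bigl(m_{t,r}-\E[m_{t,r}\mid\gH_{r-1}]\bigr)+\frac1R\sum_{r=1}^R\E[m_{t,r}\mid\gH_{r-1}],
\]
with $z_{t,r}=\phi_{t,r}-\tau_t$ the oracle martingale difference and $m_{t,r}=\hat\phi_{t,r}-\phi_{t,r}$ the nuisance remainder. We then bound the three pieces separately.

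\textbf{Oracle and fluctuation terms.} The oracle term has bounded conditional variance, since $\pi_r\ge 1/k_r$ and Assumption~\ref{ass:uniform_overlap} bounds the weights $1/(S_iG_{i-1})$. Provided $\E[z_{t,r}^2\mid\gH_{r-1}]$ stays stochastically bounded, a martingale Chebyshev bound gives $O_p(R^{-1/2})$; this is cruder than the CLT and needs no rate on $\pi_r$. For the centred remainder, we note that cross-fitted nuisances are $\gH_{r-1}$-measurable. Its conditional second moment is controlled by $\|\hat\eta-\eta\|_2=o_p(1)$ together with the uniform boundedness of the estimated scores, which follows from truncation plus overlap of the estimated hazards (clip them if necessary). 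It is therefore a martingale with variance $o_p(R^{-1})$, hence $o_p(R^{-1/2})$.

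\textbf{Conditional bias.} This is the key step. Fix $r$ and condition on $X_r=x$ and $A_r=a$. The regression terms $\hat S_t(x,1)-\hat S_t(x,0)$ are exact in $x$. Since $\pi_r$ is known, the IPW factor $(A-\pi_r)/(\pi_r(1-\pi_r))$ averages over $A$ to select each arm exactly, so the propensity contributes no error. It therefore suffices to compute, arm by arm,
\[
\hat S_t(x,a)-S_t(x,a)+\hat S_t(x,a)\,\E\bigl[\hat\xi\mid X=x,A=a\bigr].
\]
Using $\E[\mathbf 1(\tilde T=i,\Delta=1)\mid x,a]=S_{i-1}G_{i-1}\lambda^S_i$ and $\E[\mathbf 1(\tilde T\ge i)\mid x,a]=S_{i-1}G_{i-1}$ under non-informative censoring, each summand of $\E[\hat\xi]$ becomes
\[
\frac{S_{i-1}G_{i-1}\,(\lambda^S_i-\hat\lambda^S_i)}{\hat S_i\,\hat G_{i-1}}.
\]
We then apply the telescoping (Duhamel) identity
\[
\hat S_t-S_t=\hat S_t\sum_{i\le t}\frac{S_{i-1}(\lambda^S_i-\hat\lambda^S_i)}{\hat S_i}\cdot(\ldots),
\]
which is the discrete product-integral difference. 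With it, the bias becomes
\[
\hat S_t\sum_{i\le t}\frac{S_{i-1}(\lambda^S_i-\hat\lambda^S_i)}{\hat S_i}\Bigl(\frac{G_{i-1}}{\hat G_{i-1}}-1\Bigr)
\]
plus a second-order term in $(\hat\lambda^S-\lambda^S)$ coming from the mismatch between $S_{i-1}$ and the exact Duhamel weight $\hat S_{i-1}$. The factor $G_{i-1}/\hat G_{i-1}-1$ is itself a product difference and is bounded by $\sum_{j<i}\bigl(|\hat\lambda^G_j-\lambda^G_j|+|\hat\lambda^S_j-\lambda^S_j|\bigr)$, because $G$ depends on $\lambda^G/(1-\lambda^S)$. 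After Cauchy--Schwarz in $L_2(P_X)$, this yields exactly the two products in the statement, $(\sum\|\Delta\lambda^S\|)^2$ and $\sum\|\Delta\lambda^S\|\sum\|\Delta\lambda^G\|$. There is no standalone $\Delta\lambda^G$ term because every $G$-error multiplies a $\lambda^S$-error.

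\textbf{Main obstacle.} We expect the main difficulty to be getting this algebra exactly right in discrete time with ties. The identity must pair $S_{i-1}$ against $\hat S_i$ correctly, and the $\lambda^S$-dependence of $G$ must be accounted for. We would first verify the one-step case $t=0$ and then induct on $t$. Finally, averaging the per-round bounds over $r$ (a Ces\`aro average of the rates) gives the stated form, with the $O_p(R^{-1/2})$ term collecting the martingale pieces.
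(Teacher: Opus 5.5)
Your proposal is correct and follows the paper's route. The paper obtains the corollary directly from the oracle/fluctuation/bias decomposition in the proof of Theorem~\ref{thm:adaptive_asymptotic_normality_cf}, bounding the conditional bias through the product remainder $\mathcal R_{t,r,a}$ of Eq.~\ref{eq:Rtra} together with Eq.~\ref{eq:sum_rate}, and your Duhamel computation is a more explicit derivation of that remainder. Two small corrections to that computation: the identity $\hat S_t-S_t=\hat S_t\sum_{i\le t}S_{i-1}(\lambda^S_i-\hat\lambda^S_i)/\hat S_i$ is exact, so the extra mismatch term you anticipate does not arise; and the augmentation enters with a minus sign, not a plus, which is what produces the factor $1-G_{i-1}/\hat G_{i-1}$.

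One point to tighten is that truncation only bounds the IPW factor by $|W_r|\le k_r\to\infty$, so it does not make the scores uniformly bounded. The stochastic boundedness you need for the oracle and fluctuation terms must instead come from the theorem's policy condition $k_r\|\pi_r-\pi\|_2=o_p(1)$ with $\pi\in[\varepsilon,1-\varepsilon]$, which the paper's proof also inherits implicitly.
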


\vspace{-0.4cm}
\begin{proof}
See Appendix~\ref{app:orthogonality}.
\end{proof}
\vspace{-0.4cm}

For the no-ties case, we can even provide a double-robustness result (see Appendix~\ref{app:no-ties}, where $\Delta = \mathbf{1}(T < C)$ excludes simultaneous event and censoring times. In this setting, the censoring survival function simplifies to $G_{t-1}(x,a) = \prod_{i=0}^{t-1}(1 - \lambda^G_i(x,a))$, which is independent of the event hazard $\lambda^S$. As a result, the ASE is doubly-robust (i.e., to misspecification of both $\lambda^S$ and $\lambda^G$), and the remainder vanishes whenever either nuisance is consistently estimated, regardless of the other.

\section{Experimental results}
\label{sec:exp_results}
\vspace{-0.2cm}

\begin{figure}[H]
    \centering
    \includegraphics[width=1\linewidth]{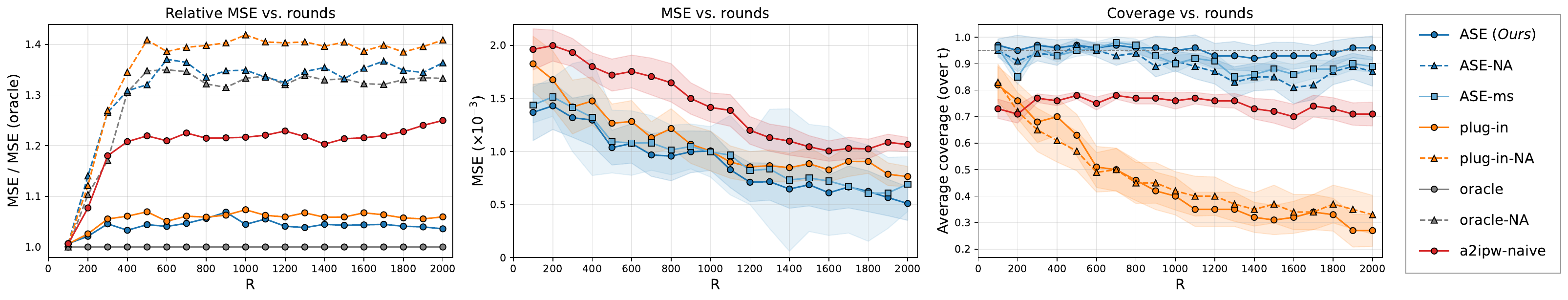}
    \vspace{-0.5cm}
    \caption{\textbf{Synthetic experiments:} 
    \protect\circledblue{a}~\emph{(left)}: Relative MSE with respect to Oracle; ASE achieves the lowest error among baselines. 
    \protect\circledblue{b}~\emph{(middle)}: MSE across rounds; ASE converges fastest while ASE-MS remains consistent under nuisance misspecification.
    \protect\circledblue{c}~\emph{(right)}: Empirical coverage of nominal 95\% CIs; ASE achieves nominal coverage while plug-in and A2IPW-NA\"{i}ve deteriorate as $R$ grows.}
    \label{fig:syn_exp_results}
    \vspace{-0.4cm}
\end{figure}

\vspace{-0.2cm}
\textbf{Baselines:} We compare our estimator (\textbf{ASE}) against several baselines: (1)~its non-adaptive counterpart (\textbf{ASE-NA}), which assigns the treatment uniformly at random;~(2)~the plug-in estimator from Eq.~\ref{eq:plugin}~(\textbf{plug-in}); and (3)~its non-adaptive version (\textbf{plug-in-NA}); (4)~the \textbf{A2IPW-NA\"{i}ve} estimator from~\citet{kato.2024} as the Neyman-allocation which ignores censoring in allocation policy; we also report two oracle baselines for comparison: (5) a fully use oracle-efficient estimator that uses the oracle nuisance function (\textbf{Oracle}) and (6) a non-adaptive version (\textbf{Oracle-NA}). To assess the benefit of robustness, we also evaluate a misspecified version of ASE, denoted as \textbf{ASE-MS}, in which the $\lambda^G_t(X, a)$ estimator is deliberately misspecified. 

\textbf{Implementation details:} We estimate the nuisance functions using LightGBM, trained with standard gradient-boosted tree objectives for classification (see Appendix~\ref{app:implementation_details} for details).

\textbf{Synthetic data:} We follow the common practice in adaptive experimentation~\citep{oprescu.2025, zhang.2025} and evaluate the effectiveness of our framework on both synthetic and semi-synthetic datasets to benchmark against ground-truth results. We provide the estimated average survival effect curves for synthetic and semi-synthetic data in Appendix~\ref{app:survival_curves}. In our experiments, we aim to evaluate three research questions: 

\circledblue{a}~\emph{How much do we benefit from adaptive allocation?}~$\Rightarrow$~\textbf{Effectiveness of A-optimal treatment allocation}: For this, we report the relative MSE with respect to the Oracle estimator in Fig.~\ref{fig:syn_exp_results}~(a). Our ASE achieves a relative MSE of approximately $1.05\times$ of the oracle at $R=2000$, which nearly matches oracle performance despite using estimated nuisances. In contrast, the non-adaptive baselines (ASE-NA and plug-in-NA) stabilize at $1.35\times$ and $1.40\times$ of the oracle MSE, respectively, maintaining a persistent efficiency gap. The a2ipw-NA\"{i}ve method falls in an intermediate position at roughly $1.20\times$--$1.25\times$, which confirms that our censoring-aware A-optimal allocation (and not just adaptive assignment alone) is the key driver of the efficiency gains over uniform randomization.

\circledblue{b}~\emph{Does ASE improve estimation accuracy over rounds?} $\Rightarrow$ \textbf{Estimation accuracy}: We thus plot the MSE against $R$ in Fig.~\ref{fig:syn_exp_results}~(b). First, ASE, ASE-MS, plug-in, and a2ipw-NA\"{i}ve converge to the true $\tau$. Out of them, ASE achieves the lowest error and thus performs best. The results further highlight the importance of our robustness property: ASE-MS remains consistent and even achieves a similar speed of convergence as ASE, despite one nuisance component being misspecified.

\circledblue{c}~\emph{Is ASE inferentially valid?} $\Rightarrow$ \textbf{Inferential validity}: We here report empirical coverage of nominal 95\% confidence intervals (CIs) in Fig.~\ref{fig:syn_exp_results}~(c). Consistent with our theoretical guarantee, our ASE and ASE-NA achieve nominal coverage. In contrast, the other methods fail to do so. For example, the plug-in, plug-in-NA, and a2ipw-NA\"{i}ve perform poorly as $R$ grows, which is due to finite-sample bias and censoring bias.

\textbf{Semi-synthetic data:} We also evaluate ASE on a semi-synthetic dataset based on the Twins dataset as in \cite{frauen.2025, louizos.2017} to demonstrate the applicability to medical data. The results are in Appendix~\ref{app:twins_result}.

\textbf{Results:} Our findings are consistent with the synthetic setting: (i) adaptive treatment allocation policy improves estimation efficiency, (ii) ASE achieves superior coverage and consistency, and (iii) robustness holds under partial nuisance misspecification.

\textbf{Conclusion:} Our work highlights the potential broader impact of adaptive experimentation to improve clinical trials with survival data, especially in settings where efficient evidence generation is critical. \textbf{Limitations:} Future work could be extended to multi-arm trials and demonstrate its practical value in real clinical settings with limited and heterogeneous patient populations.

\newpage
\section*{Acknowledgement}
Our research is supported by the DAAD programme Konrad Zuse Schools of Excellence in Artificial Intelligence, sponsored by the Federal Ministry of Research, Technology and Space.

\bibliographystyle{abbrvnat}
\bibliography{literature}


\newpage
\appendix
\section{Notation}
\label{app:notation}
{
\centering
\captionof{table}{Notation}
\resizebox{\textwidth}{!}{%
\renewcommand{\arraystretch}{1.5}
\arrayrulecolor{gray!30}
\begin{tabular}{ll}
\arrayrulecolor{black}\toprule\arrayrulecolor{gray!30}
\textbf{Symbol} & \textbf{Description} \\
\arrayrulecolor{black}\toprule\arrayrulecolor{gray!30}
$X$ & Observed covariates, $X \in \mathcal{X} \subseteq \mathbb{R}^p$ \\ \hline
$A$ & Binary treatment, $A \in \{0, 1\}$ \\ \hline
$T$ & Event time of interest (e.g., overall survival time (OS), disease-free survival time (DFS)) \\ \hline
$C$ & Censoring time (e.g., dropout) \\ \hline
$\widetilde{T}$ & Observed time, $\widetilde{T} = \min\{T, C\}$ \\ \hline
$\Delta$ & Event indicator, $\Delta = \mathbf{1}(T \leq C)$ \\ \hline
$\mathcal{O}$ 
& Observed population, $\mathcal{O} = (X, A, \widetilde{T}, \Delta)$ \\ \hline
$\gH_r$ 
& Observed accumulated data at round $r$, $\gH_r = \left\{(x_1,a_1,\tilde{t}_1,\delta_1), \dots, (x_{r},a_{r}, \tilde{t}_{r},\delta_{r}) \right\}$\\ \hline
$\vtau$, $\evtau_t$, $\tau_t(x)$
& \makecell[l]{Average survival effect curve vector $\mathbf{\tau} = (\tau_0, \dots, \tau_{t_{\max}})^\top$ \\and its $t$-th entry $\tau_t = \mathbb{E}[S_t(X,1) - S_t(X,0)]$ \\
Conditional ATE at horizon $t$: $\tau_t(x) = \mathbb{P}(T(1)>t \mid X=x) - \mathbb{P}(T(0)>t \mid X=x)$} \\ \hline
$\hat{\tau}^{\mathrm{ASE}}_{t,R}$ 
& ASE estimator at horizon $t$ after $R$ rounds \\ \hline
$\bm{S}(x,a)$, $S_t(x,a)$
& \makecell[l]{Survival function vector $\bm{S}(x,a) = (S_0(x,a),\dots,S_{t_{\max}}(x,a))^\top$ \\ and its $t$-th entry $S_t(x,a) = \mathbb{P}(T > t \mid X=x, A=a)$} \\ \hline
$\hat{\bm{S}}_r(x,a)$, $\hat{S}_{t,r}(x,a)$, $\hat{S}^{(-J_r)}_{t,r}(x,a)$
& \makecell[l]{Estimated survival function vector, its $t$-th entry, \\ and its cross-fitted version using fold $\mathcal{H}_{r-1}^{(-J_r)}$} \\ \hline
$\bm{G}(x,a)$, $G_t(x,a)$
& \makecell[l]{Censoring survival function vector $\bm{G}(x,a) = (G_0(x,a),\dots,G_{t_{\max}}(x,a))^\top$ \\ and its $t$-th entry $G_t(x,a) = \mathbb{P}(C > t \mid X=x, A=a)$} \\ \hline
$\hat{\bm{G}}_r(x,a)$, $\hat{G}_{t,r}(x,a)$, $\hat{G}^{(-J_r)}_{t,r}(x,a)$
& \makecell[l]{Estimated censoring survival function vector, its $t$-th entry, \\ and its cross-fitted version using fold $\mathcal{H}_{r-1}^{(-J_r)}$} \\ \hline
$\bm{\lambda}^S(x,a)$, $\lambda^S_t(x,a)$
& \makecell[l]{Event hazard vector $\bm{\lambda}^S(x,a) = (\lambda^S_0(x,a),\dots,\lambda^S_{t_{\max}}(x,a))^\top$ \\ and its $t$-th entry $\lambda^S_t(x,a) = \mathbb{P}(\widetilde{T}=t,\Delta=1\mid\widetilde{T}\geq t, X=x, A=a)$} \\ \hline
$\hat{\bm{\lambda}}^S_r(x,a)$, $\hat{\lambda}^S_{t,r}(x,a)$, $\hat{\lambda}^{S,(-J_r)}_{t,r}(x,a)$
& \makecell[l]{Estimated event hazard vector, its $t$-th entry, \\ and its cross-fitted version using fold $\mathcal{H}_{r-1}^{(-J_r)}$} \\ \hline
$\bm{\lambda}^G(x,a)$, $\lambda^G_t(x,a)$
& \makecell[l]{Censoring hazard vector $\bm{\lambda}^G(x,a) = (\lambda^G_0(x,a),\dots,\lambda^G_{t_{\max}}(x,a))^\top$ \\ and its $t$-th entry $\lambda^G_t(x,a) = \mathbb{P}(\widetilde{T}=t,\Delta=0\mid\widetilde{T}\geq t, X=x, A=a)$} \\ \hline
$\hat{\bm{\lambda}}^G_r(x,a)$, $\hat{\lambda}^G_{t,r}(x,a)$, $\hat{\lambda}^{G,(-J_r)}_{t,r}(x,a)$
& \makecell[l]{Estimated censoring hazard vector, its $t$-th entry, \\ and its cross-fitted version using fold $\mathcal{H}_{r-1}^{(-J_r)}$} \\ \hline
$\eta_t$, $\hat{\eta}_{t,r-1}$, $\hat\eta_{t,r-1}^{(-J_r)}$ 
& \makecell[l]{Oracle nuisance vector $\eta_t = \{\lambda^S_i(\cdot,\cdot), \lambda^G_i(\cdot,\cdot)\}_{i=0}^t$ \\  \hline
and its estimate from $\mathcal{H}_{r-1}$: $\hat{\eta}_{t,r-1} = \{\hat{\lambda}^S_{i,r-1}(\cdot,\cdot), \hat{\lambda}^G_{i,r-1}(\cdot,\cdot)\}_{i=0}^t$\\
and its cross-fitted version trained on fold $\mathcal{H}_{r-1}^{(J_r)}$} \\ \hline
$\bm{\xi}(\mathcal{O}, \eta_{\tmax})$ 
& IPCW augmentation vector: $\bm{\xi}(\mathcal{O}, \eta_{\tmax}) = (\xi(\mathcal{O}, \eta_0), \dots, \xi(\mathcal{O}, \eta_\tmax))^\top$\\ \hline
$\xi(\mathcal{O}, \eta_t)$, $\xi(\mathcal{H}_r, \eta_t)$ 
& \makecell[l]{IPCW augmentation term in the EIF from\\ observed population and accumulated data, respectively}\\ \hline
$\hat{\xi}(\mathcal{O}, \eta_t)$, $\hat{\xi}(\mathcal{H}_r, \eta_t)$, $\hat{\xi}(\mathcal{H}_r, \eta_t^{\mathrm{cf}})$ 
& \makecell[l]{Estimated IPCW augmentation term in the EIF from\\ observed population and accumulated data,\\ and with sequential cross-fitting, respectively.}\\ \hline
$\bm{\phi}(\cdot;\tau,\eta_{\tmax})$, $\phi_t(\cdot;\tau_t,\eta_t)$
& \makecell[l]{EIF vector $\bm{\phi} = (\phi_0,\dots,\phi_{t_{\max}})^\top$ and its $t$-th entry (non-centered EIF at horizon $t$, see Eq.~\ref{eq:eif})}\\ \hline
$\phi_{t, r}$, $\hat{\phi}_{t,r}$, $\hat{\phi}_{t,r}^{\mathrm{cf}}$  & \makecell[l]{Denotes non-centered EIF with oracle nuisances $\phi_{t, r} = \phi(\cdot;\tau_t, \pi_r, \eta_t)$\\
non-centered EIF with estimated nuisances $\hat{\phi}_{t, r} = \phi(\cdot;\tau_t, \pi_r, \hat{\eta}_{t, r})$\\
$\hat{\phi}_{t,r}^{\mathrm{cf}}$ denotes the cross-fitted nuisance at round $r$} \\ \hline
$\pi(x)$ 
& Treatment assignment probability: $\mathbb{P}(A=1 \mid X=x)$ \\ \hline
$\pi^\star(x)$ 
& A-optimal treatment allocation policy (Proposition~\ref{prop:opt_assignment}) \\ \hline
$\pi_r(x \mid \mathcal{H}_{r-1})$ 
& Adaptive policy at round $r$ \\ \hline
$\widetilde{\pi}_r(x \mid \mathcal{H}_{r-1})$ 
& Plug-in (pre-truncation) policy at round $r$ \\ \hline
$\Pi_\alpha$ 
& Feasible policy class: $\{\pi : \alpha \leq \pi(x) \leq 1-\alpha\}$ \\ \hline
$R$, $R_0$ 
& Total number of rounds and burn-in period (initial exploration) \\ \hline
$k_r$ 
& Truncation parameter at round $r$ (ensures $\pi_r \in [1/k_r, 1-1/k_r]$) \\ \hline
$J_r$ 
& Sequential cross-fitting fold index, $J_r := r \bmod 2$ \\ \hline
$\mathcal{H}_{r-1}^{(j)}$
& Temporal cross-fitting fold $j \in \{0,1\}$ \\ \hline
$\Sigma_{\mathrm{eff}}(\pi)$, $\Sigma_{a}(X)$ 
& \makecell[l]{Semiparametric efficiency bound matrix\\
$\Sigma_{\mathrm{eff}}(\pi) = \mathbb{E}\bigl[(\bm{\phi}(\mathcal{O};\tau,\eta_{t_{\max}}) - \tau)(\bm{\phi}(\mathcal{O};\tau,\eta_{t_{\max}}) - \tau)^\top\bigr] \in \mathbb{R}^{(t_{\max}+1)\times(t_{\max}+1)}$\\
$\Sigma_a(X) = \Var(\bm{S}(X,a) \odot \bm{\xi}(\gO;\bm{\eta})\mid X, A=a)\in \mathbb{R}^{(t_{\max}+1)\times(t_{\max}+1)}$
} \\ \hline
$v_{t,a}(x)$, $V_a(x)$ 
& \makecell[l]{$t$-th component of $\Sigma_a(X)$,\\
Variance target $V_a(x) = \sum_{t=0}^{t_{\max}} v_{t,a}(x) = \mathrm{tr}(\Sigma_a(x))$}\\ \hline
$V_{\mathrm{eff},t}(\pi)$ & and its $t$-th diagonal entry $\Sigma_{\mathrm{eff}}(\pi) = [\Sigma_{\mathrm{eff}}(\pi)]_{tt}$ \\ \hline
$\|\cdot\|_2$ 
& $L_2$ norm: $\|g\|_2 := \mathbb{E}[g(X)^2]^{1/2}$ \\ \hline
$o_p(1),\, O_p(\cdot)$ 
& Standard probabilistic asymptotic order notation \\ 
\arrayrulecolor{black}\bottomrule\arrayrulecolor{gray!30}
\end{tabular}
}
}

\newpage
\section{Extended related work}
\label{app:extended_related_work}

\begin{table}[htbp]
\centering
\begin{minipage}{\linewidth}
\resizebox{1\textwidth}{!}{%
\renewcommand{\arraystretch}{1.5}
\footnotesize
\begin{tabular}{llccc}
\arrayrulecolor{black}\toprule
\textbf{Literature stream}
& \textbf{Allocation objective}
& \textbf{Censored outcome}
& \textbf{Efficiency bound}
& \textbf{Model-agnostic} \\
\hline
RAR / CARA designs~\citep{hu.2006, rosenberger.2001, thompson.1933}
& welfare optimization & \xmark & \xmark & \xmark$^\dagger$ \\ 
CARA designs for survival~\citep{mukherjee.2025}
& welfare optimization & \cmark & \xmark & \xmark$^\dagger$\\
\hline
Adaptive ATE estimation (two-stage)~\citep{hahn.2011}
& estimation of outcome variability & \xmark & \cmark & \cmark \\
Adaptive ATE estimation (multi-round)~\citep{cook.2024, dai.2023, kato.2024}
& estimation of outcome variability & \xmark & \cmark & \cmark \\
\midrule
\textbf{Ours}
& estimation of survival \& censoring variability & \cmark & \cmark & \cmark\\
\bottomrule
\end{tabular}
}
\parbox{\linewidth}{\raggedright{\tiny{\textsuperscript{$\dagger$} RAR/CRAR rely on parametric response models; \citet{mukherjee.2025} is Cox proportional hazards (PH) model specific.}}}
\vspace{0.01cm}
\caption{\textbf{Overview of adaptive randomization literature streams.}}
\label{tab:related_work}
\vspace{-0.6cm}
\end{minipage}
\end{table}

\textbf{Estimation of treatment effects with censored outcomes.} A related line of work extends causal inference to survival settings, focusing on estimating treatment effects such as the CATE under censoring~\citep{hu.2021}. Methods here divide broadly into (i)~model-based learners, including tree-based~\citep{cui.2023, henderson.2020, tabib.2020, zhang.2017} and neural-network-based approaches~\citep{curth.2021, katzman.2018, schrod.2022}, and (ii)~(orthogonal) meta-learners specifically designed for censored time-to-event data~\citep{frauen.2025, gao.2022, vanderlaan.2003, xu.2024, xu.2022}. While these methods provide principled estimation under censoring, they operate under a \emph{fixed} treatment assignment mechanism and focus on point estimation, typically requiring censoring-independence assumptions that limit applicability. A complementary line develops inferential methods via conformal inference~\citep{candes.2023, davidov.2025, gui.2024}, thereby targeting prediction intervals for individual treatment effects rather than population-level causal estimands. $\Rightarrow$ Neither stream addresses adaptive experimental design, and neither leverages semiparametric efficiency bounds to guide treatment allocation under censoring.

\textbf{Bayesian response-adaptive randomization.} In parallel, Bayesian RAR (BRAR) and Thompson sampling~\citep{kaufmann.2017, lattimore.2020} have seen substantial uptake in both the machine learning and clinical trial communities~\citep{barker.2009, kim.2011}, driven by empirical gains in system performance and patient welfare. However, BRAR methods are primarily designed to maximize reward or accelerate identification of superior treatments, rather than to minimize the asymptotic variance of a causal estimand. As a result, they do not target the semiparametric efficiency bound, and their allocation rules do not account for the additional variance inflation induced by censoring. Our framework is complementary in spirit but distinct in objective: we derive allocation policy that are provably optimal with respect to the efficiency bound for the average survival effect curve.

\textbf{CRAR with survival outcomes:} The closest work to ours is~\citet{mukherjee.2025}, who propose CRAR designs for survival data under a proportional hazards assumption, using a sequentially estimated Cox model to drive allocation. However, their approach is driven by model-based estimates rather than optimizing allocation for statistical efficiency, and therefore does not target the semiparametric efficiency bound.

\textbf{Adaptive allocation for efficient ATE estimation.} There are several extensions that add principled policy truncation with anytime-valid inference~\citep{cook.2024}, and establish sublinear or logarithmic Neyman regret via clipped and optimistic algorithms~\citep{chen.2025,dai.2023, neopane.2024, neopane.2025, noarov.2025}. Further work covers covariate-adaptive and off-policy settings~\citep{lee.2024b, li.2024} and develops TMLE-based guarantees for sequential experiments~\citep{zhang.2025}.

\textbf{Confidence sequences and anytime-valid inference.} Unlike fixed-time confidence intervals, a confidence sequence (CS) provides a time-uniform coverage guarantee, thereby enabling valid inference at any data-dependent stopping time~\citep{howard.2020}. Martingale methods and empirical Bernstein inequalities have been central to recent constructions of CSs for influence-function-based estimators~\citep{cook.2024, waudby-smith.2024}, and \citet{lindon.2026} extends this to the delayed outcome regime. Our approach builds on these tools to construct an asymptotic CS for the ASE estimator, where the key challenge is simultaneously accommodating adaptive allocation, right censoring, and the almost-sure consistency requirements that time-uniform inference imposes on the nuisance remainder.

\newpage
\section{Additional illustrative example}
\label{app:add_illu_eg}

Consider an example with \emph{arm-dependent censoring}, the two treatment arms experience different rates of dropout, as when treatment improves retention ($G_t(X,1) > G_t(X,0)$) or induces side effects leading to dropout ($G_t(X,1) < G_t(X,0)$). Suppose the censoring survival functions have time-invariant ratio, $G_{t}(x,0)/G_{t}(x,1) = g(x)$ for all $t$, while event dynamics $S_t(x,a) = s_{t,a}$ may differ across arms. Proposition~\ref{prop:opt_assignment} yields the closed-form optimum $\pi^\star(x) = \mathrm{clip}_\alpha\!\left(\frac{\sqrt{K_1}}{\sqrt{K_0/g_t(x)}+\sqrt{K_1}}\right)$, where $K_a(x)$ captures arm-$a$ event dynamics and is censoring-independent up to a common factor that cancels in $\pi^\star$ (proof as following). Figure~\ref{fig:add_illu_ratio} plots $\pi^\star$ against difference value of $g(x) = G_t(X,0)/G_t(X,1)$, with closed form $\pi^\star(x) = \sqrt{g(x)}/(\sqrt{\kappa}+\sqrt{g(x)})$ where $\kappa := K_0/K_1$ captures the event-information ratio across arms. Under heavier censoring in one arm, allocating more units to that arm compensates for the censoring variance inflation, preserving estimator efficiency.

\begin{figure}[htbp]
    \centering
    \includegraphics[width=0.5\linewidth]{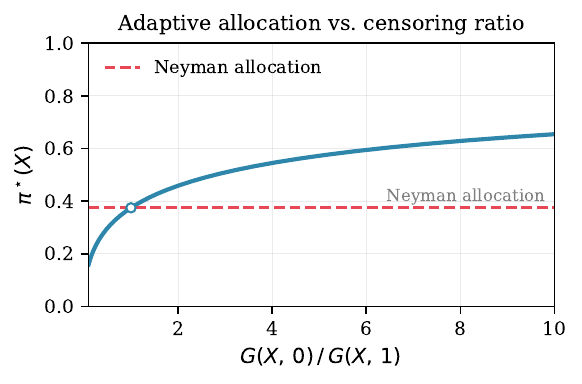}
    \vspace{-0.2cm}
    \caption{Optimal allocation $\pi^\star$ under arm-dependent censoring ratio $g(x)=G_t(x,0)/G_t(x,1)$: $\pi^\star$ departs from uniform allocation $1/2$ and Neyman-allocation as the censoring asymmetry increases.}
    \label{fig:add_illu_ratio}
\end{figure}

\textbf{Proof.} We derive the closed-form expression for $\pi^\star$ claimed in the motivating illustration above. Assume that the event dynamics may differ across arms and that the censoring survival functions have a time-invariant ratio: $S_t(x, a) = s_{t, a}$, $G_{i-1}(x,0)/ G_{i-1}(x,1) = g(x)$ for all $i$.
First, we write out the expression from Eq.~\ref{eq:optimal_pi} as
\begin{align}
v_{t,a}(x)
\;=\;
S_t(x,a)^2 \sum_{i=0}^{t}
\frac{\lambda_i^S(x,a)}{S_i(x,a)\,G_{i-1}(x,a)}
\;=\;
\frac{1}{g(x)^{\mathbf{1}\{a=0\}}} \cdot s_{t, a}^2 \sum_{i=0}^{t} \frac{\lambda_{i, a}}{s_{i, a}\, G_{i-1}(x,1)},
\end{align}
where we pulled the time-constant factor $g(x)$ out of the sum (for $a=0$) using the ratio assumption $G_{i-1}(x,0) = g(x)\, G_{i-1}(x,1)$. Then, we prove that
\begin{align}
s_{t,a}^2 \sum_{i=0}^{t} \frac{\lambda_{i, a}}{s_{i, a}}
\;=\;
s_{t,a}(1-s_{t,a}).
\label{eq:telescope_identity}
\end{align}
The key observation is that $\lambda_{i, a}/s_{i, a}$ is a first-order difference of $1/s_{i, a}$. Using $s_{i, a} = s_{i-1, a}(1-\lambda_{i, a})$, we have $s_{i-1, a} - s_{i, a} = s_{i-1, a}\lambda_{i, a}$, so
\begin{align}
\frac{1}{s_{i, a}} - \frac{1}{s_{i-1, a}}
\;=\;
\frac{s_{i-1, a}-s_{i, a}}{s_{i, a}\,s_{i-1, a}}
\;=\;
\frac{\lambda_{i, a}}{s_{i, a}}
\qquad (i\ge 1).
\end{align}
Then, we have
\begin{align}
\sum_{i=0}^{t} \frac{\lambda_{i, a}}{s_{i, a}}
\;=\;
\frac{1}{s_{t, a}} - \frac{1}{s_{-1, a}}
\;=\;
\frac{1-s_{t, a}}{s_{t, a}},
\end{align}
and multiplying by $s_{t, a}^2$ yields Eq.~\ref{eq:telescope_identity}. By absorbing the shared time-varying factor $G_{i-1}(x,1)$ into a common weighting that cancels in the final ratio, we write
\begin{align}
V_a(x)
\;=\;
\sum_{t=0}^{t_{\max}} v_{t,a}(x)
\;=:\;
\frac{K_a(x)}{g(x)^{\mathbf{1}\{a=0\}}},
\qquad
K_a(x) := \sum_{t=0}^{t_{\max}} s_{t,a}^2 \sum_{i=0}^{t} \frac{\lambda_{i,a}}{s_{i,a}\, G_{i-1}(x,1)},
\end{align}
so that $V_1(x) = K_1(x)$ and $V_0(x) = K_0(x)/g(x)$. Substituting into Proposition~\ref{prop:opt_assignment}, we arrive at
\begin{align}
\frac{\sqrt{V_1(x)}}{\sqrt{V_1(x)}+\sqrt{V_0(x)}}
\;=\;
\frac{\sqrt{K_1}}{\sqrt{K_1}+\sqrt{K_0/g(x)}}.
\end{align}
Applying the overlap constraint, we yield
\begin{align}
\pi^\star(x)
\;=\;
\mathrm{clip}_\alpha\!\left(
\frac{\sqrt{K_1(x)}}{\sqrt{K_0(x)/g(x)}+\sqrt{K_1(x)}}
\right).
\end{align}
\qed

\newpage
\section{Additional theoretical results for optimal data acquisition}
\label{app:add_optimal}
In the main text, we focus on \emph{A-optimal} allocation, i.e., minimizing $\mathrm{tr}(\Sigma_{\mathrm{eff}}(\pi))$, which targets small marginal variances across horizons and admits the closed-form solution in Proposition~\ref{prop:opt_assignment}. Alternative classical criteria from optimal experimental design are \emph{D-optimality} and \emph{E-optimality}, which target the joint uncertainty of the full survival effect curve vector $\tau = (\tau_0, \ldots, \tau_{t_{\max}})^\top$~\citep{atkinson.2023}, or control the worst-case linear combination of survival effect curve contrasts. We extend our derivations to D-optimality and E-optimality below. 

\subsection{D-optimality}

\begin{definition}[D-optimal allocation policy] \itshape
A feasible allocation policy $\pi^D$ is \emph{D-optimal} if \begin{align}\pi^D \in \arg\min_{\pi \in \Pi_\alpha} \det(\Sigma_{\mathrm{eff}}(\pi)) \quad \text{equivalently} \quad \pi^D \in \arg\min_{\pi \in \Pi_\alpha} \log \det(\Sigma_{\mathrm{eff}}(\pi)),\end{align} where $\Pi_\alpha = \{\pi : \alpha \leq \pi(x) \leq 1-\alpha \text{ for all } x\}$.
\end{definition}

\textbf{Interpretation:} The confidence ellipsoid for $\tau$ has volume proportional to $\det(\Sigma_{\mathrm{eff}}(\pi))^{1/2}$ (up to constants independent of $\pi$). Hence, D-optimal allocation tends to yield small \emph{joint} uncertainty across all time horizons simultaneously, which is desirable when the full survival effect curve is the object of interest rather than any individual $\tau_t$.

\begin{theorem}[D-optimal allocation policy]
\label{thm:d_optimal}
Assume $\Pi_\alpha \neq \emptyset$ and that Assumptions~\ref{ass:causal} and~\ref{ass:survival} hold. Then, any D-optimal policy $\pi^D$ satisfies, for a.e.\ $x$, 
\begin{align}
\pi^D(x) = \mathrm{clip}_\alpha\!\left(\frac{\sqrt{\mathrm{tr}\!\left(\Sigma_{\mathrm{eff}}(\pi^D)^{-1} \Sigma_1(x)\right)}}{\sqrt{\mathrm{tr}\!\left(\Sigma_{\mathrm{eff}}(\pi^D)^{-1} \Sigma_1(x)\right)} + \sqrt{\mathrm{tr}\!\left(\Sigma_{\mathrm{eff}}(\pi^D)^{-1} \Sigma_0(x)\right)}}\right).
\end{align} 
The characterization is a fixed-point equation since $\Sigma_{\mathrm{eff}}(\pi^D)$ depends on $\pi^D$.\end{theorem}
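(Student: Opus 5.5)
We treat $\pi\mapsto \log\det\Sigma_{\mathrm{eff}}(\pi)$ as a functional on the convex set $\Pi_\alpha$ and derive pointwise first-order (KKT) conditions. By Theorem~\ref{thm:eff_bound}, $\Sigma_{\mathrm{eff}}(\pi)=\E[\Sigma_1(X)/\pi(X)+\Sigma_0(X)/(1-\pi(X))]+\E[b b^\top]$. This map is matrix-convex in $\pi$, since $p\mapsto 1/p$ and $p\mapsto 1/(1-p)$ are convex on $(0,1)$ and $\Sigma_a(x)\succeq 0$. It also takes values in the positive definite cone; we assume nondegeneracy, which we argue from Assumption~\ref{ass:survival}, or else we simply suppose $\Sigma_{\mathrm{eff}}$ is invertible so that the log-determinant is finite. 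Because $\log\det$ is concave, the composition need not be convex. We therefore only claim a \emph{necessary} condition, which is all the theorem states ("any D-optimal policy satisfies").

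\textbf{Step 1 (directional derivative).} Fix a D-optimal $\pi^D$ and an admissible perturbation $\pi^D+\epsilon h$ with $h$ bounded. Using $\frac{d}{d\epsilon}\log\det M(\epsilon)=\mathrm{tr}(M^{-1}\dot M)$ and differentiating under the expectation (dominated convergence, justified by $\pi\in[\alpha,1-\alpha]$), we get
\begin{align}
\frac{d}{d\epsilon}\Big|_{\epsilon=0}\log\det\Sigma_{\mathrm{eff}}(\pi^D+\epsilon h)=\E\!\left[h(X)\left(-\frac{w_1(X)}{\pi^D(X)^2}+\frac{w_0(X)}{(1-\pi^D(X))^2}\right)\right],
\end{align}
where $w_a(x):=\mathrm{tr}(\Sigma_{\mathrm{eff}}(\pi^D)^{-1}\Sigma_a(x))\ge 0$. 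Nonnegativity holds because $w_a(x)$ is the trace of a product of a positive definite and a positive semidefinite matrix.

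\textbf{Step 2 (pointwise KKT).} Optimality requires this derivative to be $\ge 0$ for every feasible direction. Localizing $h$ to sets of the form $\{x: \pi^D(x)\in(\alpha,1-\alpha),\ g(x)>0\}$, and to the two boundary sets with one-signed $h$, gives the following a.e.\ conditions. Let $g(p,x):=-w_1/p^2+w_0/(1-p)^2$. Then $g(\pi^D(x),x)=0$ on the interior, $g\ge0$ where $\pi^D=\alpha$, and $g\le 0$ where $\pi^D=1-\alpha$.

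\textbf{Step 3 (solve the scalar problem).} For fixed $x$, $g(\cdot,x)$ is the derivative of the strictly convex scalar function $p\mapsto w_1/p+w_0/(1-p)$ (strictly convex when $w_0+w_1>0$). So the three conditions in Step 2 are exactly the KKT conditions for minimizing this function over $[\alpha,1-\alpha]$. The unconstrained minimizer is $\sqrt{w_1}/(\sqrt{w_1}+\sqrt{w_0})$, as in Proposition~\ref{prop:opt_assignment}. By convexity, the constrained minimizer is its clip to $[\alpha,1-\alpha]$, which gives the displayed formula. If $w_0+w_1=0$ at some $x$, the policy is unconstrained there and the formula is read with any convention; we note this as a measure-zero or degenerate caveat.

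\textbf{Main obstacle.} The hardest part is rigor in Step 2. We must pass from the integral inequality to the pointwise a.e.\ statements, which we do by taking $h=\pm\mathbf 1_B$ on measurable sets $B$ and intersecting with the interior or boundary regions. We must also justify differentiability of $\log\det$ at $\pi^D$, which requires that $\Sigma_{\mathrm{eff}}(\pi^D)$ be invertible. Existence of a minimizer is not needed, since the statement is conditional on $\pi^D$ being D-optimal. Finally, we remark that $w_a$ depends on $\pi^D$ through $\Sigma_{\mathrm{eff}}(\pi^D)^{-1}$, which is why the characterization is a fixed-point equation.
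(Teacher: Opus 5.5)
Your proposal is correct and follows the same route as the paper. The paper justifies the result only by the remark that differentiating $\log\det\Sigma_{\mathrm{eff}}(\pi)$ in $\pi(x)$ via $\mathrm{d}\log\det A=\mathrm{tr}(A^{-1}\mathrm{d}A)$ replaces $\mathrm{tr}(\Sigma_a(x))$ by $\mathrm{tr}(\Sigma_{\mathrm{eff}}(\pi^D)^{-1}\Sigma_a(x))$ in the first-order condition of Proposition~\ref{prop:opt_assignment}, then solving and clipping; your version is more careful (handling the box constraint via pointwise KKT, claiming only necessity, flagging invertibility of $\Sigma_{\mathrm{eff}}$ and the case $w_0+w_1=0$), with the one small fix that interior perturbations should be localized to $\{\alpha+\delta\le\pi^D\le 1-\alpha-\delta\}$ with $\delta\downarrow 0$ so that $\pi^D\pm\epsilon\mathbf{1}_B$ stays in $\Pi_\alpha$.
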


\begin{remark} The form of $\pi^D$ parallels the A-optimal rule in Proposition~\ref{prop:opt_assignment}, but with $V_a(X) = \mathrm{tr}(\Sigma_a(X))$ replaced by $\mathrm{tr}(\Sigma_{\mathrm{eff}}(\pi^D)^{-1} \Sigma_a(x))$. This difference has a clean calculus origin: differentiating $\mathrm{tr}(\Sigma_{\mathrm{eff}}(\pi))$ w.r.t.\ $\pi(x)$ yields $\mathrm{tr}(\Sigma_a(x))$, whereas differentiating $\log\det(\Sigma_{\mathrm{eff}}(\pi))$ via the identity $\frac{\diff}{\diff A}\log\det(A) = \mathrm{tr}(A^{-1}dA)$ yields $\mathrm{tr}(\Sigma_{\mathrm{eff}}^{-1} \Sigma_a(x))$. The $\Sigma_{\mathrm{eff}}^{-1}$ weighting introduces a global dependence on the full covariance structure: directions already well-estimated are upweighted, redirecting allocation toward reducing joint uncertainty across horizons. This is precisely what prevents $\pi^D$ from admitting a closed-form expression, since $\Sigma_{\mathrm{eff}}(\pi^D)$ depends on $\pi^D$ itself, making the characterization a fixed-point equation. \end{remark}

\textbf{Computation via fixed-point iteration:} Unlike the A-optimal rule in Proposition~\ref{prop:opt_assignment}, the D-optimal policy depends on $\Sigma_{\mathrm{eff}}(\pi^D)^{-1}$ and has therefore no closed-form solution. A simple solver is a fixed-point iteration:
\begin{enumerate}
\item Initialize $\pi^{(0)}$ (e.g., the A-optimal policy from Proposition~\ref{prop:opt_assignment}).
\item For $k = 0, 1, 2, \ldots$: \\
(a) Estimate $\Sigma_{\mathrm{eff}}(\pi^{(k)})$ using the accumulated data $\mathcal{H}_{r-1}$ or the population formula if available.\\
(b) Update 
\begin{align}
\tilde{\pi}^{(k+1)}(x) = \mathrm{clip}_\alpha\!\left(\frac{\sqrt{\mathrm{tr}\!\left(\Sigma_{\mathrm{eff}}(\pi^{(k)})^{-1} \Sigma_1(x)\right)}}{\sqrt{\mathrm{tr}\!\left(\Sigma_{\mathrm{eff}}(\pi^{(k)})^{-1} \Sigma_1(x)\right)} + \sqrt{\mathrm{tr}\!\left(\Sigma_{\mathrm{eff}}(\pi^{(k)})^{-1} \Sigma_0(x)\right)}}\right).
\end{align}\\
(c) Set $\pi^{(k+1)} \leftarrow \tilde{\pi}^{(k+1)}$ and stop when the iterates stabilize.
\end{enumerate}

\subsection{E-optimality}

E-optimality minimizes the largest eigenvalue of $\Sigma_{\mathrm{eff}}(\pi)$ and thereby controls the worst-case variance over all unit-norm linear contrasts $v^\top \tau$.

\begin{definition}[E-optimal allocation policy] 
\itshape 
A feasible allocation policy $\pi^E$ is \emph{E-optimal} if \begin{align}\pi^E \in \arg\min_{\pi \in \Pi_\alpha} \lambda_{\max}(\Sigma_{\mathrm{eff}}(\pi)),\end{align} where $\lambda_{\max}(\cdot)$ denotes the largest eigenvalue.
\end{definition}

\textbf{Interpretation:} By the variational characterization $\lambda_{\max}(\Sigma_{\mathrm{eff}}(\pi)) = \max_{\|v\|=1} v^\top \Sigma_{\mathrm{eff}}(\pi) v$, the E-optimal design minimizes the variance of the worst-case linear combination of survival effect curve contrasts $v^\top \tau$. This is the natural criterion when the experimenter lacks a prior preference over individual horizons $\tau_t$ and wishes to guard against the worst-case linear contrast.

\begin{theorem}[E-optimal allocation policy] 
Under the same conditions as Theorem~\ref{thm:d_optimal}, any E-optimal policy $\pi^E$ satisfies, for a.e.\ $x$, \begin{align}\pi^E(x) = \mathrm{clip}_\alpha\!\left(\frac{\sqrt{(v^\star)^\top \Sigma_1(x)\, v^\star}}{\sqrt{(v^\star)^\top \Sigma_1(x)\, v^\star} + \sqrt{(v^\star)^\top \Sigma_0(x)\, v^\star}}\right),\end{align} where $v^\star = v^\star(\pi^E) \in \mathbb{R}^{t_{\max}+1}$ is the unit eigenvector corresponding to $\lambda_{\max}(\Sigma_{\mathrm{eff}}(\pi^E))$. The characterization is a fixed-point equation since $v^\star$ depends on $\pi^E$ through $\Sigma_{\mathrm{eff}}(\pi^E)$.\end{theorem}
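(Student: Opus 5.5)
We prove the E-optimal characterization the same way as the D-optimal one in Theorem~\ref{thm:d_optimal}: reduce the matrix criterion to a pointwise minimization over $\pi(x)$.

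\textbf{Variational reformulation.} By Theorem~\ref{thm:eff_bound}, $\Sigma_{\mathrm{eff}}(\pi)=\mathbb{E}[\Sigma_1(X)/\pi(X)+\Sigma_0(X)/(1-\pi(X))]+B$, where $B=\mathbb{E}[b b^\top]$ does not depend on $\pi$. The map $\pi\mapsto\Sigma_{\mathrm{eff}}(\pi)$ is convex in the Loewner order, because $p\mapsto 1/p$ and $p\mapsto 1/(1-p)$ are convex on $(0,1)$ and $\Sigma_a(x)\succeq 0$. Since $\lambda_{\max}(M)=\max_{\|v\|=1}v^\top M v$ is convex and Loewner-monotone, the objective
\[
F(\pi)=\lambda_{\max}(\Sigma_{\mathrm{eff}}(\pi))=\max_{\|v\|=1} f(\pi,v)
\]
is convex on the convex set $\Pi_\alpha$. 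Here
\[
f(\pi,v)=\mathbb{E}\!\left[\frac{v^\top\Sigma_1(X)v}{\pi(X)}+\frac{v^\top\Sigma_0(X)v}{1-\pi(X)}\right]+v^\top B v .
\]

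\textbf{Optimality condition via a minimax argument.} The function $f$ is convex in $\pi$ and linear in $vv^\top$. We therefore replace the unit sphere by its convex hull, the spectrahedron $\{W\succeq 0,\ \mathrm{tr}W=1\}$, on which the maximum of a linear function equals $\lambda_{\max}$. A minimax theorem (Sion's) then gives a saddle point $(\pi^E,W^\star)$. First-order optimality of $\pi^E$ for the inner problem, with $W^\star$ fixed, splits across $x$. For a.e.\ $x$, $\pi^E(x)$ minimizes
\[
\frac{\mathrm{tr}(W^\star\Sigma_1(x))}{p}+\frac{\mathrm{tr}(W^\star\Sigma_0(x))}{1-p}\quad\text{over } p\in[\alpha,1-\alpha].
\]
This is exactly the scalar problem solved in the proof of Proposition~\ref{prop:opt_assignment} with $V_a$ replaced by $\mathrm{tr}(W^\star\Sigma_a)$. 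Setting the derivative to zero gives $p^2/(1-p)^2=\mathrm{tr}(W^\star\Sigma_1)/\mathrm{tr}(W^\star\Sigma_0)$. Strict convexity in $p$ then yields the clipped square-root ratio.

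\textbf{Identifying $W^\star$ with $v^\star v^{\star\top}$ (main obstacle).} The saddle-point condition forces $W^\star$ to maximize $\mathrm{tr}(W\Sigma_{\mathrm{eff}}(\pi^E))$, so $W^\star$ is supported on the top eigenspace of $\Sigma_{\mathrm{eff}}(\pi^E)$.
\begin{itemize}
\item If $\lambda_{\max}$ is simple, $W^\star=v^\star v^{\star\top}$, and $\mathrm{tr}(W^\star\Sigma_a)=(v^\star)^\top\Sigma_a v^\star$, which is the stated formula.
\item If $\lambda_{\max}$ has multiplicity, $W^\star$ may be a mixture of eigenvectors. The statement then holds with $v^\star v^{\star\top}$ replaced by a suitable element of the top-eigenspace spectrahedron. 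We would either assume simplicity or state the result in that generalized form.
\end{itemize}
The remaining technical points are a measurable-selection and exchange-of-derivative argument justifying the pointwise minimization, and the degenerate case where both traces vanish. In that case any $p$ is optimal and the formula is read by convention, mirroring the assumption $V_0+V_1>0$ in Proposition~\ref{prop:opt_assignment}.
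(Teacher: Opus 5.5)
Your argument is correct, but it does not follow the paper's route. The paper's proof is a one-line first-order condition. It invokes the envelope theorem, i.e.\ $\mathrm{d}\lambda_{\max} = (v^\star)^\top \mathrm{d}\Sigma_{\mathrm{eff}}\, v^\star$, and differentiates in $\pi(x)$ at an interior optimum. It then solves $-(v^\star)^\top\Sigma_1(x)v^\star/\pi(x)^2 + (v^\star)^\top\Sigma_0(x)v^\star/(1-\pi(x))^2 = 0$ and inserts the clip by appeal to the overlap constraint.

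You instead use convexity of $\pi\mapsto\lambda_{\max}(\Sigma_{\mathrm{eff}}(\pi))$ and lift the inner maximization to the spectrahedron. Sion's theorem then gives a saddle point, which reduces everything to the scalar problem of Proposition~\ref{prop:opt_assignment} with weights $\mathrm{tr}(W^\star\Sigma_a(x))$.

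The paper's route is short, but it has three weaknesses:
\begin{itemize}
\item it tacitly needs $\lambda_{\max}(\Sigma_{\mathrm{eff}}(\pi^E))$ to be simple, since otherwise $\lambda_{\max}$ is not differentiable and ``the'' eigenvector $v^\star$ is undefined;
\item it treats the derivative with respect to $\pi(x)$ informally;
\item it handles the box constraint only by assertion.
\end{itemize}
Your route is rigorous on each of these points. The constraint $[\alpha,1-\alpha]$ is handled by exact pointwise strictly convex minimization.

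The quantifier ``any E-optimal policy'' is also justified, because saddle points form a product set. Every minimizer $\pi^E$ therefore pairs with any maximin $W^\star$. You should state this explicitly, together with the fact that a maximin $W^\star$ exists: $W\mapsto\inf_\pi f(\pi,W)$ is upper semicontinuous on a compact set.

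Your handling of a repeated top eigenvalue is a substantive correction, not a cosmetic hedge. That case is common at E-optimal designs. There the correct statement uses a trace-one positive semidefinite $W^\star$ supported on the top eigenspace in place of $v^\star v^{\star\top}$, and no single unit eigenvector need reproduce $\pi^E$. The theorem as written should therefore be read under a simplicity assumption, which is exactly where your argument and the paper's coincide.
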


By the envelope theorem, at an interior optimum the first-order condition w.r.t.\ $\pi(x)$ yields $(v^\star)^\top [\partial \Sigma_{\mathrm{eff}}(\pi)/\partial \pi(x)] v^\star = 0$, which gives $-(v^\star)^\top \Sigma_1(x) v^\star / \pi(x)^2 + (v^\star)^\top \Sigma_0(x) v^\star / (1-\pi(x))^2 = 0$. Solving for $\pi(x)$ and applying the overlap constraint yields the stated expression.

\begin{remark} The quantity $(v^\star)^\top \Sigma_a(x) v^\star = \mathrm{Var}\bigl(\sum_t v^\star_t S_t(X,a)\,\xi_t(\mathcal{O},\eta_t) \mid X=x, A=a\bigr)$ is the conditional variance of the worst-case linear contrast $v^{\star\top}\phi(\mathcal{O};\tau,\eta)$ in arm $a$ at covariate value $x$. Thus, the E-optimal policy has the same Neyman form as the A- and D-optimal policies, but with the scalar $V_a(x)$ replaced by the worst-case directional variance $(v^\star)^\top \Sigma_a(x) v^\star$. \end{remark}

\textbf{Computation via fixed-point iteration:} The E-optimal policy can be computed analogously to the D-optimal case:
\begin{enumerate}
\item Initialize $\pi^{(0)}$ (e.g., the A-optimal policy from Proposition~\ref{prop:opt_assignment}).
\item For $k = 0, 1, 2, \ldots$: \\
(a) Estimate $\Sigma_{\mathrm{eff}}(\pi^{(k)})$ from accumulated data $\mathcal{H}_{r-1}$.\\
(b) Compute the leading eigenvector $v^{(k)} = \arg\max_{\|v\|=1} v^\top \Sigma_{\mathrm{eff}}(\pi^{(k)}) v$. \\
(c) Update 
\begin{align}\tilde{\pi}^{(k+1)}(x) = \mathrm{clip}_\alpha\!\left(\frac{\sqrt{(v^{(k)})^\top \Sigma_1(x)\, v^{(k)}}}{\sqrt{(v^{(k)})^\top \Sigma_1(x)\, v^{(k)}} + \sqrt{(v^{(k)})^\top \Sigma_0(x)\, v^{(k)}}}\right).
\end{align} \\
(d) Set $\pi^{(k+1)} \leftarrow \tilde{\pi}^{(k+1)}$ and stop when the iterates stabilize.
\end{enumerate}

\subsection{Comparison of A-, D-, and E-optimality}

All three criteria share the same Neyman allocation structure $\pi^\star(x) = \sqrt{Q_1(x)}/(\sqrt{Q_1(x)} + \sqrt{Q_0(x)})$ for an arm-specific scalar $Q_a(x)$, but differ in how they aggregate the cross-horizon covariance matrix $\Sigma_a(x)$: 
\begin{enumerate}
    \item A-optimality uses $Q_a(x) = \mathrm{tr}(\Sigma_a(x)) = V_a(x)$ (sum of marginal variances, closed-form); 
    \item D-optimality uses $Q_a(x) = \mathrm{tr}(\Sigma_{\mathrm{eff}}^{-1} \Sigma_a(x))$ (precision-weighted trace, fixed-point);
    \item E-optimality uses $Q_a(x) = (v^\star)^\top \Sigma_a(x) v^\star$ (worst-case directional variance, fixed-point).
\end{enumerate}
When $t_{\max}=0$, $v^\star = 1$ and, therefore, all three criteria coincide. The A-optimal criterion is the only one that separates additively over $t$, which is why it yields a closed-form solution; D- and E-optimal criteria couple all horizons jointly through $\Sigma_{\mathrm{eff}}$ and its eigenvector and therefore require iterative computation.

\newpage
\section{Algorithm}

\begin{algorithm}[H]
\caption{Adaptive Survival Estimator}
\label{alg:adaptive_survival}
\footnotesize
\begin{algorithmic}[1]
\Input Burn-in period $R_0$; initial policy $\pi_{\mathrm{init}}(X)$; truncation schedule $(k_r)_{r\geq 1}$; regression learners for $\lambda^S_t(X, a)$, $\lambda^G_t(X, a)$
\For{$r \gets 1$ \textbf{ to } $R$}
    \State Observe covariates $X_r$
    \If{$r \leq R_0$}
        \State Assign $A_r \sim \mathrm{Bern}(\pi_{\mathrm{init}}(X_r))$
    \Else
        \State Estimate nuisance functions $\hat\lambda^{S, \mathrm{cf}}_{t,r-1}(X, a)$, $\hat\lambda^{G, \mathrm{cf}}_{t,r-1}(X, a)$ from $\gH_{r-1}$ using cross-fitting
        \State Compute {$\widehat S^{\mathrm{cf}}_{t,r-1}(X,a) = \prod_{i=0}^{t}(1-\widehat\lambda^S_{i,r-1}(X,a))$} 
        \State Compute {$\widehat G^{\mathrm{cf}}_{t-1,r-1}(X,a) = \prod_{i=0}^{t-1}\left(1-\widehat\lambda^G_{i,r-1}(X,a)/ [1-\widehat\lambda^S_{i,r-1}(X,a)]\right)$}
        \State Compute $\hat V_{a,r-1}(X_r)$ for $a\in\{0,1\}$ using Eq.~\ref{eq:adap_v_a}
        \State Compute plug-in assignment probability:\\
        \qquad \quad {$\widetilde\pi_r(X_r \mid \gH_{r-1}) = \sqrt{\hat V_{1,r-1}(X_r)}/(\sqrt{\hat V_{0,r-1}(X_r)}+\sqrt{\hat V_{1,r-1}(X_r)})$}
        \State Apply truncation:\\
        \qquad \quad {$\pi_r(X_r \mid \gH_{r-1}) \leftarrow \min\!\left\{1-\tfrac{1}{k_r},\,\max\!\left\{\tfrac{1}{k_r},\,\widetilde\pi_r(X_r\mid\gH_{r-1})\right\}\right\}$}
        \State Assign $A_r \sim \mathrm{Bern}(\pi_r(X_r \mid \gH_{r-1}))$
    \EndIf
    \State Observe $\widetilde T_r$, $\Delta_r$
    \State Compute cross-fitted EIF pseudo-outcome $\hat{\bm{\phi}}_r^{\mathrm{cf}} = \bigl(\hat\phi_{0,r}^{\mathrm{cf}},\dots,\hat\phi_{t_{\max},r}^{\mathrm{cf}}\bigr)^\top$ using Eq.~\ref{eq:adaptive_cf_eif} with nuisances $\hat\eta_{t,r-1}^{(-J_r)}$
\EndFor
\vspace{-0.2cm}
\State \Return $\hat\tau_R^{\mathrm{ASE}} = \dfrac{1}{R}\displaystyle\sum_{r=1}^R \hat{\bm{\phi}}_r^{\mathrm{cf}}$
\end{algorithmic}
\end{algorithm}

\newpage
\section{Proofs}
\subsection{Proof of Theorem~\ref{thm:eff_bound}}
\label{app:proof_eff_bound}

The semiparametric efficiency bound for the vector parameter $\tau = (\tau_0, \dots, \tau_{\tmax})$ under a fixed allocation $\pi$ is the covariance of the efficient influence function
vector, i.e.,
\begin{align}
\Sigma_{\mathrm{eff}}(\pi) =
\Cov\bigl(\bm{\phi}(\gO;\tau,\eta_{\tmax})\bigr)
=
\E\!\left[
\bigl(\bm{\phi}(\gO;\tau,\eta_{\tmax})-\bm{\tau}\bigr)
\bigl(\bm{\phi}(\gO;\tau,\eta_{\tmax})-\bm{\tau}\bigr)^{\!\top}
\right],
\end{align}

So first, we decompose it at a discrete time $t$. To ease notation, we write $U_t=S_t(X,A)\,\xi(\gO,\eta_t)$, so that the centered EIF scores from Eq.~\ref{eq:eif} are given by
\begin{align}
\phi_t(\gO;\tau_t,\eta_t)
=
S_t(X, 1) - S_t(X, 0) - \underbrace{\frac{A-\pi(X)}{\pi(X)(1-\pi(X))}\,U_t}_{\Lambda_t} - \tau_t,
\end{align}

We show $\E[\xi(\gO,\eta_t)\mid X,A]=0$ first. Under Assumption~\ref{ass:survival}, we have $S_i(X,A)>0$ and $G_{i-1}(X,A)>0$, so each denominator is strictly positive and the summand is well defined. It therefore suffices to show that the numerator has a conditional mean zero for every $i$:
\begin{align}
\E\!\left[\mathbf 1\{\widetilde T=i,\Delta=1\}-\mathbf 1\{\widetilde T\ge i\}\,\lambda_i^S(X,A)\,\Big|\,X,A\right]
\;=\;0.
\label{eq:hazard_mds}
\end{align}

By the definition of the discrete-time event hazard, we yield
\begin{align}
\lambda_i^S(X,A)
\;=\;
\mathbb{P}\bigl(\widetilde T=i,\,\Delta=1 \,\big|\, \widetilde T\ge i,\,X,\,A\bigr).
\end{align}
Since $\{\widetilde T=i,\Delta=1\}\subseteq\{\widetilde T\ge i\}$, the event $\{\widetilde T=i,\Delta=1\}$ implies $\mathbf 1\{\widetilde T\ge i\}=1$, and therefore
\begin{align}
\E\!\left[\mathbf 1\{\widetilde T=i,\Delta=1\}\,\big|\,X,A\right]
&= \mathbb{P}\bigl(\widetilde T=i,\,\Delta=1\,\big|\,X,A\bigr)
\notag\\
&= \mathbb{P}\bigl(\widetilde T\ge i\,\big|\,X,A\bigr)\cdot
\mathbb{P}\bigl(\widetilde T=i,\,\Delta=1\,\big|\,\widetilde T\ge i,\,X,A\bigr)
\notag\\
&\;=\;\E\!\left[\mathbf 1\{\widetilde T\ge i\}\,\big|\,X,A\right]\cdot \lambda_i^S(X,A).
\end{align}
Rearranging yields Eq.~\ref{eq:hazard_mds}. Summing over $i=0,\dots,t$ and dividing each term by the $\sigma(X,A)$-measurable positive factor $S_i(X,A)\,G_{i-1}(X,A)$, we yield
\begin{align}
\E[\xi(\gO,\eta_t)\mid X,A]
\;=\;
\sum_{i=0}^{t}
\frac{\E\!\left[\mathbf 1\{\widetilde T=i,\Delta=1\}-\mathbf 1\{\widetilde T\ge i\}\,\lambda_i^S(X,A)\,\big|\,X,A\right]}{S_i(X,A)\,G_{i-1}(X,A)}
\;=\;0.
\end{align}

Then, we have $\E[U_t\mid X,A]=0$ and therefore
\begin{align}
\E[\Lambda_t\mid X]
=
\E\bigl[\E[\Lambda_t\mid X,A]\,\big|\,X\bigr]
=
0.
\end{align}

Thus, we can expand $V_{\mathrm{eff}, t}(\pi)$ as
\begin{align}
\Var(\phi_t) 
= \E[(\phi_t-\tau_t)^2]
= \E[(S_t(X, 1)-S_t(X, 0)-\tau_t)^2] + \E[\Lambda_t^2].
\end{align}

For the second term, the identity $\E[\Lambda_t^2] = \E[\E[\Lambda_t^2\mid X]]$ can be rewritten via
\begin{align}
&\E[\Lambda_t^2\mid X] \\
=& \E\left[ \left(\frac{A-\pi(X)}{\pi(X)(1-\pi(X))}\right)^{2} U_t^2 \mid X\right] \notag\\
=& \pi(X) \E\left[ \left(\frac{1}{\pi(X)}\right)^{2} U_t^2\mid X, A = 1 \right]
+ (1-\pi(X)) \E\left[ \left(\frac{1}{1-\pi(X)}\right)^{2} U_t^2 \mid X, A = 0 \right]\notag\\
=& \frac{1}{\pi(X)} \Var[S_t(X, 1)\xi(\gO, \eta_t) \mid X, A = 1 ] 
+ \frac{1}{1-\pi(X)}\Var[S_t(X, 0)\xi(\gO, \eta_t) \mid X, A = 0]. \notag
\end{align}

Taking the outer expectation, we have
\begin{align}
\Var(\phi_t)
=
\E\!\left[
\frac{v_{t,1}(X)}{\pi(X)}
+
\frac{v_{t,0}(X)}{1-\pi(X)}
\right]
+
\E[b_t(X)^2],
\end{align}
where $v_{t,a}(X):=\E[U_t^2\mid X,A=a]$, $b_t(X) = S_t(X, 1)-S_t(X, 0)-\tau_t$, and the second term is $\pi(X)$ independent.
\qed

\subsection{Proof of Proposition~\ref{prop:opt_assignment}}
\label{app:proof_opt_assignment}
Since $C$ in Theorem~\ref{thm:eff_bound} does not depend on $\pi$, we have
\begin{align}
\pi^\star
&=
\arg\min_{\pi}
\E\!\left[
\frac{V_1(X)}{\pi(X)}
+
\frac{V_0(X)}{1-\pi(X)}
\right]
\notag\\
&=
\arg\min_{\pi}
\E\!\left[
\E\!\left[
\frac{V_1(X)}{\pi(X)}
+
\frac{V_0(X)}{1-\pi(X)}
\,\Big|\, X
\right]
\right]
\notag\\
\Rightarrow\quad
\pi^\star(x)
&=
\arg\min_{p\in[\alpha,1-\alpha]}
\left(
\frac{V_1(x)}{p}
+
\frac{V_0(x)}{1-p}
\right),
\end{align}
where $V_a(x):=\sum_{t=0}^{t_{\max}} v_{t,a}(x)$.
The first-order condition gives
\begin{align}
\frac{V_0(x)}{(1-p)^2}
-
\frac{V_1(x)}{p^2}
=
0,
\end{align}
which yields $p=\sqrt{V_1(x)}/(\sqrt{V_1(x)}+\sqrt{V_0(x)})$. Applying the
overlap constraint, we yield
\begin{align}
\pi^\star(x)
=
\mathrm{clip}_\alpha\!\left(
\frac{\sqrt{V_1(x)}}{\sqrt{V_1(x)}+\sqrt{V_0(x)}}
\right).
\end{align}

\newpage
\subsection{Remark on necessity of stepwise overlap}
\label{app:remark_overlap}

\begin{remark}[Necessity of stepwise overlap]
Assumption~\ref{ass:uniform_overlap} imposes lower bounds on $1 - \lambda^S_i(x,a)$ and $1 - \lambda^G_i(x,a)/(1-\lambda^S_i(x,a))$ at every intermediate step $i \leq t$, rather than only at the endpoint $t$. This is necessary: since $G_{t-1}(x,a) = \prod_{i=0}^{t-1}\bigl(1 - \lambda^G_i(x,a)/(1-\lambda^S_i(x,a))\bigr)$ and $S_i(x,a) = \prod_{j=0}^{i}(1-\lambda^S_j(x,a))$ are both cumulative products, a single intermediate step at which $\lambda^S_i(x,a) \to 1$ drives $S_i(x,a) \to 0$ and hence $G_{i-1}(x,a)^{-1} \to \infty$, which the inverse probability of censoring weights in the EIF to become unbounded regardless of the behavior at other time steps. Assumption~\ref{ass:uniform_overlap} rules this out uniformly across $i \leq t$ and thus ensures that $S_i(x,a) \geq (c^S_t)^{t+1}$ and $G_{i-1}(x,a) \geq (c^G_t)^{t+1}$ for all $i \leq t$. Hence, this constitutes the discrete-time survival analogue of the positivity condition standard in semiparametric survival analysis~\citep{cui.2023, vanderlaan.2003, westling.2024}.
\end{remark}

\subsection{Proof of Theorem~\ref{thm:adaptive_asymptotic_normality_cf}}
\label{app:proof_thm_asymptotic}

The asymptotic argument relies on the martingale central limit theorem under a Lindeberg-type condition. We use a streamlined version of the martingale difference sequence (MDS) central limit theorem originally due to \citet{dvoretzky.1972}, as presented in Theorem 2 of \citet{zhang.2021a}.

To begin our proof, we define the centered EIF score at horizon $t$ as 
$\psi_{t, r} := \phi(X_r, A_r, \tilde{T}_r, \Delta_r; \tau_t, \pi_r, \hat{\eta}_{t, r})$, where $\phi$ is given in Eq.~\ref{eq:eif}. Let \\
$\eta_{t} = \{\lambda^S_{i}(X, 0), \lambda^S_{i}(X, 1), \lambda^G_{i}(X, 0), \lambda^G_{i}(X, 1)\}_{i=0}^t$ denote the oracle nuisance functions. We can decompose $\psi_{t, r}$ as 
\begin{align}\label{eq:decomp}
    \psi_{t, r} =& \phi_{t, r} - \tau_t + \hat{\phi}_{t, r} - \phi_{t, r}\\
    =& \underbrace{\phi(X_r, A_r, \tilde{T}_r, \Delta_r; \tau_t, \pi_r, \eta_t) - \tau_t}_{z_{t, r}} \notag\\
    &+ \underbrace{\phi(X_r, A_r, \tilde{T}_r, \Delta_r; \tau_t, \pi_r, \hat{\eta}_{t, r}) - \phi(X_r, A_r, \tilde{T}_r, \Delta_r; \tau_t, \pi_r, \eta_t)}_{m_{t, r}},
\end{align}
such that
\begin{align}
    \sqrt{R}\bigl(\hat{\tau}_{t, R}^{\mathrm{ASE}} - \tau_t\bigr) = \frac{1}{\sqrt{R}}\sum_{r=1}^R z_{t, r} + \frac{1}{\sqrt{R}}\sum_{r=1}^R m_{t, r}.
\end{align}

Our proof proceeds in three steps: (1) we prove that, for each horizon $t$, $\{z_{t, r}\}_{r=1}^R$ forms a martingale difference sequence, (2) we show that $\{z_{t, r}\}_{r=1}^R$ satisfy conditional variance convergence and Lindeberg condition, so we have the $\frac{1}{\sqrt{R}}\sum_{r=1}^R z_{t, r} \xrightarrow{d} \mathcal{N} (0, V_{\textrm{eff}, t}(\pi))$, and (3) we show that $\frac{1}{\sqrt{R}}\sum_{r=1}^R m_{t, r} = o_p(1)$. 

\subsubsection{Martingale difference sequence}
Before we start to the first steps, we state the martingale central limit theorem first.
\begin{theorem}[Martingale CLT, adapted from Theorem 2 in \citet{zhang.2021a}]
\label{thm:martingale_clt}
Let $\{z_{t, r}, \mathcal{H}_{r}\}_{r=1}^R$ be a real-valued sequence, and let $\bar{z}_{t, r} = \frac{1}{R}\sum_{r=1}^R z_{t, r}$ such that:
\begin{enumerate}
    \item (Martingale difference sequence) $\{z_{t, r}\}_{r = 1}^R$ is a martingale difference sequence; that is $\E[z_{t, r} \mid \mathcal{H}_{r-1}] = 0$ for all $r \in [1, R]$.
    \item (Conditional variance convergence) There exists a constant 
    $V_{\textrm{eff},t}(\pi) > 0$ such that 
    \begin{equation}
    \frac{1}{R}\sum_{r = 1}^R \E[z_{t, r}^2 \mid \mathcal{H}_{r-1}] \xrightarrow{p} V_{\textrm{eff},t}(\pi),
    \end{equation}
    \item (Lindeberg condition) For every $\varepsilon > 0$,
    \begin{equation}
    \frac{1}{R}\sum_{r = 1}^R \E \left[ z_{t, r}^2 \mathbbm{1}\{|z_{t, r}| > \varepsilon \sqrt{R}\} \mid \mathcal{H}_{r-1} \right] \xrightarrow{p} 0. 
    \end{equation}
\end{enumerate}
Then, $\sqrt{R}\bar{z}_{t, r} \xrightarrow{d}\mathcal{N} (0, V_{\textrm{eff},t}(\pi))$.
\end{theorem}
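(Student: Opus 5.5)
We prove the statement by working with the triangular array $\zeta_{R,r} := z_{t,r}/\sqrt{R}$, $r=1,\dots,R$, adapted to the filtration $\mathcal{H}_r$, and showing that the characteristic function of $S_R := \sum_{r=1}^R \zeta_{R,r} = \sqrt{R}\,\bar z_{t,r}$ converges pointwise to $\exp(-u^2 V_{\mathrm{eff},t}(\pi)/2)$. Lévy's continuity theorem then gives the claim. Write $\sigma_{R,r}^2 := \E[\zeta_{R,r}^2 \mid \mathcal{H}_{r-1}]$ and $\Sigma_{R,k} := \sum_{r\le k}\sigma_{R,r}^2$. Condition 2 says $\Sigma_{R,R} \to^p V := V_{\mathrm{eff},t}(\pi)$. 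Condition 3 says $L_R(\varepsilon) := \sum_r \E[\zeta_{R,r}^2 \mathbbm{1}\{|\zeta_{R,r}|>\varepsilon\}\mid\mathcal{H}_{r-1}] \to^p 0$ for every $\varepsilon>0$.

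\emph{Step 1: bounding the conditional variance.} The conditional variances are random, so we first reduce to a bounded predictable quantity. Fix $\delta>0$ and define the predictable stopping index $\kappa_R := \max\{k\le R : \Sigma_{R,k}\le V+\delta\}$. This is predictable because $\Sigma_{R,k}$ is $\mathcal{H}_{k-1}$-measurable. Replace $\zeta_{R,r}$ by $\tilde\zeta_{R,r} := \zeta_{R,r}\mathbbm{1}\{r\le\kappa_R\}$. This is still a martingale difference array, by Condition 1 and the predictability of $\{r\le\kappa_R\}$. Its conditional variances sum to at most $V+\delta$ almost surely, and to at least $V-\delta$ with probability tending to one. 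By Condition 2, $\mathbb{P}(\kappa_R<R)\to 0$, so $\tilde S_R - S_R \to^p 0$ and it suffices to treat $\tilde S_R$. Possibly adding one extra dummy Gaussian increment makes the total conditional variance exactly $V+\delta$; letting $\delta\downarrow 0$ at the end removes it.

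\emph{Step 2: telescoping the characteristic function.} For fixed $u$, consider $M_R := \exp\bigl(iu\tilde S_R + \tfrac{u^2}{2}\tilde\Sigma_{R,R}\bigr)$. Write it as a telescoping product and take iterated conditional expectations. Each factor contributes an error controlled by
\[
\Bigl|\E[e^{iu\tilde\zeta_{R,r}}\mid\mathcal{H}_{r-1}] - \bigl(1-\tfrac{u^2}{2}\tilde\sigma_{R,r}^2\bigr)\Bigr| \le \E\bigl[\min\{|u\tilde\zeta_{R,r}|^3,\,(u\tilde\zeta_{R,r})^2\}\mid\mathcal{H}_{r-1}\bigr].
\]
This bound uses the martingale difference property to kill the linear term, together with the elementary estimate $|e^{ix}-1-ix+x^2/2|\le\min(|x|^3,x^2)$. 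We also use $|e^{y}-(1+y)|\le y^2$ for small $y = u^2\tilde\sigma_{R,r}^2/2$. The factors $\exp(\tfrac{u^2}{2}\tilde\Sigma)$ stay uniformly bounded by $e^{u^2(V+\delta)/2}$ thanks to Step 1. Summing over $r$ gives $|\E M_R - 1| \lesssim |u|^3\varepsilon(V+\delta) + u^2 \E[L_R(\varepsilon)\wedge C] + u^4\E[\max_r\tilde\sigma_{R,r}^2\wedge C]$. Condition 3 makes the middle term vanish, by dominated convergence since everything is bounded. The last term also vanishes, because $\max_r\sigma_{R,r}^2 \le \varepsilon^2 + L_R(\varepsilon)$. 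Letting $\varepsilon\downarrow 0$ then yields $\E M_R\to 1$.

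\emph{Step 3: replacing the random variance by the constant.} Since $\tilde\Sigma_{R,R}\to^p V$ (up to $\delta$) and all quantities are bounded, we have $\E[e^{iu\tilde S_R}] - e^{-u^2V/2}\E[M_R] \to 0$. Hence $\E[e^{iu\tilde S_R}]\to e^{-u^2 V/2}$, and sending $\delta\to 0$ concludes the proof.

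The main obstacle is Step 1. The conditional variances are only controlled in probability, not almost surely. Without the predictable truncation, the factor $\exp(\tfrac{u^2}{2}\Sigma_{R,R})$ could be unbounded, and the dominated-convergence arguments in Steps 2 and 3 would fail. The key points to verify carefully are that the stopping index is predictable, so the martingale difference property survives the truncation, and that $\mathbb{P}(\kappa_R<R)\to 0$ follows from Condition 2. Alternatively, one could simply invoke Corollary 3.1 of Hall and Heyde (or Dvoretzky's theorem directly), whose hypotheses are exactly Conditions 1–3 with a constant limit variance, so the nesting condition on the filtrations is not needed.
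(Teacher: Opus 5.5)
Your proof is correct, but it follows a different route from the paper. The paper gives no argument for this statement: it imports the result from Dvoretzky (1972), as presented in Theorem 2 of Zhang et al. This is essentially your closing alternative, namely Hall--Heyde Corollary 3.1 with a constant limit. There the nesting condition is unnecessary, and it holds trivially here anyway because $\mathcal{H}_r$ does not depend on $R$.

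What you do instead is reproduce the standard Brown/McLeish characteristic-function proof. You truncate predictably at $\kappa_R$ so that $\tilde\Sigma_{R,R}\le V+\delta$, and you check that $\{r\le\kappa_R\}=\{\Sigma_{R,r}\le V+\delta\}\in\mathcal{H}_{r-1}$. You then telescope $\E[\exp(iu\tilde S_R+\tfrac{u^2}{2}\tilde\Sigma_{R,R})]$ using the third-order expansion, and you control $\sum_r\tilde\sigma_{R,r}^4$ via $\max_r\tilde\sigma^2_{R,r}\le\varepsilon^2+L_R(\varepsilon)$. Your route is self-contained and shows exactly where Conditions 1--3 enter. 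It also shows why boundedness of the conditional variance is needed for the dominated-convergence steps. The citation buys brevity and nothing more.

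Two small points are worth tidying.
\begin{itemize}
\item The inequality you actually need after the expansion is $|e^{-y}-(1-y)|\le y^2/2$, or equivalently $|e^{y}(1-y)-1|\le \tfrac{y^2}{2}e^{y}$, both valid for all $y\ge 0$. The version you wrote, $|e^y-(1+y)|\le y^2$ for small $y$, does not fit: $y=u^2\tilde\sigma^2_{R,r}/2$ is only bounded by $u^2(V+\delta)/2$, not small. You need a deterministic bound to take expectations, and the constant then depends on $u^2(V+\delta)$.
\item The dummy Gaussian increment and the final limit $\delta\downarrow 0$ are superfluous. Since $\tilde\Sigma_{R,R}=\Sigma_{R,R}$ on $\{\kappa_R=R\}$, you have $\tilde\Sigma_{R,R}\xrightarrow{p}V$ exactly, not merely up to $\delta$. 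Together with $0\le\tilde\Sigma_{R,R}\le V+\delta$, bounded convergence gives Step 3 directly for any fixed $\delta>0$. Slutsky then transfers the limit from $\tilde S_R$ to $S_R$.
\end{itemize}
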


The first step is that we need to prove $\E[z_{t, r} \mid \mathcal{H}_{r-1}] = 0$:

\begin{align}
&\E[z_{t, r} \mid \mathcal{H}_{r-1}] \notag\\
=& \E[ S_t(X_r,1)-S_t(X_r,0) - \frac{A_r-\pi_r(X_r)}{\pi_r(X_r)\{1-\pi_r(X_r)\}}
\xi(\gH_r,\eta_t) S_t(X_r,A_r) \mid \mathcal{H}_{r-1}]  - \tau_t \notag\\
=& \tau_t - \tau_t
- \E\left[\E\left[\frac{A_r-\pi_r(X_r)}{\pi_r(X_r)\{1-\pi_r(X_r)\}}
\xi(\gH_r,\eta) S_t(X_r,A_r) \mid X_r, \mathcal{H}_{r-1}\right] \mid \mathcal{H}_{r-1} \right] \notag\\
=&- \E\left[\E\left[ \E\left[\frac{A_r-\pi_r(X_r)}{\pi_r(X_r)\{1-\pi_r(X_r)\}}
\xi(\gH_r,\eta) S_t(X_r,A_r) \mid A_r, X_r, \mathcal{H}_{r-1} \right]\mid X_r, \mathcal{H}_{r-1}\right] \mid \mathcal{H}_{r-1} \right]\notag\\
=& 0.
\end{align}

We further yield
\begin{align}
&\E\left[\xi(\gH_r,\eta_t)  \mid A_r, X_r, \mathcal{H}_{r-1}\right] \\
=& \E\left[ \sum_{i=0}^t \frac{\mathbbm{1}(\tilde{T}=i)\mathbbm{1}(\Delta = 1) - \mathbbm{1}(\tilde{T}\geq i)\lambda^s_i(X_r, A_r)}{S_i(X_r, A_r) G_{i-1}(X_r, A_r)}  \mid A_r, X_r, \mathcal{H}_{r-1}\right]\\
=& \sum_{i=0}^t \E\left[ \frac{\mathbbm{1}(\tilde{T}=i)\mathbbm{1}(\Delta = 1) - \mathbbm{1}(\tilde{T}\geq i)\lambda^s_i(X_r, A_r)}{S_i(X_r, A_r) G_{i-1}(X_r, A_r)}  \mid A_r, X_r, \mathcal{H}_{r-1}\right]\\
=& \sum_{i=0}^t 
\frac{\mathbb{P}\left(\tilde{T} = i, \Delta = 1 \mid X_r, A_r, \mathcal{H}_{r-1}\right)- \mathbb{P}\left(\tilde{T} \geq i \mid X_r, A_r, \mathcal{H}_{r-1} \right) \mathbb{P}\left(\tilde{T} = i, \Delta = 1 \mid \tilde{T} > i, X_r, A_r\right)}{S_i(X_r, A_r) G_{i-1}(X_r, A_r)}\\
=& \sum_{i=0}^t 
\frac{\mathbb{P}\left(\tilde{T} = i, \Delta = 1 \mid X_r, A_r, \mathcal{H}_{r-1}\right)- \mathbb{P}\left(\tilde{T} = i, \Delta = 1 \mid X_r, A_r, \mathcal{H}_{r-1}\right)}{S_i(X_r, A_r) G_{i-1}(X_r, A_r)}\\
=& 0.
\end{align}
Thus, for each fixed $t$, $\{z_{t,r},\mathcal H_r\}_{r\ge1}$ is a martingale difference sequence, since the assignment probability $\pi_r$ is $\mathcal H_{r-1}$-measurable and the oracle augmentation term satisfies $\E[\xi(\gH_r,\eta_t)\mid A_r,X_r,\mathcal H_{r-1}] = 0$.

\subsubsection{Conditional variance convergence and Lindeberg condition}
We first need to show that $\E[z_{t,r}^2\mid \mathcal H_{r-1}] - V_{\mathrm{eff},t}(\pi) \xrightarrow{p}0$. Let $b_t(X):=S_t(X,1)-S_t(X,0)-\tau_t$. Using $\E[\xi_t(\gH_r,\eta_t)\mid X_r,A_r,\mathcal H_{r-1}]=0$, we have
\begin{align}
\E[z_{t,r}^2\mid \mathcal H_{r-1}]
&=
\E\left[
b_t(X_r)^2
+
\frac{v_{t,1}(X_r)}{\pi_r(X_r)}
+
\frac{v_{t,0}(X_r)}{1-\pi_r(X_r)}
\;\middle|\;
\mathcal H_{r-1}
\right].
\end{align}
Therefore, we have
\begin{align}
&\left|
\E[z_{t,r}^2\mid \mathcal H_{r-1}]
-
V_{\mathrm{eff},t}(\pi)
\right|
\nonumber\\
&\le
\E\left[
v_{t,1}(X_r)
\left|
\frac{1}{\pi_r(X_r)}-\frac{1}{\pi(X_r)}
\right|
+
v_{t,0}(X_r)
\left|
\frac{1}{1-\pi_r(X_r)}-\frac{1}{1-\pi(X_r)}
\right|
\;\middle|\;
\mathcal H_{r-1}
\right]
\nonumber\\
\leq& 4\frac{k_r}{\varepsilon} \left(\frac{t+1}{\underline s_t\,\underline g_t}\right)^2 \|\pi_r - \pi\|_2 = o_p(1),
\end{align}

According to Assumption~\ref{ass:uniform_overlap}, we have $1-\lambda^S_i(x,a) \ge \underline{c}^S_t$ $1-\lambda^G_i(x,a) / (1-\lambda^S_i(x,a))\ge \underline{c}^G_t$, so we have $S_i(x, a) = \prod_{j=0}^i 1-\lambda^S_j(x,a) \geq (\underline{c}^S_t)^{i+1}\geq (\underline{c}^S_t)^{t+1} = \underline{s}_t$ and $G_{i-1}(x, a) = \prod_{j=0}^{i-1} 1- \lambda^G_j(x, a) / (1-\lambda^S_j(x,a)) \geq (\underline{c}^G_t)^{i}\geq (\underline{c}^G_t)^{t+1} = \underline{g}_t$. We note $\big| \sum_{i=0}^t \frac{S_i(X, a)}{S_i(X, a)G_{i-1}(X, a)}(\mathbf 1(\widetilde T=i,\Delta=1)-\mathbf 1(\widetilde T\ge i)\,\lambda_i^S(X,A))\big| \leq \sum_{i=0}^t \frac{S_i(X, a)}{S_i(X, a)G_{i-1}(X, a)} \leq \sum_{i=0}^t \frac{1}{\underline{s}_t \underline{g}_t}$. Therefore, $v_{t, a}(X) = \Var(\phi_t\mid X, A = a) \leq \E [\phi_t^2\mid X, A = a] \leq \left(\frac{t+1}{\underline s_t\,\underline g_t}\right)^2$ where the last line we use (i) the boundedness assumption from Assumption~\ref{ass:uniform_overlap} and thus $\|v_{t, 1}(X)\|_{\infty} \leq \left(\frac{t+1}{\underline s_t\,\underline g_t}\right)^2$, (ii) $\pi(X_r)$, $1-\pi(X_r) >\varepsilon$ from Theorem~\ref{thm:adaptive_asymptotic_normality_cf} statement, (iii) $\pi(X_r)$, $1-\pi(X_r) > 1/k_r$ by construction, and the $L_1$-norm is bounded by the $L_2$ norm. Thus, setting $\sigma^2 := V_{\textrm{eff},t}(\pi)$, we have that the term converges in probability to $\sigma^2$, i.e., $E[z_r^2 \mid \mathcal{H}_{r-1}] \xrightarrow{p} \sigma^2$, where $\sigma^2$ is finite. And we further have that $\frac{1}{R}\sum_{r = 1}^R \E[z_r^2 \mid \mathcal{H}_{r-1}] \xrightarrow{p} \sigma^2$.

Now, we prove the \textbf{Lindeberg condition}. For every $\varepsilon > 0$, by the truncation rule, $1/k_r\le \pi_r(X_r)\le 1-1/k_r$. Hence,
\begin{align}
\left|
\frac{A_r-\pi_r(X_r)}
{\pi_r(X_r)\{1-\pi_r(X_r)\}}
\right|
\le k_r.
\end{align}
Moreover, by Assumption~\ref{ass:uniform_overlap}, the survival and censoring weights in the oracle EIF are uniformly bounded for each fixed $t$. Therefore, there exists a constant $C_t<\infty$ such that $|z_{t,r}|\le C_t k_r$. Since $k_r=o(r^{1/4})$, we also have $k_r=o(\sqrt r)$. If $k_r$ is nondecreasing, then
\begin{align}
\max_{1\le r\le R}|z_{t,r}|
\le C_t k_R
=
o(\sqrt R).
\end{align}
Thus, for every $\varepsilon>0$, eventually $|z_{t,r}|\le \varepsilon\sqrt R$ for all $r\le R$, and hence $\mathbbm 1\{|z_{t,r}|>\varepsilon\sqrt R\}=0$ for all $r\le R$. Consequently,
\begin{align}
\frac{1}{R}\sum_{r=1}^R
\E\left[
z_{t,r}^2
\mathbbm 1\{|z_{t,r}|>\varepsilon\sqrt R\}
\mid \mathcal H_{r-1}
\right]
=0 ,
\end{align}
which proves the Lindeberg condition.
Thus, by Theorem~\ref{thm:martingale_clt} we proved that $\frac{1}{\sqrt{R}}\sum_{r=1}^R z_{t, r} \xrightarrow{d} \mathcal{N} (0, V_{\textrm{eff}, t}(\pi))$.

\subsubsection{The remainder term}
\label{app:proof_mt}

For a fixed horizon $t$, recall that
\begin{equation}
m_{t,r}
:=
\phi_t(\gH_r;\pi_r,\hat\eta_{t,r-1})
-
\phi_t(\gH_r;\pi_r,\eta_t).
\end{equation}
We aim to show
\begin{equation}
\frac{1}{\sqrt R}\sum_{r=1}^R m_{t,r}=o_p(1).
\end{equation}

We decompose
\begin{align}
\frac{1}{\sqrt R}\sum_{r=1}^R m_{t,r}
&=
\sqrt R\left(
\frac{1}{R}\sum_{r=1}^R
\E[m_{t,r}\mid \mathcal H_{r-1}]
\right)
+
\frac{1}{\sqrt R}\sum_{r=1}^R
\Bigl(
m_{t,r}-\E[m_{t,r}\mid \mathcal H_{r-1}]
\Bigr)
\notag\\
&:= \Delta_t^A+\Delta_t^B.
\end{align}

We show separately that $\Delta_t^A=o_p(1)$ and $\Delta_t^B=o_p(1)$.

\paragraph{Step 1: Bias term \texorpdfstring{$\Delta_t^A$}{DeltaA}.}
Let
\begin{equation}
\Delta_{t,r}^A:=\E[m_{t,r}\mid\mathcal H_{r-1}]
=
\E\!\left[
\phi_t(\gH_r;\pi_r,\hat\eta_{t,r-1})
-
\phi_t(\gH_r;\pi_r,\eta_t)
\;\middle|\;
\mathcal H_{r-1}
\right].
\end{equation}
Using the same expansion as in the proof of the robustness survival estimator, one obtains
\begin{align}
\Delta_{t,r}^A
=
\E\!\left[
\mathcal R_{t,r,1}(X_r)-\mathcal R_{t,r,0}(X_r)
\;\middle|\;
\mathcal H_{r-1}
\right],
\label{eq:deltaA-remainder}
\end{align}
where, for $a\in\{0,1\}$,
\begin{align}
\mathcal R_{t,r,a}(x)
=&
\sum_{i=0}^{t}
\frac{\hat S_{t,r-1}(x,a)}
{\hat S_{i,r-1}(x,a)\hat G_{i-1,r-1}(x,a)}
S_i(x,a)\,
\bigl\{\hat\lambda_{i,r-1}^S(x,a) - \lambda_i^S(x,a)\bigr\}
\bigl\{\hat G_{i-1,r-1}(x,a)-G_{i-1}(x,a)\bigr\}\notag\\
\label{eq:Rtra}
\end{align}

According to assumptions in Theorem~\ref{thm:adaptive_asymptotic_normality_cf}, for each arm $a$, we have $\max_{0\leq i \leq \tmax} \|\hat{\lambda}^S_{i, r-1}(a, \cdot)-\lambda^S_i(a, \cdot)\|_2 =o_p(r^{-1/4})$ and $\max_{0\leq i \leq \tmax-1} \|\hat{\lambda}^G_{i, r-1}(a, \cdot)-\lambda^G_i(a, \cdot)\|_2 =o_p(r^{-1/4})$. This implies 
\begin{align}
\label{eq:sum_rate}
&\left(
\sum_{i=0}^{t}
\|\hat\lambda^S_{i,r-1}(a,\cdot)-\lambda_i^S(a,\cdot)\|_2
\right)^2
=
o_p(r^{-1/2}),\\
&\left(
\sum_{i=0}^{t}
\|\hat\lambda^S_{i,r-1}(a,\cdot)-\lambda_i^S(a,\cdot)\|_2
\right)
\left(
\sum_{i=0}^{t-1}
\|\hat\lambda^G_{i,r-1}(a,\cdot)-\lambda_i^G(a,\cdot)\|_2
\right)
=
o_p(r^{-1/2}).
\end{align}

Then, the prefactor in Eq.\ref{eq:Rtra} is uniformly bounded by a constant $C_t<\infty$ depending only on $t,\underline s_t,\underline g_t$. Hence
\begin{align}
|\Delta_{t,r}^A|
&\le
C_t \sum_{a\in\{0,1\}}
\left[
\left(
\sum_{i=0}^{t}
\|\hat\lambda_{i,r-1}^S(a,\cdot)-\lambda_i^S(a,\cdot)\|_2
\right)^2
\right.
\nonumber\\
&\qquad\left.
+
\left(
\sum_{i=0}^{t}
\|\hat\lambda_{i,r-1}^S(a,\cdot)-\lambda_i^S(a,\cdot)\|_2
\right)
\left(
\sum_{i=0}^{t-1}
\|\hat\lambda_{i,r-1}^G(a,\cdot)-\lambda_i^G(a,\cdot)\|_2
\right)
\right].
\label{eq:deltaA-bound}
\end{align}
By the assumed product-rate condition, the right-hand side is $o_p(r^{-1/2})$. Therefore, we have
\begin{equation}
\Delta_{t,r}^A=o_p(r^{-1/2}),
\qquad\text{and hence}\qquad
\Delta_t^A
=
\sqrt R\left(\frac1R\sum_{r=1}^R \Delta_{t,r}^A\right)
=o_p(1).
\end{equation}

\paragraph{Step 2: Martingale fluctuation term $\Delta_t^B$.}
Define $ U_{t,r} := m_{t,r}-\E[m_{t,r}\mid\mathcal H_{r-1}]$.
Then $\{U_{t,r},\mathcal H_r\}_{r\ge1}$ is a martingale difference sequence, and $\Delta_t^B = \frac{1}{\sqrt R}\sum_{r=1}^R U_{t,r}$.
It therefore suffices to show that
\begin{equation}
\Var(\Delta_t^B)=o(1).
\end{equation}
Since the cross-terms vanish for a martingale difference sequence, we have
\begin{align}
\Var(\Delta_t^B)
&=
\frac{1}{R}\sum_{r=1}^R \E[U_{t,r}^2]
\le
\frac{1}{R}\sum_{r=1}^R
\E\!\left[
m_{t,r}^2
\right].
\label{eq:deltaB-var-start}
\end{align}

We now bound $m_{t,r}^2$. We write
\begin{equation}
\hat\xi_{t,r}:=\xi_t(\gH_r,\hat\eta_{t,r-1}),
\qquad
\xi_{t,r}:=\xi_t(\gH_r,\eta_t),
\qquad
W_r:=\frac{A_r-\pi_r(X_r)}{\pi_r(X_r)\{1-\pi_r(X_r)\}},
\end{equation}
and
\begin{equation}
\Delta_{t,r}^S(a):=\hat S_{t,r-1}(X_r,a)-S_t(X_r,a),\qquad a\in\{0,1\}.
\end{equation}
Then, we yield
\begin{align}
m_{t,r}
&=
\Delta_{t,r}^S(1)-\Delta_{t,r}^S(0)
-
W_r\Bigl[
(\hat\xi_{t,r}-\xi_{t,r})\hat S_{t,r-1}(X_r,A_r)
+
\xi_{t,r}\,\Delta_{t,r}^S(A_r)
\Bigr].
\end{align}
By $(u_1+\cdots+u_4)^2\le 4(u_1^2+\cdots+u_4^2)$, we have
\begin{align}
m_{t,r}^2
\le
C_tk_r^2\Bigl(
|\Delta_{t,r}^S(1)|^2
+
|\Delta_{t,r}^S(0)|^2
+
|\hat\xi_{t,r}-\xi_{t,r}|^2
\Bigr),
\label{eq:mt-basic-bound}
\end{align}
where $C_t<\infty$ depends only on $t$ and the overlap constants $\underline s_t,\underline g_t$.

It remains to bound the two ingredients on the right-hand side.

\paragraph{Step 2a: Control of \texorpdfstring{$\Delta_{t,r}^S(a)$}{DeltaS}.}
Since
\begin{equation}
S_t(x,a)=\prod_{i=0}^t (1-\lambda_i^S(x,a)),
\qquad
\hat S_{t,r-1}(x,a)=\prod_{i=0}^t (1-\hat\lambda_{i,r-1}^S(x,a)),
\end{equation}
the product-difference identity gives
\begin{align}
|\Delta_{t,r}^S(a)|
\le
\sum_{i=0}^t
|\hat\lambda_{i,r-1}^S(X_r,a)-\lambda_i^S(X_r,a)|.
\end{align}
Hence, by the Cauchy--Schwarz inequality, we yield
\begin{align}
\E\!\left[
|\Delta_{t,r}^S(a)|^2
\;\middle|\;
\mathcal H_{r-1}
\right]
\le
(t+1)\sum_{i=0}^t
\E\!\left[
\bigl(\hat\lambda_{i,r-1}^S(X_r,a)-\lambda_i^S(X_r,a)\bigr)^2
\;\middle|\;
\mathcal H_{r-1}
\right].
\label{eq:deltaS-bound}
\end{align}

\paragraph{Step 2b: Control of \texorpdfstring{$\hat\xi_{t,r}-\xi_{t,r}$}{hatxi-xi}.}
Using the definition of $\xi_t$ and a first-order decomposition of the difference between
\begin{equation}
\frac{N_{t,i}-Y_{t,i}\hat\lambda_{i,r-1}^S}{\hat S_{i,r-1}\hat G_{i-1,r-1}}
\qquad\text{and}\qquad
\frac{N_{t,i}-Y_{t,i}\lambda_i^S}{S_iG_{i-1}},
\end{equation}
together with the uniform lower bounds $\underline s_t,\underline g_t$, we obtain
\begin{align}
|\hat\xi_{t,r}-\xi_{t,r}|^2
\le
C_t
\sum_{a\in\{0,1\}}
\left\{
\sum_{i=0}^{t}
\bigl(\hat\lambda_{i,r-1}^S(X_r,a)-\lambda_i^S(X_r,a)\bigr)^2
+
\sum_{i=0}^{t-1}
\bigl(\hat\lambda_{i,r-1}^G(X_r,a)-\lambda_i^G(X_r,a)\bigr)^2
\right\},
\label{eq:xi-difference-bound}
\end{align}
for some constant $C_t<\infty$.

Combining Eq.~\ref{eq:mt-basic-bound}, Eq.~\ref{eq:deltaS-bound}, and Eq.~\ref{eq:xi-difference-bound}, we get
\begin{align}
\E[m_{t,r}^2\mid\mathcal H_{r-1}]
\le
C_t k_r^2
\sum_{a\in\{0,1\}}
\left\{
\sum_{i=0}^{t}
\|\hat\lambda_{i,r-1}^S(\cdot,a)-\lambda_i^S(\cdot,a)\|_2^2
+
\sum_{i=0}^{t-1}
\|\hat\lambda_{i,r-1}^G(\cdot,a)-\lambda_i^G(\cdot,a)\|_2^2
\right\}.
\label{eq:mt-second-moment-final}
\end{align}
Under the stated $L_2$ rate conditions and the boundedness implied by Assumption~\ref{ass:uniform_overlap}, the expectation of the right-hand side of Eq.~\ref{eq:mt-second-moment-final} is $o(1)$.

\paragraph{Conclusion.}
Since both $\Delta_t^A=o_p(1)$ and $\Delta_t^B=o_p(1)$, we conclude that $\frac{1}{\sqrt R}\sum_{r=1}^R m_{t,r}=o_p(1)$.
This proves that the nuisance remainder is asymptotically negligible.

If the limiting policy satisfies \(\pi=\pi^\star\), then the limiting variance in the above display becomes \(V_{\mathrm{eff},t}(\pi^\star)\). By Theorem~\ref{thm:eff_bound}, this is the semiparametric efficiency bound under the A-optimal allocation.

\subsection{Proof of Proposition~\ref{prop:policy_convergence}}
\label{app:proof_policy_convergence}

By Eq.~\ref{eq:adap_v_a}, $V_a(x) = \sum_{t=0}^{t_{\max}} S_t(x,a)^2 \sum_{i=0}^{t} \lambda^S_i(x,a) / (S_i(x,a)\, G_{i-1}(x,a))$ is a Lipschitz function of $(\bm{\lambda}^S(X,a), \bm{\lambda}^G(X,a))$ on the region where $S_i(x,a)$ and $G_{i-1}(x,a)$ are bounded away from zero; Assumption~\ref{ass:uniform_overlap} guarantees precisely this uniform lower bound. Hence, the $o_p(r^{-1/4})$ hazard consistency implies $\|\hat{V}_{a,r-1} - V_a\|_2 = o_p(r^{-1/4})$ for each $a \in \{0,1\}$ by Lipschitz composition. Furthermore, Assumption~\ref{ass:uniform_overlap} implies $S_t(x,a) \geq (\underline{c}^S_t)^{t+1} > 0$ and $G_{i-1}(x,a) \geq \prod_{j} (1 - \lambda^G_j / (1-\lambda^S_j)) \cdot (1-\lambda^S_j) > 0$ uniformly, so that the denominator $\sqrt{V_0(x)} + \sqrt{V_1(x)}$ in the plug-in policy is bounded away from zero. The map $(V_0, V_1) \mapsto \sqrt{V_1}/(\sqrt{V_0}+\sqrt{V_1})$ is therefore Lipschitz, giving $\|\widetilde\pi_r - \pi^\star\|_2 = O_p(r^{-1/4})$ by Lipschitz composition. Since $\pi^\star(X) \in [\alpha, 1-\alpha]$ and $k_r \to \infty$, we have $1/k_r < \alpha$ for all sufficiently large $r$, so $\pi^\star(X) \in [1/k_r, 1-1/k_r]$. Because clipping onto $[1/k_r, 1-1/k_r]$ is the Euclidean projection and is non-expansive with respect to any point inside the interval, we have $|\pi_r(x) - \pi^\star(x)| \leq |\widetilde\pi_r(x) - \pi^\star(x)|$ pointwise. Hence, we arrive at $\|\pi_r - \pi^\star\|_2 \leq \|\widetilde\pi_r - \pi^\star\|_2 = O_p(r^{-1/4})$. Since $k_r = o(r^{1/4})$, it follows that $k_r\|\pi_r - \pi^\star\|_2 = o_p(1)$. 
\qed

\newpage
\subsection{Proof of Corollary~\ref{cor:orth_lambdaG}}
\label{app:orthogonality}

The claim is an immediate consequence of Theorem~\ref{thm:adaptive_asymptotic_normality_cf} and Eq.~\ref{eq:sum_rate}, which yields
\begin{align}
&\bigl|\hat\tau_{t,R}^{\mathrm{ASE}}-\tau_t\bigr|
=O_p(R^{-1/2})\\
&+O_p\left(
(\sum_{i=0}^{t} \|\hat{\lambda}^S_{i, r-1}(a, \cdot)-\lambda^S_i(a, \cdot)\|_2 )^2
+ \sum_{i=0}^{t}\|\hat{\lambda}^S_{i, r-1}(a, \cdot)-\lambda^S_i(a, \cdot)\|_2\sum_{i=0}^{t-1}\|\hat{\lambda}^G_{i, r-1}(a, \cdot) - \lambda^G_i(a, \cdot)\|_2
\right).\notag
\end{align}
Since $\left(\sum_{i=0}^{t} \|\hat{\lambda}^S_{i, r-1}(a, \cdot)-\lambda^S_i(a, \cdot)\|_2\right)^2=o_p(r^{-1/2})$ and $\sum_{i=0}^{t}\|\hat{\lambda}^S_{i, r-1}(a, \cdot)-\lambda^S_i(a, \cdot)\|_2\sum_{i=0}^{t-1}\|\hat{\lambda}^G_{i, r-1}(a, \cdot) - \lambda^G_i(a, \cdot)\|_2 = o_p(r^{-1/2})$, so $\hat\tau_{t,R}^{\mathrm{ASE}}\xrightarrow{p}\tau_t$. The absence of any standalone term \(O_p(\|\hat\lambda_t^G-\lambda_t^G\|_2)\) shows that perturbations in $\lambda^G$ enter only through interaction terms, which is exactly the robustness condition we aimed to show.
\qed

\newpage
\section{Extensions to no ties}
\label{app:no-ties}
In the no-ties setting, we refine the observed data structure by recording separate event and censoring indicators. Specifically, each observation is $\gO=(X,A,\widetilde T,\Delta^S,\Delta^G)$, where $\widetilde T=\min(T,C)$, $\Delta^S=\mathbf 1(T=\widetilde T)$, $\Delta^G=\mathbf 1(C=\widetilde T$), so that $\Delta^S+\Delta^G=1$ almost surely. In this case, the event and censoring hazards are defined separately by
\begin{align}
\lambda_t^S(x,a)
&=
\mathbb P(T=t \mid T\ge t, X=x,A=a),
\\
\lambda_t^G(x,a)
&=
\mathbb P(C=t \mid C\ge t, X=x,A=a),
\end{align}
with corresponding survival functions
\begin{align}
&S_t(x,a)=\prod_{i=0}^t (1-\lambda_i^S(x,a)),
&G_{t-1}(x,a)=\prod_{i=0}^{t-1} (1-\lambda_i^G(x,a)).
\end{align}
For a possibly misspecified nuisance limit $\tilde\eta_t=(\tilde\lambda_i^S(\cdot,\cdot),\tilde\lambda_i^G(\cdot,\cdot))_{i=0}^t$, 
we analogously define
\begin{align}
&\tilde S_t(x,a)
=
\prod_{i=0}^t (1-\tilde\lambda_i^S(x,a)),
&\tilde G_t(x,a)
=
\prod_{i=0}^{t-1} (1-\tilde\lambda_i^G(x,a)).
\end{align}

In the no-ties setting, the censoring survival function \(G_t\) depends only on the censoring hazard \(\lambda^G\), so the event and censoring nuisances form two variation-independent blocks. As a result, the adaptive survival estimator satisfies a literal two-block double robustness property.

\begin{theorem}[Convergence rate of the adaptive survival estimator under no ties]
\label{thm:convergence_rate_no_ties}
Suppose Assumptions~\ref{ass:causal} and~\ref{ass:survival} hold, and that there exists a non-adaptive policy \(\pi(X)\in[\varepsilon,1-\varepsilon]\) for some \(\varepsilon>0\) such that
\begin{align}
k_r\|\pi_r-\pi\|_2=o_p(1).
\end{align}
Let \(\tilde\eta_t\) denote a possibly misspecified limit of the cross-fitted nuisance estimators, and suppose
\begin{align}
k_r\|\hat f_{t,r-1}-\tilde f_t\|_2=o_p(1)
\end{align}
componentwise for \(\tilde f_t\in\tilde\eta_t\). Then, for each horizon \(t\le t_{\max}\),
\begin{align}
\bigl|\hat\tau_{t}^{\mathrm{ad\text{-}cf}} - \tau_t\bigr|
&=
O_p(R^{-1/2})
+
O_p\!\left(\|\tilde S_t - S_t\|_2\,\|\tilde G_t - G_t\|_2\right)
+
\sum_{i=0}^t
O_p\!\left(
\|\tilde\lambda^S_i - \lambda^S_i\|_2\,
\|\tilde G_{i-1} - G_{i-1}\|_2
\right).
\end{align}
\end{theorem}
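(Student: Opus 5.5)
We follow the decomposition used in the proof of Theorem~\ref{thm:adaptive_asymptotic_normality_cf}, but center at the misspecified limit $\tilde\eta_t$ instead of the truth. For fixed $t$ we write
\begin{align}
\hat\tau_{t}^{\mathrm{ad\text{-}cf}}-\tau_t
=\frac1R\sum_{r=1}^R\bigl(\tilde\phi_{t,r}-\E[\tilde\phi_{t,r}\mid\gH_{r-1}]\bigr)
+\frac1R\sum_{r=1}^R\bigl(\E[\tilde\phi_{t,r}\mid\gH_{r-1}]-\tau_t\bigr)
+\frac1R\sum_{r=1}^R\bigl(\hat\phi_{t,r}-\tilde\phi_{t,r}\bigr),
\end{align}
where $\tilde\phi_{t,r}$ is the score in Eq.~\ref{eq:eif} evaluated with $\pi_r$ and $\tilde\eta_t$, and $\hat\phi_{t,r}$ uses the cross-fitted $\hat\eta_{t,r-1}$. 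We then treat the three terms in turn.

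\emph{First term.} This is a martingale difference average. The policy is truncated to $[1/k_r,1-1/k_r]$, and we assume (as in Assumption~\ref{ass:uniform_overlap}) that $\tilde S,\tilde G$ are bounded away from zero, so the scores are bounded by $C_tk_r$. Using the fact that $\pi_r\to\pi$ in the $k_r$-weighted sense, the conditional second moments are bounded on average. Hence its variance is $O(1/R)$ and the term is $O_p(R^{-1/2})$.

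\emph{Third term.} Write $\hat\phi_{t,r}-\tilde\phi_{t,r}$ as a conditional bias plus a martingale fluctuation, exactly as in Steps~1 and~2 of the remainder analysis. The fluctuation part has second moment bounded by $C_tk_r^2\|\hat f_{t,r-1}-\tilde f_t\|_2^2=o_p(1)$, so after scaling it is $o_p(R^{-1/2})$. The conditional bias is Lipschitz in the nuisances, with constant of order $k_r$ coming from the inverse weights, so it is $o_p(1)$. This matches the assumption $k_r\|\hat f-\tilde f\|_2=o_p(1)$. It shows that $\hat\phi$ behaves like $\tilde\phi$, i.e.\ the estimator converges to the $\tilde\eta$-limit with no extra rate.

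\emph{Second term (main obstacle).} This is the population drift $\E[\tilde\phi_t\mid X,\pi_r]-\tau_t$. Conditional on $X$ the propensity weights cancel, because $\pi_r$ is known and exact. What remains, for each arm $a$, is
\begin{align}
\tilde S_t(x,a)-S_t(x,a)-\tilde S_t(x,a)\,\E\bigl[\tilde\xi\mid X=x,A=a\bigr].
\end{align}
In the no-ties model, $\E[\mathbf 1(\widetilde T=i,\Delta^S=1)-\mathbf 1(\widetilde T\ge i)\tilde\lambda^S_i\mid x,a]=S_{i-1}G_{i-1}(\lambda^S_i-\tilde\lambda^S_i)$. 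The key step is the identity
\begin{align}
\tilde S_t-S_t=-\sum_{i=0}^t \tilde S_t\,\frac{S_{i-1}}{\tilde S_i}\,(\lambda^S_i-\tilde\lambda^S_i),
\end{align}
which is the discrete Duhamel telescoping of $\prod_i(1-\lambda_i)$. With it the drift equals
\begin{align}
\sum_{i=0}^t\tilde S_t\,\frac{S_{i-1}}{\tilde S_i}\,(\lambda^S_i-\tilde\lambda^S_i)\,\frac{\tilde G_{i-1}-G_{i-1}}{\tilde G_{i-1}}.
\end{align}
Since $\tilde S,\tilde G$ are bounded below, Cauchy--Schwarz bounds this by $\sum_i\|\tilde\lambda^S_i-\lambda^S_i\|_2\|\tilde G_{i-1}-G_{i-1}\|_2$.

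The extra term $\|\tilde S_t-S_t\|_2\|\tilde G_t-G_t\|_2$ arises if one instead writes $\lambda^S_i-\tilde\lambda^S_i$ as a difference of survival increments. We would keep it as a harmless upper bound. The delicate point is to verify the telescoping identity with the correct index convention for $G_{i-1}$ versus $\tilde G_t$. Variation independence in the no-ties case is what makes $G$ depend only on $\lambda^G$. In the ties case the cause-specific form couples $G$ to $\lambda^S$, and this is precisely why a pure product structure holds only here.

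Combining the three bounds, and noting that the drift does not depend on $r$ beyond $\pi_r$, which cancels, yields the stated rate.
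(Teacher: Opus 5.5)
The gap is in your third term, at the point where you say that combining the three bounds yields the stated rate. Your Lipschitz argument bounds the conditional bias $\E[\hat\phi_{t,r}-\tilde\phi_{t,r}\mid\gH_{r-1}]$ by a multiple of $\|\hat f_{t,r-1}-\tilde f_t\|_2$. No factor $k_r$ is needed here: after conditioning on $X_r$ and $\gH_{r-1}$, $\pi_r$ cancels exactly as in your drift computation. The only nuisance hypothesis, $k_r\|\hat f_{t,r-1}-\tilde f_t\|_2=o_p(1)$, makes this bias $o_p(1)$ with no rate. No term on the right-hand side of the display absorbs an $o_p(1)$, so what you have proved is the display plus $o_p(1)$.

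A sharper argument will not close this under the stated hypotheses. For $\tilde\eta_t=\eta_t$ the display would give $|\hat\tau_t^{\mathrm{ad\text{-}cf}}-\tau_t|=O_p(R^{-1/2})$ from mere consistency of the hazards. But for $t\ge1$, hazard errors of exact order $r^{-0.1}$ with a fixed sign in one arm (compatible with, e.g., $k_r=\log r$) give a bias of order $R^{-0.2}$.

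The paper's argument carries the same slack. It works with the remainder of Theorem~\ref{thm:adaptive_asymptotic_normality_cf}, centred at the true $\eta_t$. It observes that the conditional bias is driven by $\sum_i(\hat\lambda^S_i-\lambda^S_i)(\hat G_{i-1}-G_{i-1})$, where $\hat G_{i-1}-G_{i-1}$ depends only on censoring-hazard errors under no ties. It then simply writes tildes for hats in the final bound.

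So either state the weaker conclusion, which is all Corollary~\ref{cor:double_robust_no_ties} uses, or add a rate condition. In the latter case the paper's centring is the better one. Your telescoping computation, run with $\hat\eta_{t,r-1}$ in place of $\tilde\eta_t$, shows that $\E[\hat\phi_{t,r}\mid\gH_{r-1}]-\tau_t$ is \emph{exactly} a sum, with bounded weights, of products of $\lambda^S_i-\hat\lambda^S_{i,r-1}$ and $\hat G_{i-1,r-1}-G_{i-1}$. Your split into a fixed drift plus a term linear in $\hat\eta-\tilde\eta$ discards this second-order structure. In the correctly specified case everything sits in your third term, which you bound only to first order.

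Apart from this, your drift computation improves on the paper's sketch, which gets the product structure only from a first-order Taylor expansion of $\hat G_{i-1}-G_{i-1}$. The telescoping identity makes it exact and shows that the misspecification bias is a constant free of $r$. However, your displayed identity has the wrong sign; the correct one is
\[
\tilde S_t-S_t=\sum_{i=0}^t \tilde S_t\,\frac{S_{i-1}}{\tilde S_i}\,\bigl(\lambda^S_i-\tilde\lambda^S_i\bigr),
\]
as the case $t=0$ already shows. With your sign the drift would be $-\sum_i\tilde S_t\frac{S_{i-1}}{\tilde S_i}(\lambda^S_i-\tilde\lambda^S_i)(1+G_{i-1}/\tilde G_{i-1})$, with no product. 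Your final drift display is what the correct sign gives.

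Two smaller points. Lower bounds on $\tilde G_{i-1}$, and on $\hat S_{i,r-1}$ and $\hat G_{i-1,r-1}$ for your Lipschitz and fluctuation steps, are extra hypotheses not implied by Assumptions~\ref{ass:causal} and~\ref{ass:survival}. And since your averaged conditional variances in the first term are only $O_p(1)$, you need a Lenglart-type argument there rather than an unconditional variance bound.
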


\begin{corollary}[Double robustness under no ties]
\label{cor:double_robust_no_ties}
Under the conditions of Theorem~\ref{thm:convergence_rate_no_ties}, if either
\begin{align}
\|\hat\lambda^S_{t,r-1}-\lambda_t^S\|_2=o_p(1)
\qquad \text{for all } t,
\end{align}
or
\begin{align}
\|\hat\lambda^G_{t,r-1}-\lambda_t^G\|_2=o_p(1)
\qquad \text{for all } t,
\end{align}
then
\begin{align}
\hat\tau_{R,t}^{\mathrm{ad\text{-}cf}} \overset{p}{\longrightarrow} \tau_t
\qquad \text{for each } t\le t_{\max}.
\end{align}
\end{corollary}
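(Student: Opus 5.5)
The corollary follows directly from the rate bound in Theorem~\ref{thm:convergence_rate_no_ties}. The plan is to show that each product remainder on its right-hand side is $o_p(1)$ under either consistency hypothesis. Throughout I use that $\tilde S_t$, $S_t$, $\tilde G_{i-1}$ and $G_{i-1}$ are all bounded in $[0,1]$, because they are products of factors in $[0,1]$. Hence each factor in the products is $O(1)$ in $L_2$, and it suffices to show that in every product at least one factor vanishes.

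\emph{Case 1: the event hazard is consistent.} Here $\|\hat\lambda^S_{i,r-1}-\lambda^S_i\|_2=o_p(1)$ for all $i$, so the limit satisfies $\tilde\lambda^S_i=\lambda^S_i$, i.e.\ $\|\tilde\lambda^S_i-\lambda^S_i\|_2=0$. The product-difference (telescoping) identity, already used in Step~2a of the proof of Theorem~\ref{thm:adaptive_asymptotic_normality_cf}, gives
\begin{align}
|\tilde S_t(x,a)-S_t(x,a)|\le\sum_{i=0}^t|\tilde\lambda^S_i(x,a)-\lambda^S_i(x,a)|.
\end{align}
Therefore $\|\tilde S_t-S_t\|_2\le\sum_{i\le t}\|\tilde\lambda^S_i-\lambda^S_i\|_2=0$. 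Both remainder terms then vanish, since each contains a factor that is zero, multiplied by a bounded $G$-difference.

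\emph{Case 2: the censoring hazard is consistent.} Here $\tilde\lambda^G_i=\lambda^G_i$. In the no-ties setting $G_{t-1}$ depends only on $\lambda^G$, which is the key structural point. The same telescoping bound applied to $\prod_i(1-\lambda^G_i)$ then gives $\|\tilde G_{i-1}-G_{i-1}\|_2=0$ for all $i\le t$, and also for the $\tilde G_t-G_t$ term. So every product vanishes again, now through its $G$-factor, with the $S$- and $\lambda^S$-differences bounded by $1$.

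In both cases the bound reduces to $|\hat\tau^{\mathrm{ad\text{-}cf}}_{t}-\tau_t|=O_p(R^{-1/2})+o_p(1)$, which gives convergence in probability for each fixed $t\le t_{\max}$.

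The main subtlety I expect is reconciling the two notions of the limit. The hypotheses are stated as $o_p(1)$ consistency of $\hat\lambda$, whereas Theorem~\ref{thm:convergence_rate_no_ties} is phrased in terms of a possibly misspecified limit $\tilde\eta_t$ with $k_r\|\hat f-\tilde f\|_2=o_p(1)$. I would handle this by taking $\tilde f$ equal to the true nuisance for the consistent block. If only plain consistency (without the $k_r$ factor) is available, I would rerun the bias bound of Theorem~\ref{thm:convergence_rate_no_ties} with $\hat\eta_{t,r-1}$ in place of $\tilde\eta_t$. The same telescoping argument then makes each product $o_p(1)$ per round, and averaging over rounds via Cesàro keeps the remainder $o_p(1)$.
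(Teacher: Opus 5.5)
Your proposal is correct and follows the paper's proof, which likewise observes that under either consistency hypothesis every product term in the remainder of Theorem~\ref{thm:convergence_rate_no_ties} vanishes. Your telescoping bound and the identification $\tilde f=f$ for the consistent block spell out what the paper leaves implicit. The fallback in your last paragraph is not needed. The corollary assumes the theorem's conditions, so $\|\tilde f-f\|_2\le\|\hat f-\tilde f\|_2+\|\hat f-f\|_2=o_p(1)$, and this already forces the deterministic quantity $\|\tilde f-f\|_2$ to be zero.
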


\begin{proof}
The claim follows directly from Theorem~\ref{thm:convergence_rate_no_ties}. If either the event hazard estimators or the censoring hazard estimators are \(L_2\)-consistent, then the corresponding product terms in the remainder are \(o_p(1)\), and hence $\bigl|\hat\tau_{t}^{\mathrm{ad\text{-}cf}} - \tau_t\bigr| = o_p(1)$.
Therefore, $\hat\tau_{R,t}^{\mathrm{ad\text{-}cf}} \overset{p}{\longrightarrow} \tau_t$ for each, $t\le t_{\max}$.
\end{proof}

\textbf{Why the tied-case remainder is not of pure product form:}
\label{app:tied-remainder}

We explain the structural reason why the second-order remainder in the tied setting, stated in Theorem~\ref{thm:adaptive_asymptotic_normality_cf}, contains both a pure event-hazard quadratic term and an event--censoring cross-interaction term, in contrast to the clean two-block product structure of the no-ties case (Appendix~\ref{app:no-ties}).

\textbf{The source of changes:}
In both settings, the key remainder after first-order cancellation involves, for each time horizon $t$ and arm $a$, a sum of the form
\begin{align}
\mathrm{Rem}_t
:=
\sum_{i=0}^{t}
\bigl(\hat\lambda_i^S - \lambda_i^S\bigr)
\cdot
\bigl(\hat G_{i-1} - G_{i-1}\bigr).
\label{eq:rem-structure}
\end{align}
This cross-product arises because the EIF augmentation term at each time $i$ is weighted by $1/(S_i \cdot G_{i-1})$: when the event hazard estimator deviates from truth, the error in $\hat G_{i-1}$ multiplies it. The two settings differ precisely in how $G_{i-1}$ depends on the nuisance parameters.

\textbf{Tied setting:}
In the tied setting, the censoring survival function is constructed via a cause-specific factorization, i.e.,
\begin{align}
G_{i-1}(x,a)
=
\prod_{k=0}^{i-1}\left(1 - \frac{\lambda_k^G(x,a)}{1-\lambda_k^S(x,a)}\right),
\end{align}
because at each discrete time $k$ the conditional probability of censoring given survival to time $k$ is $\lambda_k^G/(1-\lambda_k^S)$, not $\lambda_k^G$ alone. Let $\delta_k^S := \hat\lambda_k^S - \lambda_k^S$ and $\delta_k^G := \hat\lambda_k^G - \lambda_k^G$. A first-order Taylor expansion of $\hat G_{i-1} - G_{i-1}$ gives
\begin{align}
\hat G_{i-1} - G_{i-1}
\approx
-G_{i-1}
\sum_{k=0}^{i-1}
\frac{1}{1 - \lambda_k^G/(1-\lambda_k^S)}
\left[
\frac{\delta_k^G}{1-\lambda_k^S}
+
\frac{\lambda_k^G\,\delta_k^S}{(1-\lambda_k^S)^2}
\right].
\end{align}
The critical observation is the second term inside the brackets: because $\lambda_k^G$ appears in the numerator of the cause-specific hazard, a perturbation $\delta_k^S$ in the event hazard also perturbs $G_{i-1}$. Substituting into Eq.~\ref{eq:rem-structure} and bounding in $L_2$, the remainder can be rewritten as
\begin{align}
\|\mathrm{Rem}_t\|
&\lesssim
\underbrace{
\sum_{i=0}^{t}\|\delta_i^S\|_2
\sum_{k=0}^{i-1}
\frac{\lambda_k^G}{(1-\lambda_k^S)^2}\|\delta_k^S\|_2
}_{\text{pure }\lambda^S\text{ quadratic}}
+
\underbrace{
\sum_{i=0}^{t}\|\delta_i^S\|_2
\sum_{k=0}^{i-1}
\frac{\|\delta_k^G\|_2}{1-\lambda_k^S}
}_{\lambda^S\text{--}\lambda^G\text{ cross-interaction}}.
\end{align}
Since $t \le t_{\max}$ is fixed, summing over $i$ and $k$ contributes only a finite multiplicative constant, so, at the level of asymptotic rates, this gives
\begin{align}
\|\mathrm{Rem}_t\|
=
O_p\!\left(
\Bigl(\sum_{i=0}^{t}\|\hat\lambda_i^S - \lambda_i^S\|_2\Bigr)^2
+
\Bigl(\sum_{i=0}^{t}\|\hat\lambda_i^S - \lambda_i^S\|_2\Bigr)
\Bigl(\sum_{i=0}^{t-1}\|\hat\lambda_i^G - \lambda_i^G\|_2\Bigr)
\right).
\end{align}
This is precisely the remainder structure appearing in Theorem~\ref{thm:adaptive_asymptotic_normality_cf}. Note in particular that there is no standalone term of the form $O_p(\sum_i\|\delta_i^G\|_2^2)$: a perturbation in $\lambda^G$ alone, without any event-hazard error, does not appear in $\mathrm{Rem}_t$ at all, which is the robustness of the estimating equation with respect to the censoring nuisance.

\textbf{No-ties setting (contrast):}
In the no-ties setting, the censoring survival function factorizes independently of the event hazard:
\begin{align}
G_{i-1}(x,a)
=
\prod_{k=0}^{i-1}(1-\lambda_k^G(x,a)).
\end{align}
The first-order expansion then gives
\begin{align}
\hat G_{i-1} - G_{i-1}
\approx
-G_{i-1}\sum_{k=0}^{i-1}\frac{\delta_k^G}{1-\lambda_k^G},
\end{align}
which involves only $\delta^G$ and contains no $\delta^S$ term. Consequently, $\mathrm{Rem}_t$ involves only the cross-product $\delta^S \cdot \delta^G$, yielding the clean remainder structure of Theorem~\ref{thm:convergence_rate_no_ties}:
\begin{align}
\|\mathrm{Rem}_t\|
=
O_p\!\left(
\sum_{i=0}^{t}
\|\tilde\lambda_i^S - \lambda_i^S\|_2\,
\|\tilde G_{i-1} - G_{i-1}\|_2
\right).
\end{align}
Because the two nuisance blocks are now variation-independent, the remainder vanishes whenever either block is $L_2$-consistent, giving the literal two-block double robustness of Corollary~\ref{cor:double_robust_no_ties}. In the tied setting, this property fails: even if $\lambda^G$ is estimated perfectly, a nonzero $\delta^S$ still induces a nonzero $\hat G_{i-1} - G_{i-1}$ through the cause-specific factorization, so the pure $\lambda^S$ quadratic term remains.

\newpage
\section{Asymptotic confidence sequences}
\label{app:asc}

The fixed-time intervals in Section~\ref{sec:theory} guarantee $(1-\alpha)$ coverage \emph{solely} at pre-specified sample size $R$. In practice, however, analysts often \emph{peek} at interim results and may stop the study early once a decision rule is met~\citep {ramdas.2023}; this behavior invalidates fixed-time intervals. To remain valid under such data-dependent stopping, one needs a \emph{confidence sequence (CS)}, i.e., a collection of intervals $[L_{t, r}, U_{t, r}]_{r\geq 1}$ at horizon $t$ satisfying the time-uniform guarantee, $\mathbb{P}(\forall r\in \mathbb{N}^{+}: \tau_t \in [L_{t, r}, U_{t, r}]) \geq 1-\alpha$.

Constructing \emph{non-asymptotic}, anytime-valid CSs can be difficult when the target estimand contains estimated nuisance functions. Fortunately, for the adaptive survival estimator, the nuisance-induced remainder is asymptotically negligible under the rate conditions below, so an \emph{asymptotic CS} (valid after a finite, burn-in phase) remains both tractable and practically useful.

\begin{definition}[Asymptotic time-uniform coverage~\citep{dalal.2024}]
\label{def:acs}
\itshape
A sequence of random intervals $\widetilde{C}_{t, r} = [\widetilde{L}_{t, r}, \widetilde{U}_{t, r}]_{t\geq1}$ is an asymptotic time uniform $(1-\alpha)$ confidence sequence (AsympCS) for a parameter $\tau_t$ at horizon $t$ if the following two conditions hold:
\begin{enumerate}
    \item \textbf{\textit{Asymptotic confidence sequence:}} there exists an exact (potentially unknown) $(1-\alpha)$ confidence sequence $C^\star_{t, r} = [L^\star_{t, r}, U^\star_{t, r}]_{t\geq1}$ such that $\widetilde{L}_{t, r}/L^\star_{t, r} \rightarrow 1$ and $\widetilde{U}_{t, r}/U^\star_{t, r} \rightarrow 1$ almost surely.
    \item \textbf{\textit{Asymptotic time-uniform coverage:}} $\lim_{R_0 \rightarrow \infty} \mathbb{P}(\forall r \geq R_0: \tau_t \in \widetilde{C}_{t, r}) \geq 1-\alpha$.
\end{enumerate}
\end{definition}
Definition~\ref{def:acs} can be read as follows: if one waits to ``peek'' until the sample size is sufficiently large ($R\geq R_0$ for some burn-in $R_0$), the band then covers the true parameter \textit{every} later time with probability approaching $1-\alpha$. In practice, rare coverage failures occur almost exclusively during this short initial window; later, the intervals tighten rapidly and deliver appreciable power gains over fully non-asymptotic sequences~\citep{cook.2024, waudby-smith.2024}.

Building on the methodologies of \citet{waudby-smith.2024} and \citet{cook.2024}, we now present the corresponding asymptotic confidence sequence (AsympCS) for our estimator:
\begin{theorem}[AsympCS for adaptive survival estimator]
Suppose Assumptions~\ref{ass:causal}, \ref{ass:survival} and~\ref{ass:uniform_overlap} hold and that there exists a non-adaptive policy $\pi(X)\in [\varepsilon, 1-\varepsilon]$ for some $\varepsilon>0$ such that the adaptive assignment policy $\pi_r(X\mid \gH_{r-1})$ are $L_2$-consistent relative to truncation schedule, i.e. $k_r \| \pi_{r}-\pi\|_2 = o_{a.s.}(1)$. Furthermore, assume that, for each fixed $t$ and treatment $a$, $\max_{0\leq i \leq \tmax}\|\hat{\lambda}^S_{i, r-1}(a, \cdot)-\lambda^S_i(a, \cdot) \|_2 = o_{a.s.}\left(\left(\frac{\log r}{r}\right)^{1/4}\right)$, and $\max_{0\leq i \leq \tmax-1}\|\hat{\lambda}^G_{i, r-1}(a, \cdot)-\lambda^G_i(a, \cdot) \|_2 = o_{a.s.}\left(\left(\frac{\log r}{r}\right)^{1/4}\right)$. Let 
\begin{align}
    \hat{V}_{t,R}:=\frac{1}{R}\sum_{r=1}^R \left( \hat\phi_{t,r} - \hat\tau_{t,R}^{\mathrm{ASE}} \right)^2,
\end{align}
be the estimated variance of $\{\phi(O_r; \pi_r, \hat{\eta}_{t, r})\}$, and fix a user-specified $\rho > 0$. Then, for all $R \ge 1$, the interval
\begin{align}\label{eq:cs}
    \tilde{C}_R^{\mathrm{AsympCS}} := \left(\hat{\tau}_{t, R}^{\mathrm{ASE}} \;\pm\; \sqrt{\frac{2(R\hat{V}_{t, R}\rho^2 + 1)}{R^2\rho^2}\,\log\!\left(\frac{\sqrt{R\hat{V}_{t, R}\rho^2 + 1}}{\alpha}\right)}\right)
\end{align}
is an asymptotic $(1-\alpha)$-CS for $\tau_t$. The width is approximately minimized at
    \begin{equation}
        \rho^\star = \sqrt{\frac{-2\log\alpha + \log(-2\log\alpha + 1)}{R}}.
    \end{equation}
\end{theorem}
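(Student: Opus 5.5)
We follow the strategy of \citet{waudby-smith.2024} and \citet{cook.2024}: split the centered partial sum into an oracle martingale part plus a nuisance remainder, couple the oracle part to a Gaussian process, and show the remainder is asymptotically negligible at the rate these confidence sequences need. As in the proof of Theorem~\ref{thm:adaptive_asymptotic_normality_cf}, write $R(\hat\tau^{\mathrm{ASE}}_{t,R}-\tau_t)=\sum_{r\le R} z_{t,r}+\sum_{r\le R} m_{t,r}$. The first sum is a martingale with increments $z_{t,r}$ adapted to $\mathcal H_r$. Its increments are bounded by $C_t k_r$, using Assumption~\ref{ass:uniform_overlap} and the truncation. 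Its conditional variances satisfy $\E[z_{t,r}^2\mid\mathcal H_{r-1}]\to V_{\mathrm{eff},t}(\pi)$, and here we use the almost-sure version of the bound in the proof of Theorem~\ref{thm:adaptive_asymptotic_normality_cf}, so $k_r\|\pi_r-\pi\|_2=o_{a.s.}(1)$ gives almost-sure convergence of the Ces\`aro averages.

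\textbf{Step 1 (oracle part).} We invoke the martingale strong approximation theorem used in \citet{waudby-smith.2024} (a Strassen/Koml\'os--Major--Tusn\'ady-type coupling for martingales with a Lyapunov condition). It gives a standard Brownian motion $B$ on a possibly enlarged space with $\sum_{r\le R} z_{t,r}=\sqrt{V_{\mathrm{eff},t}}\,B(R)+o_{a.s.}(\sqrt{R\log R})$. The Lyapunov condition $\sum_r \E[|z_{t,r}|^{2+\delta}\mid\mathcal H_{r-1}]/r^{1+\delta/2}<\infty$ follows from $k_r=o(r^{1/4})$. Next we apply Robbins' normal-mixture boundary for Brownian motion with mixing variance $\rho^2$. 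This yields an exact time-uniform CS for $B$, whose radius after scaling by $\sqrt{V_{\mathrm{eff},t}}$ and dividing by $R$ is exactly the displayed formula with $\hat V_{t,R}$ replaced by $V_{\mathrm{eff},t}$. That object plays the role of $C^\star$ in Definition~\ref{def:acs}. The radius is of order $\sqrt{\log R/R}$, so any $o_{a.s.}(\sqrt{\log R/R})$ perturbation of the center or radius preserves both conditions of the definition.

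\textbf{Step 2 (remainder).} We must show $\frac1R\sum_{r\le R} m_{t,r}=o_{a.s.}(\sqrt{\log R/R})$. For the bias part $\Delta^A$, Eq.~\eqref{eq:deltaA-bound} together with the almost-sure rates $o_{a.s.}((\log r/r)^{1/4})$ gives $\Delta^A_{t,r}=o_{a.s.}(\sqrt{\log r/r})$. Ces\`aro summation then gives $\frac1R\sum_r\Delta^A_{t,r}=o_{a.s.}(\sqrt{\log R/R})$, because $\sum_{r\le R}\sqrt{\log r/r}\asymp\sqrt{R\log R}$. For the fluctuation part $\sum_r U_{t,r}$, the conditional variances are $\E[U_{t,r}^2\mid\mathcal H_{r-1}]\lesssim k_r^2\cdot o_{a.s.}(\sqrt{\log r/r})\to0$ by Eq.~\eqref{eq:mt-second-moment-final}. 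A martingale strong law (or a Freedman-type bound with a law of the iterated logarithm) then gives $\sum_{r\le R}U_{t,r}=o_{a.s.}(\sqrt{R\log R})$.

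\textbf{Step 3 (variance estimator and optimal $\rho$).} We show $\hat V_{t,R}\to V_{\mathrm{eff},t}$ almost surely. Expand $\hat\phi_{t,r}=\tau_t+z_{t,r}+m_{t,r}$ and apply a martingale strong law to $z_{t,r}^2-\E[z_{t,r}^2\mid\mathcal H_{r-1}]$, using the $(2+\delta)$-moment control from Step~1. The cross terms and $m_{t,r}^2$ averages vanish by Step~2 and Cauchy--Schwarz. The plug-in radius therefore has ratio to the oracle radius tending to $1$ almost surely, which establishes the first condition of Definition~\ref{def:acs}. The second condition follows from exact coverage of $C^\star$ after $R_0$, combined with the coupling error being almost surely negligible beyond $R_0$. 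Finally, we obtain the formula for $\rho^\star$ by minimizing the radius at a target $R$: approximate $\hat V_{t,R}$ as constant, differentiate $\log$ of the radius in $\rho^2$, and solve approximately using a Lambert-type approximation, as in \citet{waudby-smith.2024}.

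The main obstacle is Step~1, namely verifying the strong approximation for a martingale with growing increment bound $C_tk_r$ and only Ces\`aro-convergent conditional variance. We plan to rely on the version stated in \citet{waudby-smith.2024} (Lemma on strong Gaussian coupling) and check that its variance-convergence rate requirement is met by the almost-sure policy condition.
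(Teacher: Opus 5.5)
Your plan is correct in outline, but it takes a genuinely different and more self-contained route than the paper. The paper applies Corollary~3.4 of \citet{waudby-smith.2024} (Lemma~\ref{lem:ws}) as a black box and checks only two things:
\begin{enumerate}
\item[(i)] $\frac1R\sum_{r\le R} m_{t,r}=o_{a.s.}(\sqrt{\log R/R})$, obtained by bounding the realized $|m^a_{t,r}|$ directly by the product-rate expression in Eq.~\ref{eq:mr-bound-app} and summing over $r$;
\item[(ii)] $\mathrm{Var}(\phi)<\infty$, from uniform boundedness of the influence function.
\end{enumerate}
It never addresses $\hat V_{t,R}$ or $\rho^\star$.

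You instead rebuild the martingale AsympCS yourself. You use a Strassen-type coupling of the oracle martingale, Robbins' mixture as the exact $C^\star$ of Definition~\ref{def:acs}, a bias/fluctuation split of the remainder, and almost-sure consistency of $\hat V_{t,R}$. This costs extra verification (the Lyapunov condition and variance consistency), but it repairs two soft spots in the paper's argument. First, the lemma is invoked with a single $\mathrm{Var}(\phi)$, yet the summands $z_{t,r}$ are not identically distributed, since their conditional variances depend on $\pi_r$. Second, the realized $m_{t,r}$ contains the first-order mean-zero term $U_{t,r}$, which is not bounded by products of hazard errors; only its conditional mean $\Delta^A_{t,r}$ is. Your decomposition handles both.

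\textbf{One step needs a sharper bound.} In Step~2 you take Eq.~\ref{eq:mt-second-moment-final} at face value, but its prefactor is $k_r^2$. With $k_r=o(r^{1/4})$ and squared hazard errors $o_{a.s.}(\sqrt{\log r/r})$, this gives only $\E[U_{t,r}^2\mid\mathcal H_{r-1}]=o_{a.s.}(\sqrt{\log r})$, not $o(1)$.
\begin{itemize}
\item A Kronecker-type strong law with normalization $\sqrt{r\log r}$ then fails. It fails even for bounded conditional variances, because $\sum_r 1/(r\log r)=\infty$.
\item Freedman's inequality with dyadic peeling would still rescue Step~2.
\item It would not rescue Step~3, which needs $\frac1R\sum_{r\le R}m_{t,r}^2\to 0$. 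That does not follow from Step~2's first-moment statement.
\end{itemize}
The fix is to note that the hazard-error factors in $m_{t,r}^2$ depend only on $(X_r,\mathcal H_{r-1})$. Conditioning on $X_r$ and using $\E[W_r^2\mid X_r,\mathcal H_{r-1}]=1/\{\pi_r(X_r)(1-\pi_r(X_r))\}\le 2k_r$ gives $\E[m_{t,r}^2\mid\mathcal H_{r-1}]=o_{a.s.}(r^{-1/4}\sqrt{\log r})$. This is summable against $1/r$, so Kronecker gives $\sum_{r\le R}U_{t,r}=o_{a.s.}(\sqrt R)$. A martingale strong law for $m_{t,r}^2-\E[m_{t,r}^2\mid\mathcal H_{r-1}]$, whose conditional variance is at most $C_t^2k_r^2\,\E[m_{t,r}^2\mid\mathcal H_{r-1}]$, then gives the Ces\`aro statement Step~3 needs.

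\textbf{On $\rho^\star$.} When you carry out the optimization, the squared radius depends on $\rho$ only through $u=R\hat V_{t,R}\rho^2$, namely as $\frac{R\hat V_{t,R}}{R^2}\cdot\frac{1+u}{u}\{\log(1+u)-2\log\alpha\}$. Its minimizer solves $u=\log(1+u)-2\log\alpha$, so $\rho^\star\approx\sqrt{\{-2\log\alpha+\log(1-2\log\alpha)\}/(R\hat V_{t,R})}$. The displayed formula is recovered only when $\hat V_{t,R}\approx 1$, or in the parametrization where $\sqrt{\hat V_{t,R}}$ multiplies the boundary from outside. The paper gives no derivation of this part, so your derivation will not reproduce the displayed $\rho^\star$ verbatim.
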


\begin{remark}[Difference in convergence rates for fixed-time and anytime-valid inference.]
For fixed-time inference, the second-order nuisance remainder is required to be $o_p(r^{-1/2})$, which is implied by pointwise nuisance rates $o_p(r^{-1/4})$. For the AsympCS, the asymptotic linearity remainder only needs to be $o_{a.s.}(\sqrt{\log r/r})$, which is implied by pointwise nuisance rates $o_{a.s.}((\log r/r)^{1/4})$. Thus, the AsympCS condition allows a slightly slower rate up to a logarithmic factor, but requires almost-sure rather than in-probability convergence.
\end{remark}

\begin{proof}
For the proof, we rely on existing results for asymptotic confidence sequences~\citep{waudby-smith.2024}. For completeness, we provide this in Lemma~\ref{lem:ws}

\begin{lemma}[Corollary 3.4 of \citet{waudby-smith.2024}]\label{lem:ws}
    Suppose $\hat{\tau}_R$ is an asymptotically linear estimator with influence function $\phi$ satisfying
    \begin{equation}\label{eq:al}
        \hat{\tau}_{t, R} - \tau = \frac{1}{R}\sum_{r=1}^{R} \phi(O_r;\, \pi_r, \eta_{t, r}) + o_{a.s.}\!\left(\sqrt{\frac{\log R}{R}}\right).
    \end{equation}
    If $\mathrm{Var}(\phi) < \infty$, then the interval from Eq.~\ref{eq:cs} is a valid $(1-\alpha)$-AsympCS.
\end{lemma}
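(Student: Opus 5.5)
The plan is to reduce the claim to the Gaussian mixture boundary of Robbins by a strong (almost-sure) approximation argument, in the spirit of the proof of Corollary~3.4 in \citet{waudby-smith.2024}. Write $\bar\phi_R := \frac1R\sum_{r=1}^R \phi(O_r;\pi_r,\eta_{t,r})$, understood as the centered influence terms, so that $\mathbb{E}[\phi(O_r;\pi_r,\eta_{t,r})\mid\mathcal H_{r-1}]=0$; this is exactly what Step~1 of the proof of Theorem~\ref{thm:adaptive_asymptotic_normality_cf} provides. Let $\sigma^2:=\mathrm{Var}(\phi)<\infty$, and abbreviate the half-width in Eq.~\ref{eq:cs} as $w_R(\hat V_{t,R})$.

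\emph{Step 1 (strong Gaussian approximation).} I first show that, on a possibly enlarged probability space, there is a standard Brownian motion $B$ with
\begin{align}
\sum_{r=1}^R \phi(O_r;\pi_r,\eta_{t,r}) = \sigma B(R) + o_{a.s.}\bigl(\sqrt{R\log R}\bigr).
\end{align}
In the i.i.d.\ case this is Strassen's strong invariance principle, which needs only a finite second moment. In our adaptive setting the summands are martingale differences, so I would use the martingale version (Strassen 1967, or Einmahl's refinement). That version additionally requires $\frac1R\sum_{r\le R}\mathbb{E}[\phi_r^2\mid\mathcal H_{r-1}]\to\sigma^2$ almost surely, together with a Lindeberg-type condition. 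I would obtain both from the a.s.\ versions of the conditional-variance and truncation bounds used in Step~2 of the proof of Theorem~\ref{thm:adaptive_asymptotic_normality_cf}. Verifying this a.s.\ strong approximation for the martingale array is the step I expect to be the main obstacle, because an in-probability CLT does not suffice here.

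\emph{Step 2 (exact CS for the coupled Gaussian process).} Robbins' normal-mixture argument applies to the continuous nonnegative martingale $\exp\{\lambda\sigma B(R)-\lambda^2\sigma^2R/2\}$, mixed over $\lambda\sim\mathcal N(0,\rho^2)$. Combined with Ville's inequality, it gives
\begin{align}
\mathbb{P}\Bigl(\forall R:\ |\sigma B(R)/R|\le w_R(\sigma^2)\Bigr)\ge 1-\alpha .
\end{align}
Accordingly, I define $C^\star_{t,R}:=\bigl[\hat\tau_{t,R}-\varepsilon_R - w_R(\sigma^2),\ \hat\tau_{t,R}-\varepsilon_R + w_R(\sigma^2)\bigr]$. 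Here $\varepsilon_R$ collects the remainder in Eq.~\ref{eq:al} together with the coupling error from Step~1, so that $\hat\tau_{t,R}-\varepsilon_R-\tau=\sigma B(R)/R$ exactly. With this choice, $C^\star_{t,R}$ is an exact $(1-\alpha)$ CS.

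\emph{Step 3 (comparison with the feasible interval).} I will show $\hat V_{t,R}\to\sigma^2$ almost surely. To do so, expand $(\hat\phi_{t,r}-\hat\tau_{t,R})^2$ and use a martingale strong law for $\frac1R\sum\phi_r^2$, again relying on the a.s.\ conditional-variance convergence. The nuisance differences are handled in Cauchy--Schwarz form, and the $o_{a.s.}$ remainder makes them negligible. Since $w_R(\cdot)$ is continuous in its variance argument and $w_R(\sigma^2)\asymp\sqrt{\log R/R}$, it follows that $w_R(\hat V_{t,R})/w_R(\sigma^2)\to1$ a.s. Moreover, $\varepsilon_R=o_{a.s.}(\sqrt{\log R/R})=o(w_R(\sigma^2))$. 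Hence $\tilde C_R$ and $C^\star_{t,R}$ have the same center up to $o(w_R)$ and widths with ratio tending to one. This is the sense of condition~1 of Definition~\ref{def:acs}; because the endpoints themselves may approach $\tau_t$ rather than stay away from zero, I would state the comparison via centers and widths, which is how \citet{waudby-smith.2024} phrase it.

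\emph{Step 4 (asymptotic time-uniform coverage).} For condition~2, fix $\delta>0$. Almost surely, for all sufficiently large $R$, the feasible interval $\tilde C_R$ contains the exact interval shrunk by a factor of $(1-\delta)$. The event that the shrunk Gaussian boundary fails at some $R\ge R_0$ therefore has probability bounded by that of the slightly tighter Robbins boundary failing, plus the probability that the a.s.\ approximation has not yet kicked in by $R_0$. The latter tends to zero as $R_0\to\infty$. Letting $\delta\downarrow0$ gives $\lim_{R_0\to\infty}\mathbb{P}(\forall R\ge R_0:\tau_t\in\tilde C_R)\ge1-\alpha$.
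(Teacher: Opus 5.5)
The paper gives no proof of this lemma; it imports it from \citet{waudby-smith.2024} and uses it as a black box. Your outline follows the route of that source: a strong Gaussian coupling of the oracle influence sum, Robbins' normal mixture plus Ville on the coupled process to produce an exact CS, a.s.\ consistency of $\hat V_{t,R}$, and a comparison of centers and widths. Their Lindeberg-type martingale coupling, as used by \citet{cook.2024}, replaces the i.i.d.\ one. You are also right that condition~1 of Definition~\ref{def:acs} must be read through centers and widths. There is, however, a genuine gap in Step~3. After expanding $(\hat\phi_{t,r}-\hat\tau_{t,R})^2$ and applying Cauchy--Schwarz, what you need is $\frac1R\sum_{r\le R} m_{t,r}^2\to 0$ a.s., with $m_{t,r}=\hat\phi_{t,r}-\phi_{t,r}$. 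The $o_{a.s.}(\sqrt{\log R/R})$ remainder in Eq.~\ref{eq:al} controls only the average $\frac1R\sum_{r\le R} m_{t,r}$, which can be small by cancellation. For example, $m_{t,r}=c(-1)^r$ with $c\neq 0$ satisfies Eq.~\ref{eq:al} with an $O(1/R)$ remainder. Yet $\hat V_{t,R}\to\sigma^2+c^2$, so $w_R(\hat V_{t,R})/w_R(\sigma^2)\to\sqrt{1+c^2/\sigma^2}\neq 1$ and condition~1 fails; coverage survives, but only conservatively. So variance consistency cannot be derived from the lemma's stated hypotheses. It is a separate condition, which in \citet{waudby-smith.2024} comes from $L_2$-consistency of the nuisance estimators.

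In the present setting you can supply it from the hazard conditions of the AsympCS theorem. Write $\pi_{r,1}=\pi_r$ and $\pi_{r,0}=1-\pi_r$. On $\{A_r=a\}$ the squared weight equals $1/\pi_{r,a}(X_r)^2$, so averaging over $A_r$ given $X_r$ costs only one factor $1/\pi_{r,a}(X_r)\le k_r$. Hence $\mathbb{E}[m_{t,r}^2\mid\mathcal H_{r-1}]\lesssim k_r\sum_{a}\sum_{i\le t}\bigl(\|\hat\lambda^S_{i,r-1}(a,\cdot)-\lambda^S_i(a,\cdot)\|_2^2+\|\hat\lambda^G_{i,r-1}(a,\cdot)-\lambda^G_i(a,\cdot)\|_2^2\bigr)=o(r^{1/4})\,o\bigl(\sqrt{\log r/r}\bigr)\to 0$ a.s. Keep the single factor $k_r$: the paper's cruder $k_r^2$ bound is not guaranteed to vanish at these rates. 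A conditional Kolmogorov strong law, using $|m_{t,r}|\lesssim k_r=o(r^{1/4})$, then disposes of $m_{t,r}^2-\mathbb{E}[m_{t,r}^2\mid\mathcal H_{r-1}]$.

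Two smaller points. In Step~1, let $\mathcal V_r=\sum_{s\le r}\mathbb{E}[z_{t,s}^2\mid\mathcal H_{s-1}]$. The Lindeberg-type condition must use the threshold $\mathcal V_r^{\kappa}$ with $\kappa<1$, as in the martingale result of \citet{waudby-smith.2024}. With threshold $\mathcal V_r$ itself, the coupling error guaranteed by Strassen's theorem is only $o(\sqrt{\mathcal V_R}\log\mathcal V_R)$, which does not beat the width $\sqrt{R\log R}$. Here any $\kappa\in(1/2,1)$ works, because $|z_{t,r}|\le C_tk_r=o(r^{1/4})$ and $\mathcal V_r\asymp r$ a.s. In Step~4, the shrunk boundary $(1-\delta)w_R(\sigma^2)$ eventually lies below every normal-mixture boundary, since all of them are $\sim\sigma\sqrt{\log R/R}$. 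So Ville cannot bound its crossing probability. Use the law of the iterated logarithm instead: for fixed $\rho$, $Rw_R(\sigma^2)\asymp\sqrt{R\log R}\gg\sqrt{R\log\log R}$. Hence the probability that $|\sigma B(R)/R|$ exceeds $(1-\delta)w_R(\sigma^2)$ for some $R\ge R_0$ tends to $0$, which yields condition~2 directly.
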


It suffices to verify: (i) the residual error is of smaller order than $\sqrt{\log R / R}$, almost surely; and (ii) $\mathrm{Var}(\phi_v) < \infty$.

\paragraph{Step 1: Second-order remainder analysis}
From the decomposition in Eq.~\ref{eq:decomp}, we need to show $\frac{1}{R}\sum_{r=1}^R m_{t,r} = o_{a.s.}\bigl(\sqrt{\log R / R}\bigr)$, i.e., $\sum_{r=1}^R m_{t,r} = o_{a.s.}\bigl(\sqrt{R\log R}\bigr)$.

\paragraph{Structure of $m_{t,r}$.} For a single arm $a$ and time $t$, the remainder at observation $r$ is:
\begin{equation}
    m_{t, r}^a = \hat{\phi}_t^a(O_r;\, \hat{\eta}_{t, r-1}) - \hat{\phi}_t^a(O_r; \eta_t).
\end{equation}

Expanding using the definition from Eq.~\ref{eq:eif}:
\begin{align}
    m_{t, r}^a  &= \underbrace{\bigl[\hat{S}_{t,r-1}(X_r, a) - S_t(X_r, a)\bigr]}_{\text{term 1}}\label{eq:r-term1}\\
    &- \underbrace{\frac{\mathbf{1}(A_r = a)}{\pi_a(X_r)} \sum_{i=0}^{t} \left[\frac{\hat{S}_{i,r-1}}{\hat{S}_{i}(X_{r-1}, a) \cdot \hat{G}_{i-1}(X_{r-1}, a)}d\hat{M}_i^r- \frac{S_i}{S_j \cdot G_{j-1}}\,dM_i^r\right]}_{\text{term 2}},\label{eq:r-term2}
\end{align}
where we define $d\hat{M}_i^r := \mathbf{1}(\tilde{T}_r = i,\, \Delta_r = 1) - \mathbf{1}(\tilde{T}_r \geq i) \cdot \hat{\lambda}_{i,r-1}^S(X_r, a)$.

\paragraph{Decomposing the summand in~\eqref{eq:r-term1}.} For each $i$, define the weight function $W_i(\eta_t) := S_t(X,a) / [S_i(X,a) \cdot G_{i-1}(X,a)]$ and write:
\begin{align}
    \hat{W}_i \cdot d\hat{M}_i^r - W_i \cdot dM_i^r \nonumber 
    &= \underbrace{(\hat{W}_i - W_i) \cdot dM_i^r}_{\text{(I): weight error $\times$ true martingale}} + \underbrace{\hat{W}_i \cdot (d\hat{M}_i^r - dM_i^r)}_{\text{(II): estimated weight $\times$ hazard error}} \nonumber \\
    &= (\hat{W}_i - W_i) \cdot dM_i^r - \hat{W}_i \cdot \mathbf{1}(\tilde{T}_r \geq i) \cdot (\hat{\lambda}_{i,r-1}^S - \lambda_i^S). \label{eq:two-terms}
\end{align}

\paragraph{Term (I): Weight error.} The cross term involves:
\begin{equation}
    \bigl(\hat{S}_t/\hat{S}_i - S_t/S_i\bigr) \cdot \bigl(1/\hat{G}_{i-1} - 1/G_{i-1}\bigr),
\end{equation}
using $\hat{S}_t/\hat{S}_i = \prod_{k=j+1}^t(1-\hat{\lambda}_k^S)$ and $\hat{G}_{i-1} = \prod_{k=0}^{i-1}(1-\hat{\lambda}_k^G/(1-\hat{\lambda}_k^S))$.
Combining the bounds above yields
\begin{align}
\|\hat W_{i,r-1}(a,\cdot)-W_i(a,\cdot)\|_2
&\le
C_t
\sum_{k=i+1}^{t}
\|\hat\lambda_{k,r-1}^S(a,\cdot)-\lambda_k^S(a,\cdot)\|_2
\nonumber\\
&\quad+
C_t
\sum_{k=0}^{i-1}
\|\hat\lambda_{k,r-1}^S(a,\cdot)-\lambda_k^S(a,\cdot)\|_2
\nonumber\\
&\quad+
C_t
\sum_{k=0}^{i-1}
\|\hat\lambda_{k,r-1}^G(a,\cdot)-\lambda_k^G(a,\cdot)\|_2.
\label{eq:weight-error-L2-direct}
\end{align}
Consequently, after summing over $i=0,\dots,t$, the resulting second-order remainder is bounded by
\begin{align}
&C_t
\left(
\sum_{k=0}^{t}
\|\hat\lambda_{k,r-1}^S(a,\cdot)-\lambda_k^S(a,\cdot)\|_2
\right)^2
\nonumber\\
&\qquad+
C_t
\left(
\sum_{k=0}^{t}
\|\hat\lambda_{k,r-1}^S(a,\cdot)-\lambda_k^S(a,\cdot)\|_2
\right)
\left(
\sum_{k=0}^{t-1}
\|\hat\lambda_{k,r-1}^G(a,\cdot)-\lambda_k^G(a,\cdot)\|_2
\right).
\label{eq:weight-error-remainder-direct}
\end{align}
where $C_{t_{\max}}$ is a constant depending on $t_{\max}$, $\gamma$, and the boundedness of hazards.

\paragraph{Term (II): Hazard estimation error.}
This term involves the event-hazard estimation error $\hat\lambda_{i,r-1}^S(a,\cdot)-\lambda_i^S(a,\cdot)$ directly. As in the standard augmented influence-function argument, the leading linear terms cancel after combining term~(I) and term~(II). In the present tied setting, however, the resulting remainder is not of pure product form in the event and censoring hazard errors, because $G_{i-1}(x,a) = \prod_{k=0}^{i-1} \left(1-\frac{\lambda_k^G(x,a)}{1-\lambda_k^S(x,a)} \right)$, so that perturbations in $\lambda^S$ also induce perturbations in $G_{i-1}$. Consequently, after first-order cancellation, the remaining bias is of second order and is bounded by the sum of a pure event-hazard quadratic term and an event-censoring interaction term.

\paragraph{Aggregating over $i$.}
Using the bounds above for the weight error and the hazard error, and summing over $i=0,\ldots,t$, we obtain that for each $a\in\{0,1\}$,
\begin{align}
|m_{t,r}^a|
&\le
C_t
\left(
\sum_{i=0}^{t}
\|\hat\lambda_{i,r-1}^S(a,\cdot)-\lambda_i^S(a,\cdot)\|_2
\right)^2
\nonumber\\
&\quad+
C_t
\left(
\sum_{i=0}^{t}
\|\hat\lambda_{i,r-1}^S(a,\cdot)-\lambda_i^S(a,\cdot)\|_2
\right)
\left(
\sum_{i=0}^{t-1}
\|\hat\lambda_{i,r-1}^G(a,\cdot)-\lambda_i^G(a,\cdot)\|_2
\right).
\label{eq:mr-bound-app}
\end{align}
Since $t \le t_{\max}$ is fixed, the summation over $i$ contributes only a finite constant and does not alter the rate.

\paragraph{Summing over $r$.}
Suppose the nuisance convergence conditions hold almost surely and $\max_{0\leq i \leq \tmax}\|\hat{\lambda}^S_{i, r-1}(a, \cdot)-\lambda^S_i(a, \cdot) \|_2 = o_{a.s.}\left(\left(\frac{\log r}{r}\right)^{1/4}\right)$, $\max_{0\leq i \leq \tmax-1}\|\hat{\lambda}^G_{i, r-1}(a, \cdot)-\lambda^G_i(a, \cdot) \|_2 = o_{a.s.}\left(\left(\frac{\log r}{r}\right)^{1/4}\right)$. We have
\begin{align}
\label{eq:sum_rate_as}
&\left(
\sum_{i=0}^{t}
\|\hat\lambda_{i,r-1}^S(a,\cdot)-\lambda_i^S(a,\cdot)\|_2
\right)^2
=
o_{a.s.}(\sqrt{\log r/ r}), \\
&\left(
\sum_{i=0}^{t}
\|\hat\lambda_{i,r-1}^S(a,\cdot)-\lambda_i^S(a,\cdot)\|_2
\right)
\left(
\sum_{i=0}^{t-1}
\|\hat\lambda_{i,r-1}^G(a,\cdot)-\lambda_i^G(a,\cdot)\|_2
\right)
=
o_{a.s.}(\sqrt{\log r/ r}),\notag
\end{align}
then Eq.~\ref{eq:mr-bound-app} implies $m_{t,r}=o_{a.s.}(\sqrt{\log r/r})$. Hence,
$\sum_{r=1}^{R}m_{t,r}
=o_{a.s.}\left(\sum_{r=1}^{R}\sqrt{\log r/r}\right)
=o_{a.s.}(\sqrt{R\log R})$,
and therefore
$\frac{1}{R}\sum_{r=1}^{R}m_{t,r}
=o_{a.s.}(\sqrt{\log R/R})$. This establishes the asymptotic linearity condition in Eq.~\ref{eq:al}.

\begin{remark}
In contrast to the no-ties setting, the second-order remainder in the tied setting is not purely of product form in the event and censoring hazard errors. This is because $G_{i-1}(x,a) = \prod_{k=0}^{i-1} \left(1-\frac{\lambda_k^G(x,a)}{1-\lambda_k^S(x,a)} \right)$
so the censoring survival function depends on both $\lambda^G$ and $\lambda^S$. As a result, the remainder contains both $\left(\sum_{i=0}^{t} \|\hat\lambda_{i,r-1}^S(a,\cdot)-\lambda_i^S(a,\cdot)\|_2 \right)^2$ and $\left( \sum_{i=0}^{t} \|\hat\lambda_{i,r-1}^S(a,\cdot)-\lambda_i^S(a,\cdot)\|_2 \right) \left( \sum_{i=0}^{t-1} \|\hat\lambda_{i,r-1}^G(a,\cdot)-\lambda_i^G(a,\cdot)\|_2 \right)$
terms. Since $t\le t_{\max}$ is fixed, summing over $i$ affects only the multiplicative constant and not the asymptotic rate.
\end{remark}

\paragraph{Step 2: Finite variance}
Under Assumptions~\ref{ass:causal} and~\ref{ass:survival}, the influence function is uniformly bounded. Indeed, for arm $a\in\{0,1\}$, we have
\begin{equation}
\phi_t^a(O;\pi,\eta)
=
S_t(X,a)
-
\frac{\mathbf 1(A=a)}{\pi_a(X)}
\sum_{j=0}^{t}
\frac{S_t(X,a)}{S_j(X,a)\,G_{j-1}(X,a)}
\Bigl\{
\mathbf 1(\widetilde T=j,\Delta=1)
-
\mathbf 1(\widetilde T\ge j)\lambda_j^S(X,a)
\Bigr\},
\end{equation}
where $\pi_1(X)=\pi(X)$ and $\pi_0(X)=1-\pi(X)$. By positivity, $\pi_a(X)\ge \epsilon$.
Moreover, for each $j\le t$, $\left|
\mathbf 1(\widetilde T=j,\Delta=1)
-
\mathbf 1(\widetilde T\ge j)\lambda_j^S(X,a)
\right|
\le 1$,
since $\mathbf 1(\widetilde T=j,\Delta=1)\in\{0,1\}$, $\mathbf 1(\widetilde T\ge j)\in\{0,1\}$, and $\lambda_j^S(X,a)\in[0,1]$. Also, $0\le \frac{S_t(X,a)}{S_j(X,a)} \le 1$ for all $j\le t$.
Hence,
\begin{align}
|\phi_t^a(O;\pi,\eta)|
&\le
|S_t(X,a)|
+
\frac{1}{\epsilon}
\sum_{j=0}^{t}
\frac{S_t(X,a)}{S_j(X,a)\,G_{j-1}(X,a)}
\left|
\mathbf 1(\widetilde T=j,\Delta=1)
-
\mathbf 1(\widetilde T\ge j)\lambda_j^S(X,a)
\right|
\nonumber\\
&\le
1+
\frac{1}{\epsilon}\sum_{j=0}^{t}\frac{1}{G_{j-1}(X,a)}.
\label{eq:phi-bound-step}
\end{align}
Under Assumption~\ref{ass:uniform_overlap}, for each $a\in\{0,1\}$ and each $k\le t_{\max}$, we have
$
1-\lambda_k^S(X,a)\ge \underline c_t^S
$
and
$
1-\dfrac{\lambda_k^G(X,a)}{1-\lambda_k^S(X,a)}\ge \underline c_t^G.
$
Hence, for every $j\le t_{\max}$,
$
G_{j-1}(X,a)
=
\prod_{k=0}^{j-1}
\left(
1-\frac{\lambda_k^G(X,a)}{1-\lambda_k^S(X,a)}
\right)
\ge
(\underline c_t^G)^j
\ge
(\underline c_t^G)^{t_{\max}}.
$
Therefore,
$
\frac{1}{G_{j-1}(X,a)}
\le
(\underline c_t^G)^{-t_{\max}}$ for all $j\le \tmax$.
Combining this with Eq.~\ref{eq:phi-bound-step}, we obtain
\begin{align}
|\phi_t^a(O;\pi,\eta)|
&\le
1+\frac{1}{\epsilon}(t_{\max}+1)(\underline c_t^G)^{-t_{\max}}
\nonumber\\
&=:B(\epsilon,\underline c_t^G,t_{\max})<\infty.
\label{eq:bounded}
\end{align}
Consequently,
$
\mathbb E\!\left[\bigl(\phi_t^a(O;\pi,\eta)\bigr)^2\right]
\le
B(\epsilon,\underline c_t^G,t_{\max})^2
<\infty.
$
If, more generally,
$
\phi_v(O;\pi,\eta):=\sum_{t=0}^{t_{\max}} v_t\,\phi_t(O;\pi,\eta),
$
then, by the triangle inequality, we yield
$
|\phi_v(O;\pi,\eta)|
\le
\sum_{t=0}^{t_{\max}} |v_t|\,|\phi_t(O;\pi,\eta)|
\le
\|v\|_1\,B(\epsilon,\underline c_t^G,t_{\max}),
$
and, hence,
$
\Var(\phi_v)
\le
\mathbb E[\phi_v^2]
\le
\|v\|_1^2\,B(\epsilon,\underline c_t^G,t_{\max})^2
<\infty.
$
\end{proof}

\newpage
\section{Extension to Batch Settings}
\label{app:batch}

The theoretical analysis in Section~\ref{sec:theory} is stated for the fully sequential setting in which the policy $\pi_r(X \mid \mathcal{H}_{r-1})$ and the nuisance estimates $\hat{\eta}_{t, r-1}$ are updated after every individual observation. In practice, however, it is common to update both the policy and the nuisance estimates only at the end of each \emph{batch} of observations, thereby reducing computational overhead and stabilizing estimation. We show here that ASE extends naturally to this setting without any modification to the asymptotic normality result.

\textbf{Batch setting:} Partition the $R$ rounds into $B$ batches of size $m = R/B$ each. Let batch $b$ consist of rounds $r \in \mathcal{B}_b := \{(b-1)m + 1, \ldots, bm\}$. All units within batch $b$ are assigned using the same policy $\pi_b(X) := \pi_{(b-1)m+1}(X \mid \mathcal{H}_{(b-1)m})$, computed once from the accumulated history at the end of batch $b-1$. Similarly, the nuisance estimators $\hat{\eta}_{b-1}$ are refitted once per batch using $\mathcal{H}_{(b-1)m}$ via the sequential cross-fitting scheme of Algorithm~\ref{alg:adaptive_survival}. The ASE estimator is $\hat{\tau}_R^{\mathrm{ASE}} = \frac{1}{R}\sum_{r=1}^R \hat{\phi}_r^{\mathrm{cf}}$, as before.

\textbf{The martingale difference sequence (MDS):} The key structural requirement for asymptotic normality is that the oracle pseudo-outcome at horizon $t$,  $\phi(X_{t, r}, Z_{t, r}, A_{t, r}, Y_{t, r}; \pi_{t, r}, \eta)$ forms a martingale difference sequence (MDS) with respect to the filtration $\{\mathcal{H}_{t, r}\}$. In the batch setting, $\pi_b$ is computed once from $\mathcal{H}_{(b-1)m}$ before any unit in batch $b$ is observed. Since $(b-1)m \leq r-1$ for all $r \in \mathcal{B}_b$, the policy $\pi_b$ is already determined by time $r-1$. Hence, the MDS property $\mathbb{E}[\phi(X_{t, r}, Z_{t, r}, A_{t, r}, Y_{t, r}; \pi_b, \eta) \mid \mathcal{H}_{r-1}] = \tau$ continues to hold exactly. The martingale CLT~\citep{cook.2024} then applies without modification to the oracle term in the near-martingale decomposition 
\begin{align}
\sqrt{R}\left(\hat{\tau}_{t, R}^{\mathrm{ASE}} - \tau_t\right) = \frac{1}{\sqrt{R}}\sum_{r=1}^R z_{t, r} + \frac{1}{\sqrt{R}}\sum_{r=1}^R m_{t, r},
\end{align} 
where $z_{t, r} = \phi(\gH_r; \pi_{b}, \eta_{t, r}) - \tau$ is the oracle MDS term and $m_{t, r} = \hat{\phi}_{t, r}^{\mathrm{cf}} - \phi(\gH_r; \pi_{b}, \eta_{t, r})$ is the nuisance remainder.

\textbf{$L_2$-consistency condition in the batch setting.} Theorem~\ref{thm:adaptive_asymptotic_normality_cf} requires $k_r \|\pi_r - \pi^\star\|_2 = o_p(1)$. In the batch setting, $\pi_r \equiv \pi_b$ for all $t \in \mathcal{B}_b$, so this reduces to $k_b \|\pi_b - \pi^\star\|_2 = o_p(1)$ as $b \to \infty$, where $k_b$ is the truncation parameter for batch $b$. This is no harder to satisfy than in the sequential case; indeed, because each batch-level policy update uses $m$ additional observations relative to the previous update, the plug-in estimate of $\pi^\star$ is typically more stable, making the convergence condition easier to verify in practice.

\textbf{Cross-fitting within batches.} Since all $m$ units in batch $b$ share the nuisance estimator $\hat{\eta}_{b-1}$ fitted on $\mathcal{H}_{(b-1)m}$, the standard sequential cross-fitting scheme from Algorithm~\ref{alg:adaptive_survival} applies directly: the history $\mathcal{H}_{(b-1)m}$ is split into two temporal folds, nuisances are fitted on one fold and pseudo-outcomes are evaluated on the other. This ensures that the remainder term satisfies $\frac{1}{\sqrt{R}}\sum_{r=1}^R m_r = o_p(1)$ under the same $L_2$-consistency conditions as in Theorem~\ref{thm:adaptive_asymptotic_normality_cf}, with no additional assumptions.

\textbf{Asymptotic normality is unchanged.} Combining the above, the limiting distribution of the ASE estimator under the batch protocol is identical to the fully sequential case: \begin{align}\sqrt{R}\left(\hat{\tau}_{t, R}^{\mathrm{ASE}} - \tau_t\right) \xrightarrow{d} \mathcal{N}\!\left(0,\, V_{\mathrm{eff}, t}(\pi^\star)\right),\end{align} where $V_{\mathrm{eff}}(\pi^\star)$ is the semiparametric efficiency bound from Theorem~\ref{thm:eff_bound}. In particular, if $\pi_b \to \pi^\star$ as $b \to \infty$, the estimator remains semiparametrically efficient. The batch size $m$ affects only the finite-sample rate at which $\pi_b$ converges to $\pi^\star$ and the computational cost of nuisance estimation, but does not alter the asymptotic variance or the form of valid confidence intervals.

\newpage
\section{Additional experiment results}

\subsection{Twins data}
\label{app:twins_result}

The dataset considers the birth weight of 11984 pairs of twins born in the USA between 1989 and 1991 with respect to mortality in the first year of life. Treatment $a = 1$ corresponds to being born the heavier twin. The dataset contains 46 confounders. For a detailed description of the dataset, see \cite{louizos.2017}.

\begin{figure}[htbp]
    \centering
    \includegraphics[width=1\linewidth]{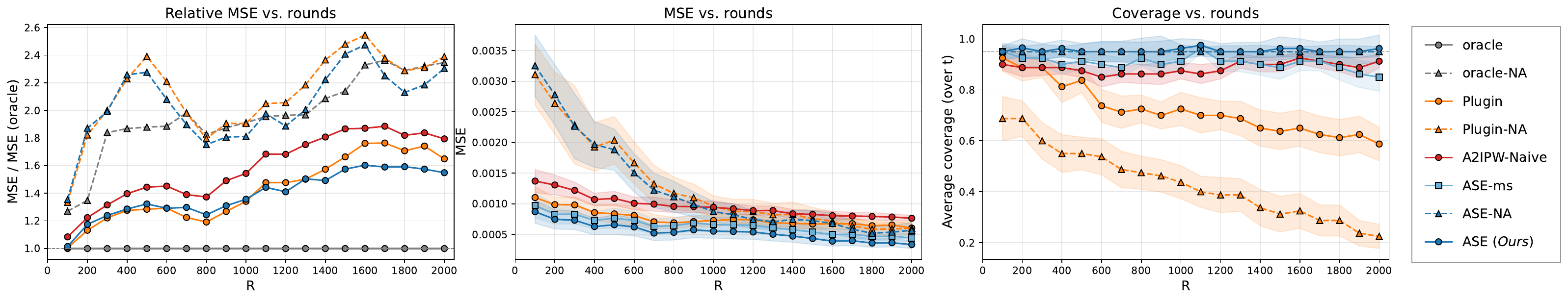}
    \vspace{-0.5cm}
    \caption{\textbf{Twins data:} Adaptive allocation improves estimation efficiency, accuracy, and inferential validity on the semi-synthetic Twins dataset. \textbf{Left:} relative MSE with respect to the Oracle; ASE remains closest to oracle performance among all feasible estimators. \textbf{Middle:} MSE across rounds; ASE converges fastest and ASE-MS remains consistent under nuisance misspecification. \textbf{Right:} empirical coverage of nominal $95\%$ confidence intervals; ASE and ASE-MS maintain nominal coverage while Plugin and A2IPW-NA\"{i}ve deteriorate as $R$ grows.}
    \label{fig:twins_exp_results}
    \vspace{-0.2cm}
\end{figure}

\circledblue{a}~\emph{How much do we benefit from adaptive allocation?}~$\Rightarrow$~\textbf{Effectiveness of censoring-aware A-optimal allocation}: As shown in Fig.~\ref{fig:twins_exp_results}~(a), adaptive design consistently delivers efficiency gains over non-adaptive alternatives. ASE achieves a relative MSE of approximately $1.5\times$ the oracle at $R=2000$, nearly closing the gap to oracle performance despite relying on estimated nuisances. The non-adaptive baselines ASE-NA and Plugin-NA stabilize at roughly $2.3\times$ the oracle MSE, maintaining a persistent efficiency gap throughout. A2IPW-NA\"{i}ve occupies an intermediate position at approximately $1.70\times$--$1.80\times$, confirming that censoring-aware A-optimal allocation, and not just adaptive assignment alone, is the key driver of efficiency gains over uniform randomization.

\circledblue{b}~\emph{Does ASE improve estimation accuracy over rounds?}~$\Rightarrow$~\textbf{Estimation accuracy}: We plot MSE against $R$ in Fig.~\ref{fig:twins_exp_results}~(b). ASE, ASE-MS, and Plugin converge toward the true $\tau$ as $R$ grows, with ASE achieving the lowest MSE due to its variance- and censoring-aware allocation. The results further highlight the importance of \emph{robustness}: ASE-MS tracks ASE closely despite one nuisance component being misspecified, consistent with our theoretical guarantees.

\circledblue{c}~\emph{Is ASE inferentially valid?}~$\Rightarrow$~\textbf{Inferential validity}: We report empirical coverage of nominal $95\%$ confidence intervals in Fig.~\ref{fig:twins_exp_results}~(c). Consistent with our theoretical guarantees, ASE and ASE-MS are the only estimators that maintain nominal coverage uniformly across rounds. In contrast, Plugin, Plugin-NA, and A2IPW-NA\"{i}ve all degraded substantially as $R$ grows, owing to finite-sample bias and censoring bias, respectively.

\subsection{Survival curves}
\label{app:survival_curves}
According to Eq.~\ref{eq:eif}, we can write the non-centered EIF for APO via
\begin{align}
    \phi_a(\mathcal{O};\eta_t) =& S_t(X,a) + \frac{\mathbf{1}(A=a)}{\pi_a(X)}\,\xi(\mathcal{O},\eta_t)\\
    =&  S_t(X,a) + \frac{\mathbf{1}(A=a)}{\pi_a(X)}\sum_{i=0}^t\frac{\mathbf 1(\widetilde T=i,\Delta=1)-\mathbf 1(\widetilde T\ge i)\,\lambda_i^S(X,A)}{S_i(X,A)\,G_{i-1}(X,A)}
\end{align}

In the following, we show the average survival estimation of ASE over rounds.
\vspace{-0.2cm}
\begin{figure}[htbp]
    \centering
    \begin{subfigure}[b]{0.48\linewidth}
        \includegraphics[width=\linewidth]{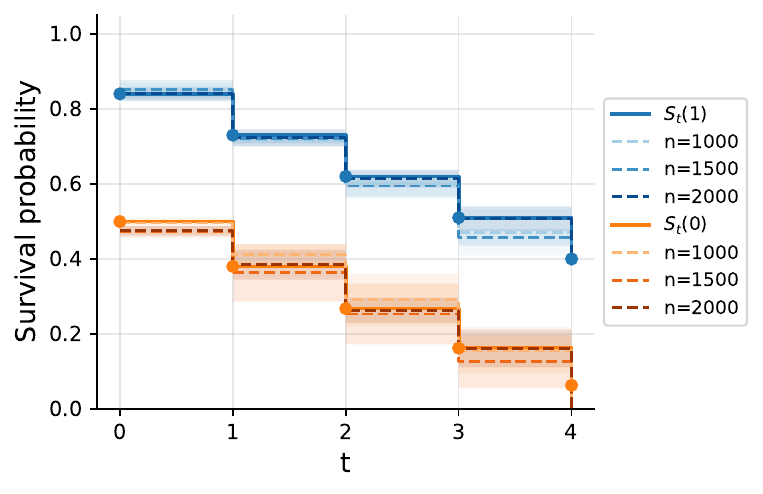}
        \caption{Synthetic data}
    \end{subfigure}
    \hfill
    \begin{subfigure}[b]{0.48\linewidth}
        \includegraphics[width=\linewidth]{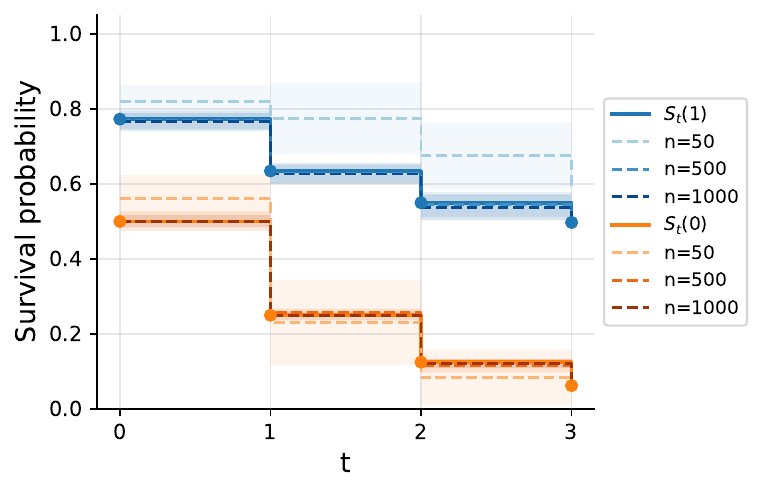}
        \caption{Twins data}
    \end{subfigure}
    \vspace{-0.3cm}
    \caption{Estimated average survival curves $\hat{S}_t(a)$ for treatment ($a=1$, blue) and control ($a=0$, orange) produced by ASE at increasing sample sizes (dashed lines), compared against ground-truth oracle curves $S_t(a)$ (solid lines with dots). Shaded bands denote $\pm 1$ standard error across repeated trials. (\textit{a}) Synthetic data, $t\in\{0,\ldots,4\}$, $n\in\{1000,1500,2000\}$. (\textit{b}) Semi-synthetic Twins data, $t\in\{0,\ldots,3\}$, $n\in\{50,500,1000\}$. Ground-truth construction is detailed in Appendix~\ref{app:implementation_details}.}
    \label{fig:survival_curve}
\end{figure}

Figure~\ref{fig:survival_curve} plots the ASE-estimated average survival curves $\hat{S}_t(a)$ for $a\in\{0,1\}$ across $t\in\{0,\ldots,t_{\max}\}$, compared against the ground-truth oracle curves $S_t(a)$ (solid lines). Dashed lines and shaded bands report the mean and $\pm 1$ standard error of ASE across repeated trials at increasing sample sizes. On the synthetic dataset, ASE recovers the survival curves of both arms accurately even at $n=1000$, with confidence bands tightening visibly as $n$ grows. On the semi-synthetic Twins dataset, a modest upward bias in $\hat{S}_t(1)$ is present at $n=50$ but resolves by $n=1000$, consistent with the asymptotic guarantees of Theorem~\ref{thm:adaptive_asymptotic_normality_cf}.

\FloatBarrier
\clearpage
\section{Implementation details}
\label{app:implementation_details}

\subsection{Data generation}

\textbf{Synthetic data generation:} We sample a one-dimensional confounder $X\sim\mathrm{Uniform}(0,1)$ and set $t_{\max}=4$. Survival and censoring hazards are generated via logistic link functions:
\begin{align}
\lambda^S_t(x,a) &= \sigma\!\left(\alpha_t + \eta(x) + \tau(x)\cdot a\right) \\
\lambda^G_t(x,a) &= \sigma\!\left(\gamma_t + \phi(x) + \psi(x)\cdot a\right) ,
\end{align}
where $\{\alpha_t\}_{t=0}^{4}$ is a time-varying intercept path constructed so that the marginal control survival $S_t(x,0)$ decreases from $0.50$ to $0.02$ over the horizon, and $\{\gamma_t\}_{t=0}^{4}$ is similarly constructed to decrease from $0.84$ to $0.62$. The covariate functions are:
\begin{align}
\eta(x) &= 1.05 + 0.12(x - 0.5) , \\
\tau(x) &= -0.28 + 0.65\cdot\mathbf{1}(x \leq 0.35) - 0.42\cdot\mathbf{1}(x \geq 0.75) , \\
\phi(x) &= 0.06(x - 0.5) , \\
\psi(x) &= 0.14 - 0.26\cdot\mathbf{1}(x \leq 0.35) + 0.20\cdot\mathbf{1}(x \geq 0.75) .
\end{align}
The function $\tau(x)$ captures heterogeneous treatment effect on the event hazard, with the treatment arm exhibiting higher survival probability than control across all $x$. The censoring functions $\phi(x)$ and $\psi(x)$ introduce arm-dependent censoring ($\lambda^G_t(x,0)\neq\lambda^G_t(x,1)$), consistent with the motivating setting of Section~\ref{sec:adaptive_assignment}. Ground-truth average potential outcomes $\mu_a = \mathbb{E}_X[S_t(X,a)]$ are computed via Gauss--Legendre quadrature over $x\in[0,1]$.

\textbf{Twins semi-synthetic data generation:} For our semi-synthetic case study, we employ the Twins dataset following \citet{louizos.2017}, which records birth outcomes for twin pairs in the United States. Following \citet{louizos.2017}, we use the gestational age variable \texttt{GESTAT10} as the single covariate, binarized as $X = \mathbf{1}(\texttt{GESTAT10} > 3)$, which is highly correlated with survival outcome. The time horizon is $t\in\{0,\ldots,3\}$. Survival and censoring hazards are time-homogeneous and defined as:
\begin{align}
\lambda^S_t(x,a) &= \begin{cases} 0.50 , & \text{if } a=0 , \\ 0.40 \cdot \mathbf{1}(x=0) + 0.01 \cdot \mathbf{1}(x=1) , & \text{if } a=1 , \end{cases}\\[6pt]
\lambda^G_t(x,a) &= \begin{cases} 0.05 , & \text{if } a=0 , \\ 0.108 ,  & \text{if } a=1 . \end{cases}
\end{align}
This design exhibits arm-dependent censoring (i.e., $\lambda^G_t(x,0) \neq \lambda^G_t(x,1)$) and covariate-dependent treatment effect heterogeneity in the survival hazard, making it a realistic testbed for our adaptive allocation procedure.

\subsection{Experiment setup}
For both synthetic and semi-synthetic datasets, each estimator was evaluated over $50$ seeds per dataset. Simulations were run over $R=20$ rounds with a burn-in period of $N_0=1000$ rounds, after which nuisance estimators were updated in mini-batches of $100$ units over $20$ successive rounds. For all adaptive methods, we applied the clipped optimal allocation policy from Proposition~\ref{prop:opt_assignment} with truncation parameter $\alpha=0.05$. Oracle methods used ground-truth nuisance functions computed analytically from the DGP parameters, while the misspecified variant \textbf{ASE-MS} was constructed by replacing $\lambda^G(X, A)$ with a constant fit to the marginal average.

For each estimator, we report the mean squared error at round $r$, averaged over time horizons:
\begin{align}
\mathrm{MSE}(r) := \frac{1}{\tmax +1} \sum_{t=0}^{\tmax} \left(\hat{\tau}_t(r) - \tau_t\right)^2.
\end{align}
The relative MSE is computed as $\mathrm{MSE}(r) / \mathrm{MSE}_{\mathrm{Oracle}}(r)$, where $\mathrm{MSE}_{\mathrm{Oracle}}(r)$ is the MSE of the oracle estimator using ground-truth nuisance functions.

\textbf{Inference coverage calculation:} For each horizon \(t\), we construct a nominal \(95\%\) confidence interval for the average survival effect \(\tau_t=\mathbb E\{S_t(X,1)-S_t(X,0)\}\). The reported coverage is averaged over horizons and simulation seeds. Hence, it measures marginal coverage of each \(\tau_t\), consistent with our A-optimal trace criterion, rather than simultaneous coverage of the full survival-effect vector.

\subsection{Implementation}
All experiments were run on an Intel Xeon Silver 4316 CPU with 40 cores and 976GB RAM.

\textbf{Model architecture and parameters:} We estimate the nuisance survival and censoring hazards using LightGBM classifiers, fit separately for each time point and treatment arm, with fixed default hyperparameters.

\end{document}

%% file: math_commands.tex
\usepackage{amsmath,amsfonts,bm}

\def\eqref#1{equation~\ref{#1}}

\def\1{\bm{1}}

\def\vtau{{\bm{\tau}}}

\def\evtau{{\tau}}

\DeclareMathAlphabet{\mathsfit}{\encodingdefault}{\sfdefault}{m}{sl}
\SetMathAlphabet{\mathsfit}{bold}{\encodingdefault}{\sfdefault}{bx}{n}

\def\gA{{\mathcal{A}}}

\def\gH{{\mathcal{H}}}

\def\gO{{\mathcal{O}}}

\def\gT{{\mathcal{T}}}

\def\gX{{\mathcal{X}}}

\newcommand{\E}{\mathbb{E}}

\newcommand{\Var}{\mathrm{Var}}

\newcommand{\Cov}{\mathrm{Cov}}

\newcommand{\diff}{\mathrm{d}}